\begin{document}
	
\title{The Critical Radius in Sampling-based Motion Planning}

\author{Kiril Solovey\footnote{Department of Aeronautics and Astronautics, Stanford University, CA~94305, US. email: \url{kirilsol@stanford.edu}}~~and  Michal Kleinbort\footnote{Blavatnik School of Computer Science, Tel Aviv University, Israel. email: \url{balasmic@post.tau.ac.il}}}

\maketitle
\thispagestyle{empty}
	
\def\frechet{Fr\'echet\xspace}

\newcommand{\cupdot}{\mathbin{\mathaccent\cdot\cup}}

%unlabeled PSPACE-hardness paper
\newcommand{\mtm}{\emph{multi-to-multi}\xspace}
\newcommand{\mts}{\emph{multi-to-single}\xspace}
\newcommand{\sts}{\emph{multi-to-single-restricted}\xspace}
\newcommand{\dtd}{\emph{single-to-single}\xspace}

\newcommand{\cte}{\emph{full-to-edge}\xspace}
\newcommand{\ctc}{\emph{full-to-full}\xspace}
\newcommand{\ete}{\emph{edge-to-edge}\xspace}

\newcommand{\AND}{{\sc and}\xspace}
\newcommand{\OR}{{\sc or}\xspace}

%tex tools
\newcommand{\ignore}[1]{}

%%% algorithms
\def\vor{\text{Vor}}

\def\P{\mathcal{P}} \def\C{\mathcal{C}} \def\H{\mathcal{H}}
\def\F{\mathcal{F}} \def\U{\mathcal{U}} \def\L{\mathcal{L}}
\def\O{\mathcal{O}} \def\I{\mathcal{I}} \def\S{\mathcal{S}}
\def\G{\mathcal{G}} \def\Q{\mathcal{Q}} \def\I{\mathcal{I}}
\def\T{\mathcal{T}} \def\L{\mathcal{L}} \def\N{\mathcal{N}}
\def\V{\mathcal{V}} \def\B{\mathcal{B}} \def\D{\mathcal{D}}
\def\W{\mathcal{W}} \def\R{\mathcal{R}} \def\M{\mathcal{M}}
\def\X{\mathcal{X}} \def\A{\mathcal{A}} \def\Y{\mathcal{Y}}
\def\L{\mathcal{L}}

\def\dS{\mathbb{S}} \def\dT{\mathbb{T}} \def\dC{\mathbb{C}}
\def\dG{\mathbb{G}} \def\dD{\mathbb{D}} \def\dV{\mathbb{V}}
\def\dH{\mathbb{H}} \def\dN{\mathbb{N}} \def\dE{\mathbb{E}}
\def\dR{\mathbb{R}} \def\dM{\mathbb{M}} \def\dm{\mathbb{m}}
\def\dB{\mathbb{B}} \def\dI{\mathbb{I}} \def\dM{\mathbb{M}}
\def\dZ{\mathbb{Z}}

\def\E{\mathbf{E}} % used for denoting expectation

\def\EE{\mathfrak{E}}

\def\eps{\varepsilon}

\def\limn{\lim_{n\rightarrow \infty}}

\def\obs{\mathrm{obs}}
\newcommand{\defeq}{%
  \mathrel{\vbox{\offinterlineskip\ialign{%
    \hfil##\hfil\cr
    $\scriptscriptstyle\triangle$\cr
    %\noalign{\kern0ex}
    $=$\cr
}}}}
\def\Int{\mathrm{Int}}

\def\Reals{\mathbb{R}}
\def\Naturals{\mathbb{N}}
\renewcommand{\leq}{\leqslant}
\renewcommand{\geq}{\geqslant}
\newcommand{\compl}{\mathrm{Compl}}

\newcommand{\sig}{\text{sig}}

\newcommand{\sbs}{sampling-based\xspace}
\newcommand{\mr}{multi-robot\xspace}
\newcommand{\mpl}{motion planning\xspace}
\newcommand{\mrmp}{multi-robot motion planning\xspace}
\newcommand{\sr}{single-robot\xspace}
\newcommand{\cs}{configuration space\xspace}
\newcommand{\conf}{configuration\xspace}
\newcommand{\confs}{configurations\xspace}

% libs
\newcommand{\stl}{\textsc{Stl}\xspace}
\newcommand{\boost}{\textsc{Boost}\xspace}
\newcommand{\core}{\textsc{Core}\xspace}
\newcommand{\leda}{\textsc{Leda}\xspace}
\newcommand{\cgal}{\textsc{Cgal}\xspace}
\newcommand{\qt}{\textsc{Qt}\xspace}
\newcommand{\gmp}{\textsc{Gmp}\xspace}

% programming
\newcommand{\Cpp}{C\raise.08ex\hbox{\tt ++}\xspace}

% Generic programming code:
\def\concept#1{\textsf{\it #1}}
\def\ccode#1{{\texttt{#1}}}

\newcommand{\ch}{\mathrm{ch}}
\newcommand{\pspace}{{\sc pspace}\xspace}
\newcommand{\threesum}{{\sc 3Sum}\xspace}
\newcommand{\np}{{\sc np}\xspace}
\newcommand{\degree}{\ensuremath{^\circ}}
\newcommand{\argmin}{\operatornamewithlimits{argmin}}

\newcommand{\Gdisk}{\G^\textup{disk}}
\newcommand{\Gbt}{\G^\textup{BT}}
\newcommand{\Gsoft}{\G^\textup{soft}}
\newcommand{\Gnear}{\G^\textup{near}}
\newcommand{\Gembed}{\G^\textup{embed}}

\newcommand{\dist}{\textup{dist}}

\newcommand{\Cfree}{\C_{\textup{free}}}
\newcommand{\Cforb}{\C_{\textup{forb}}}

\newtheorem{lemma}{Lemma}
\newtheorem{theorem}{Theorem}
\newtheorem{corollary}{Corollary}
\newtheorem{claim}{Claim}
\newtheorem{proposition}{Proposition}

\theoremstyle{definition}
\newtheorem{definition}{Definition}
\newtheorem{remark}{Remark}
\theoremstyle{plain}
\newtheorem{observation}{Observation}

%%%%%

\newtheorem{lemma2}{Lemma}
\newtheorem{theorem2}{Theorem}
\newtheorem{corollary2}{Corollary}
\newtheorem{claim2}{Claim}
\newtheorem{proposition2}{Proposition}

\theoremstyle{definition}
\newtheorem{definition2}{Definition}

\def\len{c_\ell}
\def\bot{c_b}

\def\lenopt{\len^*}
\def\botopt{\bot^*}

\def\Im{\textup{Im}}

\def\rfunc{\left(\frac{\log n}{n}\right)^{1/d}}
\def\rfuncs{\left(\frac{\log n}{n}\right)^{1/d}}
\def\cfunc{\sqrt{\frac{\log n}{\log\log n}}}
\def\rtrs{\gamma\rfunc}
\def\ctrs{2\cfunc}
\def\aconn{\A_\textup{conn}}
\def\abd{\A_\textup{str}}
\def\aspan{\A_\textup{span}}
\def\aopt{\A_\textup{opt}}
\def\ao{\A_\textup{ao}}
\def\acfo{\A_\textup{acfo}}
\def\binomial{\textup{Binomial}}
\def\twin{\textup{twin}}

\def\aas{a.a.s.\xspace}
\def\0{\bm{0}}

\def\distU#1{\|#1\|_{\G_n}^U}
\def\distW#1{\|#1\|_{\G_n}^W}

\def\tooth{\scalerel*{\includegraphics{./../fig/tooth}}{b}}

\makeatletter
\def\thmhead@plain#1#2#3{%
  \thmname{#1}\thmnumber{\@ifnotempty{#1}{ }\@upn{#2}}%
  \thmnote{ {\the\thm@notefont#3}}}
\let\thmhead\thmhead@plain
\makeatother

\def\todo#1{\textcolor{blue}{\textbf{TODO:} #1}}
\def\kiril#1{\textcolor{blue}{\textbf{KIRIL:} #1}}
\def\new#1{\textcolor{magenta}{#1}}
\def\old#1{\textcolor{red}{#1}}
\def\trim#1{\textcolor{red}{#1}}

\def\removed#1{\textcolor{green}{#1}}

\def\dx{\,\mathrm{d}x}
\def\dy{\,\mathrm{d}y}
\def\drho{\,\mathrm{d}\rho}

% algorithms
\newcommand{\prm}{{\tt PRM}\xspace}
\newcommand{\prmstar}{{\tt PRM}$^*$\xspace}
\newcommand{\rrt}{{\tt RRT}\xspace}
\newcommand{\est}{{\tt EST}\xspace}
\newcommand{\rrtstar}{{\tt RRT}$^*$\xspace}
\newcommand{\rrg}{{\tt RRG}\xspace}
\newcommand{\btt}{{\tt BTT}\xspace}

\newcommand{\lbtrrt}{{\tt LBT-RRT}\xspace}
\newcommand{\mplb}{{\tt MPLB}\xspace}
\newcommand{\spars}{{\tt SPARS2}\xspace}
\newcommand{\rsec}{{\tt RSEC}\xspace}
\newcommand{\bfmt}{{\tt BFMT}$^*$\xspace}

\newcommand{\fmt}{{\tt FMT}$^*$\xspace}
\newcommand{\mstar}{{\tt M}$^*$\xspace}
\newcommand{\drrt}{{\tt dRRT}\xspace}
\newcommand{\drrtstar}{{\tt dRRT}$^*$\xspace}

%\def\removed#1{}
%%% Local Variables:
%%% mode: plain-tex
%%% TeX-master: "main"
%%% End:

\begin{abstract}
  We develop a new analysis of sampling-based motion planning in
  Euclidean space with uniform random sampling, which significantly
  improves upon the celebrated result of Karaman and Frazzoli (2011)
  and subsequent work. Particularly, we prove the existence of a
  critical connection radius proportional to ${\Theta(n^{-1/d})}$ for
  $n$ samples and ${d}$ dimensions: Below this value the planner is
  guaranteed to fail (similarly shown by the aforementioned work,
  ibid.). More importantly, for larger radius values the planner is
  asymptotically (near-)optimal. Furthermore, our analysis yields an
  explicit lower bound of ${1-O( n^{-1})}$ on the probability of
  success.  A practical implication of our work is that asymptotic
  (near-)optimality is achieved when each sample is connected to only
  ${\Theta(1)}$ neighbors. This is in stark contrast to previous work
  which requires ${\Theta(\log n)}$ connections, that are induced by a
  radius of order ${\left(\frac{\log n}{n}\right)^{1/d}}$.  Our
  analysis is not restricted to \prm and applies to a variety of
  ``\prm-based'' planners, including \rrg, \fmt and \btt.  Continuum
  percolation plays an important role in our proofs. Lastly, we
  develop similar theory for all the aforementioned planners when
  constructed with deterministic samples, which are then sparsified in
  a randomized fashion. We believe that this new model, and its
  analysis, is interesting in its own right.
\end{abstract}

%\keywords{motion planning, sampling-based algorithms, probabilistic
%roadmaps, random geometric graphs, continuum percolation, probabilistic completeness,
%asymptotic optimality}
		
\maketitle
	
\section{Introduction}\label{sec:introduction}
Motion planning is a fundamental problem in robotics concerned with
allowing autonomous robots to efficiently navigate in environments
cluttered with obstacles. Although motion planning has originated as a
strictly theoretical problem in computer science~\citep{HSS16},
nowadays it is applied in various fields.  Notably, motion planning
arises in coordination of multiple autonomous
vehicles~\citep{FraPav15}, steering surgical needles~\citep{BayAlt17},
and planning trajectories of spacecrafts in orbit~\citep{StaETAL16}, to
name just a few examples.
	
Motion planning is notoriously challenging from a computational
perspective due to the continuous and high-dimensional search space it
induces, which accounts for the structure of the robot, the physical
constraints that it needs to satisfy, and the environment in which it
operates.  Low-dimensional instances of the problem can
  typically be solved by efficient and complete
  algorithms~\citep{HKS16}, and the same applies to several
  high-dimensional instances (see, e.g.,~\cite{AdlETAL15,
    SolETAL15}). However, in general, motion planning is
  computationally intractable (see, e.g.,~\cite{Can88, Rei79,
    SolHal16j}).
	
Nowadays the majority of practical approaches for motion planning
capture the connectivity of the free
space% \footnote{The free space is
% the set
% of all collision free robot configurations.}
 ~by sampling (typically in a randomized fashion) configurations
 and connecting nearby samples, to form a graph data
 structure. Although such \emph{sampling-based planners} are
 inherently incomplete, i.e., cannot detect situations in which a
 solution (collision-free path) does not exist, most have the
 desired property of being able to find a solution
 \emph{eventually}, if one exists. That is, a planner is
 \emph{probabilistically complete} (PC) if the probability of
 finding a solution tends to $1$ as the number of samples $n$ tends
 to infinity. Moreover, some recent sampling-based
 techniques are also guaranteed to return
 high-quality solutions
 that tend to the optimum as $n$ diverges---a property called
 \emph{asymptotic optimality} (AO). Quality can be measured in terms of energy,
 	length of the plan, clearance from obstacles, etc.
	
An important attribute of sampling-based planners, which dictates both
the running time and the quality of the returned solution, is the
number of neighbors considered for connection for each added
sample. In many techniques this number is directly affected by a
connection radius $r_n$: Decreasing $r_n$ reduces the number of
neighbors. This in turn reduces the running time of the planner for a
given number of samples $n$, but may also reduce the quality of the
solution or its availability altogether. Thus, it is desirable to come
up with a radius $r_n$ that is small, but not to the extent that the
planner loses its favorable properties of PC and AO.
	
\subsection{Contribution}
We develop a new analysis of \prm~\citep{KavETAL96} for uniform
random sampling in Euclidean space, which relies on a novel
connection between sampling-based planners and \emph{continuum
	percolation} (see, e.g.,~\cite{FraMee08}).  Our analysis is
tight and proves the existence of a \emph{critical connection}
radius $r^*_n=\gamma^*n^{-1/d}$, where $\gamma^*>0$ is a
constant (in particular, $0.4\leq \gamma^*\leq 0.6$ for all $d\geq 2$),
and $d\geq 2$ is the dimension: If $r_n<r^*_n$ then \prm is
guaranteed to fail, where $d$ is the dimension. Above the
threshold, i.e., when $r_n>r^*_n$, \prm is AO for the bottleneck
cost, and \emph{asymptotically near optimal}\footnote{AnO means
	that the cost of the solution tends to at most a constant factor
	times the optimum, compared with AO in which this constant is
	equal to one.}
  (AnO) with respect to the path-length
cost. Furthermore, our analysis yields concrete bounds on the
probability of success, % \footnote{By success, we
  % refer to the event that the algorithm returns a solution that is at
  % most a given multiplicative factor away from the optimum. See
  % Theorem~\ref{thm:prm} for the exact formulation.}
which is
lower-bounded by $1-O(n^{-1})$. Notice that this bound is
comparable to the one obtained in~\cite{StaETAL15} (see
Section~\ref{sec:related}) although we show this for a much
smaller radius.
	
Our analysis is not restricted to \prm and applies to a variety of
planners that maintain \prm-like roadmaps, explicitly or
implicitly. For instance, when $r_n$ is above the threshold,
\fmt~\citep{JSCP15} is AnO with respect to the path-length cost, while
\btt~\citep{SolHal17} is AO with respect to the bottleneck cost. \rrg~\citep{KF11}
behaves similarly for the two cost functions. Our results are also
applicable to \emph{multi-robot} motion planners such as the recently
introduced \drrtstar~\citep{DobETAL17}, and \mstar~\citep{WagCho15} when
applied to a continuous domain. See Figure~\ref{fig:prm} for
additional \prm-based planners to which our analysis is applicable,
and which are mentioned further on.
	
\begin{figure}
  \centering
  \includegraphics[width=0.5\columnwidth]{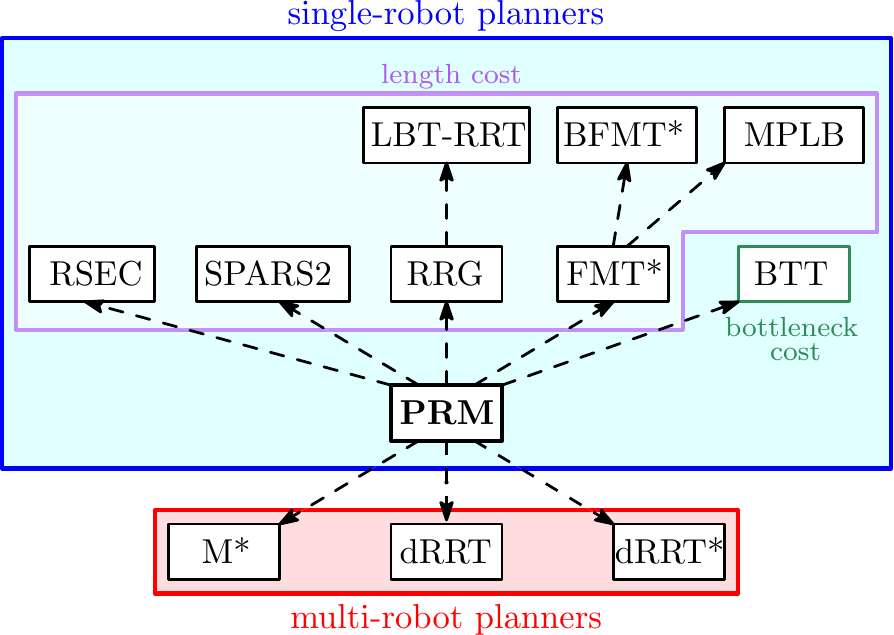}
\caption{The \prm dynasty. Single robot and multi-robot planners, are bounded into the blue and red frames, respectively. Planners inside the magenta frame aim to minimize the path-length cost, whereas \btt is designed for bottleneck cost, and \rrg works for both costs (see more information below). Roughly speaking, arrow from ``$A$'' to ``$B$'' indicates that theoretical properties of  ``$A$'' extend to ``$B$'', or that the latter  maintains ``$A$'' as a substructure. 
  \label{fig:prm}}
\end{figure}
	
A practical implication of our work is that AO (or AnO), under the
regime of uniform random sampling, can be achieved even when every
sample is connected to $\Theta(1)$ neighbors. This is in stark
contrast with previous work, e.g., \cite{JSCP15, KF11, SolHal16,
  SolETAL16}, which provided a rough estimate of this number, that is
proportional to $O(\log n)$.  Interestingly, our $\Theta(1)$ bound for
uniform samples is comparable to the best known bound of for
deterministic samples~\citep{LucETAL17}.
	
Lastly, we also study the asymptotic properties of the aforementioned
planners when constructed using a deterministic point process, rather
than uniform random sampling which we have mentioned so far. We
introduce an analysis which shows that the graphs constructed using
such deterministic samples can be sparsified in a randomized fashion
while maintaining AO or AnO of the planners. We term this regime as
\emph{semi-deterministic sampling}.
	
\subsection{Organization}
In Section~\ref{sec:related} we discuss related work. Then we proceed
to basic definitions and problem statement in
Section~\ref{sec:preliminaries}. In that section we also include a
precise description of the robotic system our theory is developed
for. Our central contribution (Theorem~\ref{thm:prm}), which states
the existence of a critical radius $r^*_n$ with respect to \prm, is
presented in Section~\ref{sec:prm}. This section also contains an
outline of the proof. In Section~\ref{sec:elements} we lay the
foundations of the proof and discuss important aspects of continuum
percolation, that would later be employed in the main proof, which
appears in Section~\ref{sec:proof}. In
Section~\ref{sup:other_planners} we extend the analysis to \fmt, \btt,
\rrg and discuss the implications of our results to the multi-robot
planners \drrtstar and \mstar. 
In
Section~\ref{sec:experimental_results} we present experimental work,
which validates our theory. We conclude this paper with directions for
future work (Section~\ref{sec:future}).
    
In the appendix we include proofs that were omitted from the main
document and a table with values of
$\gamma^*$. We also present in the appendix a new analysis  for the aforementioned planners when
constructed with deterministic samples, which are then sparsified in a
randomized fashion. We believe that this new model, and its analysis,
is interesting in its own right.
	
\section{Related work}\label{sec:related}
This section is devoted to a literature review of sampling-based
planners with emphasis on their theoretical analysis.
Sampling-based techniques were first described in the mid 90's and
several prominent examples of that time, that are still used
extensively today, include \prm~\citep{KavETAL96},
\rrt~\citep{LaVKuf01}, and \est~\citep{HLM99}. Those planners are
also known to be PC (see \cite{KavrakiETAL98,LadKav04,KSKBH18,HLM99}).
	
More recent work has been concerned with the quality of the returned
solution.  \cite{NecETAL10} were one of the first to address
this matter in a mathematically-rigorous manner. This work proved that
\rrt can produce arbitrarily-bad (long) paths with non-negligible
probability.
	
The influential work of \cite{KF11} laid the theoretical foundations for
analyzing quality in sampling-based planning. The authors introduced a
set of techniques to prove AO. Using their framework, they showed that
the following algorithms are AO: \prmstar, which is a special case of
\prm (throughout this work we will refer to the more general
  algorithm \prm rather than \prmstar)  with a specific value of the
connection radius $r_n$; an AO variant of \rrt~\cite{LaVKuf01} termed
\rrtstar; \rrg, which can be viewed as a combination between \rrt and
\prm.  The analysis in~\cite{KF11} establishes that
$r_n=\Theta\left((\log n/ n)^{1/d}\right)$ guarantees AO, where the
configuration space of the robot is assumed to be $[0,1]^d$. This
indicates that the expected number of neighbors used per vertex should
be $O(\log n)$.  The authors also proved that for sufficiently-small
radii of order $O(n^{-1/d})$ the planner is guaranteed to fail
(asymptotically) in finding any solution.
	
Following the breakthrough of~\cite{KF11}, other AO planners have
emerged (see e.g.,~\cite{APD11, AT13,GSB15}). \cite{JSCP15}
introduced \fmt, which is a single-query planner that traverses an
implicitly-represented \prm graph, and is comparable in performance to
\rrtstar. The authors refined the proof technique of \cite{KF11},
which allowed them to slightly reduce the connection radius $r_n$
necessary to \fmt and \prm to achieve AO. We do mention that here
again $r_n=\Theta\left((\log n/ n)^{1/d}\right)$. \bfmt, which is a
bidirectional version of \fmt, was introduced by \cite{StaETAL15}. In
this paper the authors also proved that the success rate of \prm,
\fmt, \bfmt can be lower bounded by
$1-O\left(n^{-\eta/d}\log^{-1/d}n\right)$, where $\eta>0$ is a tuning
parameter. In this context, we also mention the work of~\cite{DobETAL15}, which bounds the success rate with an expression
that depends on the amount of deviation from the optimum.
	
A recent work by~\cite{SolETAL16} developed a different method for
analyzing sampling-based planners. It exploits a connection with
\emph{random geometric graphs} (RGGs), which have been extensively
studied (see, e.g.,~\cite{Pen03}). Their work shows that one can
slightly reduce the \prm and \fmt radius obtained in~\cite{JSCP15}.
Furthermore, the connection with RGGs yields additional analyses of
different extensions of \prm, which have not been analyzed before in a
mathematically-rigorous setting.
	
A number of methods have been developed to reduce the running time or
space requirements of existing planners by relaxing AO constraints to
AnO. For instance, \lbtrrt~\citep{SH16} interpolates between the quick
\rrt and the AO yet slower \rrg, according to a predefined
parameter. \mplb~\citep{SH15} can be viewed as a relaxation of
\fmt. \spars~\citep{DB14} and \rsec~\citep{SalETAL14} perform a
sparsification of \prm in an online or offline fashion, respectively,
to reduce the space footprint of the produced roadmap.

\subsection{Extensions}
The aforementioned papers deal mainly with the cost function of
\emph{path length}. Two recent works~\citep{SolHal16, SolHal17}
considered the \emph{bottleneck-pathfinding} problem in a
sampling-based setting and introduced the \btt algorithm, which
traverses an implicitly-represented \prm graph. The bottleneck-cost
function, which arises for instance in high-clearance multi-robot
motion~\citep{SolHal17} and Fr\'{e}chet matching between
curves~\citep{HolETAL17}, is defined as follows: Every robot
configuration $x$ is paired with a value $\M(x)$, and the cost of a
path is the maximum value of $\M$ along any configuration on the
path. It was shown \citep{SolHal16, SolHal17} that \btt is AO, with
respect to bottleneck cost, for the reduced connection radius that was
obtained by~\cite{SolETAL16}.
	
The results reported until this point have dealt exclusively with
holonomic robotic systems. Two recent papers by \cite{SchETAL15,
  SchETAL15b} develop the theoretical foundations of \prm and
\fmt-flavored planners when applied to robots having differential
constraints. \cite{LiETAL16} develop an AO algorithm that
does not require a steering function, as \prm for instance does. Interestingly, the authors of the last paper also obtain bounds on the rate of convergence of their algorithm.

\section{Preliminaries}\label{sec:preliminaries}
We provide several basic definitions that will be used throughout the
paper. Given two points $x,y\in \dR^d$, denote by $\|x-y\|$ the
standard Euclidean distance. Denote by $\B_{r}(x)$ the $d$-dimensional
ball of radius $r>0$ centered at $x\in \dR^d$ and
$\B_{r}(\Gamma) = \bigcup_{x \in \Gamma}\B_{r}(x)$ for any
$\Gamma \subseteq \dR^d$.  Similarly, given a curve
$\pi:[0,1]\rightarrow \dR^d$ define
$\B_r(\pi)=\bigcup_{\tau\in[0,1]}\B_r(\pi(\tau))$. Given a subset
$D\subset \dR^d$ we denote by $|D|$ its Lebesgue measure. All
logarithms are at base~$e$. 
	
\subsection{Motion planning}
Denote by $\C$ the configuration space of the robot, and by
$\F\subseteq \C$ the free space, i.e., the set of all collision free
configurations. Though our proofs may be extended to more complex
robotic systems (see discussion in Section~\ref{sec:future}), in this
work we investigate the geometric (holonomic) setting of the problem
in which no constraints are imposed on the motion of the
robot. Additionally, we assume that $\C$ is some subset of the
Euclidean space. In particular, $\C=[0,1]^d\subset\dR^d$ for some
fixed $d\geq 2$. We also assume that for any two configurations
$x,x'\in \C$ the robot is capable of following precisely the
straight-line path from $x$ to $x'$.

Given start and target configurations $s,t\in \F$, the problem
consists of finding a continuous path (curve)
$\pi:[0,1]\rightarrow \F$ such that $\pi(0)=s,\pi(1)=t$. That is, the
robot starts its motion along $\pi$ on $s$, and ends in $t$, while
remaining collision free. An instance of the problem is defined for a
given $(\F,s,t)$, where $s,t\in \F$.
		
\subsection{Cost function}
It is usually desirable to obtain paths that minimize a given
criterion. In this paper we consider the following two cost functions.

\begin{definition}\label{def:length}
  Given a path $\sigma$, its \emph{length} is
  \[\len(\sigma)=\sup_{n\in \dN_+, 0=\tau_1\leq \ldots \leq \tau_n
    =1}\sum_{i=2}^n\|\pi(\tau_i)-\pi(\tau_{i-1})\|.\]
\end{definition}
	
\begin{definition}
  Given a path $\sigma$, and a \emph{cost map} $\M:\C\rightarrow \dR$,
  its \emph{bottleneck cost} is
  \[\bot(\sigma,\M)=\max_{\tau\in [0,1]} \M(\pi(\tau)).\]
\end{definition}
	
We proceed to describe the notion of \emph{robustness}, which is
essential when discussing properties of sampling-based planners.
Given a subset $\Gamma\subset \C$ and two configurations
$x,y\in \Gamma$, denote by $\Pi_{x,y}^\Gamma$ the set of all
continuous paths, whose image is in $\Gamma$, that start in $x$ and
end in $y$, i.e., if $\pi\in \Pi_{x,y}^\Gamma$ then
$\pi:[0,1]\rightarrow \Gamma$ and $\pi(0)=x,\pi(1)=y$.
	
\begin{definition}\label{def:robustly_feasible}
  Let $(\F,s,t)$ be a motion-planning problem. A path
  $\pi \in \Pi_{s,t}^\F$ is \emph{robust} if there exists $\delta >0$
  such that $\B_{\delta}(\pi)\subset \F$.  We also say that $(\F,s,t)$
  is \emph{robustly feasible} if there exists such a robust path.
\end{definition}
	
\begin{definition}\label{def:robust_opt_len}
  The \emph{robust optimum} with respect to $\len$ is defined as
  \[\lenopt=\inf\left\{\len(\pi)\middle|\pi \in \Pi_{s,t}^\F\textup{ is
      robust}\right\}.\]
	\end{definition}
	
	The corresponding definition for the bottleneck cost is slightly
	more involved.
	
	\begin{definition}
		Let $\M$ be a cost map.  A path $\pi \in \Pi_{s,t}^\F$ is
		$\M$-\emph{robust} if it is robust and for every $\eps>0$ there
		exists $\delta >0$ such that for every $x\in \B_{\delta}(\pi)$, $\M(x)\leq (1+\eps)\bot(\pi,\M)$.
		We also say that $\M$ is \emph{well behaved} if there exists at
		least one $\M$-robust path.
	\end{definition}
	
	\begin{definition}\label{def:robust_opt_bot}
		The \emph{robust optimum} with respect to $\bot$ is defined as
		\[\botopt=\inf\left\{\bot(\pi,\M)\middle|\pi \in \Pi_{s,t}^\F\textup{
			is $\M$-robust}\right\}.\]
	\end{definition}
	
	\subsection{Poisson point processes}
	We draw our main analysis techniques from the literature of
    continuum percolation, where point samples are generated with the
    following distribution. Thus we will use this point distribution
    in \prm, which would be formally defined in the following section.
	
	\begin{definition}[\cite{FraMee08}]\label{def:ppp}
      A random set of points $\X\subset \dR^d$ is a \emph{Poisson
        point process} (PPP) of density $\lambda>0$ if it satisfies
      the conditions:  
      \begin{enumerate}
      \item For mutually disjoint domains
        $D_1,\ldots,D_{\ell}\subset \dR^d$, the random variables
        $|D_1\cap \X|,\ldots, |D_{\ell}\cap \X|$ are mutually
        independent.  \item For any bounded domain $D\subset \dR^d$ we
        have that for every $k\geq 0$,
        $$\Pr[|\X\cap D|=k]=e^{-\lambda
          |D|}\tfrac{(\lambda|D|)^k}{k!}.$$
      \end{enumerate}
	\end{definition}

	Informally, property~(1) states that the number of points from
    $\X$ in any two disjoint sets $D_i,D_j\subset \dR^d$ is
    independent.  Another useful property that follows from~(2) is
    that the expected number of points in a certain region is known
    precisely and corresponds to the volume of this region. That is,
    for any $D\subset \dR^d$ it holds that
    $E(|\X\cap D|)=\lambda |D|$.

    The following provides a simple recipe for generating  PPP.

    \begin{claim}[\cite{FraMee08}]
		\label{clm:recipe}
		Let $N$ be a Poisson random variable with mean $\lambda$.  For
        a given $z\in \dZ^d$ draw a sample $N_z\in N$ and define
        $\X_z=\{X_1,\ldots,X_{N_z}\}$ to be $N_z$ points chosen
        independently and uniformly at random from~$z + [0,1]^d$. Then
        $\X=\bigcup_{z\in \dZ^d} \X_z$ is a PPP of density $\lambda$.
	\end{claim}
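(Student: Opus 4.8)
The plan is to verify directly the two defining axioms of a PPP in Definition~\ref{def:ppp}, using as the single workhorse the classical \emph{Poisson splitting} (or \emph{coloring}) fact: if $N\sim\mathrm{Poisson}(\mu)$ and each of the $N$ points is, independently of the others, assigned one of colors $1,\dots,m$ (with an additional ``uncolored'' option) according to fixed probabilities $q_1,\dots,q_m$ with $\sum_i q_i\leq 1$, then the color counts $N_1,\dots,N_m$ are \emph{mutually independent} and $N_i\sim\mathrm{Poisson}(\mu q_i)$. I would prove this by a short computation of the joint probability mass function: writing $n=k_0+k_1+\dots+k_m$ with $k_0$ the uncolored count, conditioning on $N=n$ and invoking the multinomial distribution, the factor $e^{-\mu}\mu^{n}$ factorizes across colors as $\prod_{i=0}^m e^{-\mu q_i}(\mu q_i)^{k_i}/k_i!$; summing out $k_0$ leaves $\prod_{i=1}^m e^{-\mu q_i}(\mu q_i)^{k_i}/k_i!$, which simultaneously exhibits the independence (product form) and the Poisson marginals.

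Next I would set up the cube decomposition. Fix a bounded domain $D\subset\dR^d$; it meets only finitely many cubes, so write $D=\bigsqcup_{z} D_z$ with $D_z=D\cap(z+[0,1]^d)$. Inside cube $z$ we have $N_z\sim\mathrm{Poisson}(\lambda)$ points placed uniformly, and since the cube has unit volume a point lands in $D_z$ with probability $|D_z|$. Applying splitting inside that single cube (with one color) gives $|\X\cap D_z|\sim\mathrm{Poisson}(\lambda|D_z|)$. The variables $\{(N_z,\text{placements in }z)\}_{z\in\dZ^d}$ are independent by construction, hence so are the $|\X\cap D_z|$ across $z$; summing independent Poissons yields $|\X\cap D|=\sum_z|\X\cap D_z|\sim\mathrm{Poisson}\!\big(\lambda\sum_z|D_z|\big)=\mathrm{Poisson}(\lambda|D|)$, which is exactly axiom~(2).

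For axiom~(1), given disjoint domains $D_1,\dots,D_\ell$, I would refine the same decomposition to the pieces $D_{j,z}=D_j\cap(z+[0,1]^d)$. Within a fixed cube $z$ the pieces $D_{1,z},\dots,D_{\ell,z}$ are pairwise disjoint, so the \emph{multi-color} splitting fact makes the vector $(|\X\cap D_{1,z}|,\dots,|\X\cap D_{\ell,z}|)$ have mutually independent coordinates; moreover this vector is a function of $(N_z,\text{placements in }z)$, which are independent across $z$. Consequently the whole family $\{|\X\cap D_{j,z}|\}_{j,z}$ is mutually independent. Since $|\X\cap D_j|=\sum_z|\X\cap D_{j,z}|$ depends only on the index block $\{(j,z):z\in\dZ^d\}$, and these blocks are disjoint over $j$, the counts $|\X\cap D_1|,\dots,|\X\cap D_\ell|$ are mutually independent, giving axiom~(1).

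The routine parts are the splitting computation and the ``sum of independent Poissons is Poisson'' step. The main point requiring care is the \emph{bookkeeping of independence} when a domain straddles several cubes: one must combine the within-cube independence (coming from splitting, valid because the pieces are disjoint inside each cube) with the across-cube independence (coming from the construction) to conclude mutual independence of the entire doubly-indexed family \emph{before} regrouping; arguing independence of the $|\X\cap D_j|$ directly, without this finer decomposition, is error-prone. A minor technical remark is that boundedness guarantees each decomposition is a finite sum, so no convergence issues arise.
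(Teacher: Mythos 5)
Your proposal is correct, but there is nothing in the paper to compare it against: the paper does not prove Claim~\ref{clm:recipe} at all, it simply imports the statement by citation to~\cite{FraMee08}, where it is a standard fact about superposition of independent Poisson configurations. Your argument is the classical textbook route and it is sound: the splitting (coloring) computation is carried out correctly (the factorization $e^{-\mu}\mu^{n}\binom{n}{k_0,\dots,k_m}\prod_i q_i^{k_i}=\prod_i e^{-\mu q_i}(\mu q_i)^{k_i}/k_i!$ uses $\sum_{i=0}^{m}q_i=1$, and summing out the uncolored count gives both the product form and the Poisson marginals); axiom~(2) then follows from single-color splitting inside each cube plus the fact that a finite sum of independent Poisson variables is Poisson, with boundedness of $D$ guaranteeing finiteness of the decomposition; and your handling of axiom~(1) via the doubly-indexed family $\{|\X\cap D_{j,z}|\}_{j,z}$ is exactly the right level of care --- establishing mutual independence of the refined family first and only then regrouping into the disjoint blocks indexed by $j$ is what makes the argument airtight, and it also extends verbatim to unbounded $D_j$ (as Definition~\ref{def:ppp}(1) permits), since functions of disjoint blocks of an independent family remain independent. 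The only cosmetic point worth a sentence in a written version: the closed cubes $z+[0,1]^d$ in the claim overlap on their boundaries, so the decomposition $D=\bigsqcup_z D_z$ should either use half-open cubes or note that a sampled point lies on a cube boundary with probability zero; this is a measure-zero technicality, not a gap. In short, your proof makes self-contained a statement the paper leaves to the literature.
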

	
	It will be convenient to think about \prm as a subset of the
    following \emph{random geometric graph} (RGG). We will describe
    various properties of this graph in later sections.

	\begin{definition}{\cite{Pen03}}\label{def:rgg} 
      Let $\X\subset \dR^d$ be a PPP. Given $r >0$, the random
      geometric graph $\G(\X;r)$ is an undirected graph with the
      vertex set $\X$. Given two vertices $x,y\in \X$,
      $(x,y)\in \G(\X;r)$ if \mbox{$\|x-y\|\leq r$}.
	\end{definition}
	
	\section{Analysis of PRM}\label{sec:prm}
	In this section we provide a mathematical description of \prm, which essentially maintains an underlying RGG with PPP samples. 
	We proceed to describe our main contribution (Theorem~\ref{thm:prm}) which is concerned with the conditions for which \prm converges to the (robust) optimum. 
	We then provide an outline of the proof, in preparation for the following sections.
	
	Recall that the configuration space of the robot is represented by
    $\C=[0,1]^d$, and the free space is denoted by $\F\subseteq \C$.
    The motion-planning problem $(\F,s,t)$ will remain fixed
    throughout this section.
	
	Recall that \prm accepts as parameters the number of samples
	$n\in \dN_+$ and a connection radius $r_n$.  Denote by $\X_n$ a
	PPP with mean density $n$.  In relation to the definitions of the
	previous section, the graph data structure obtained by the
	\emph{preprocessing stage} of \prm can be viewed as an RGG. For
	instance, when $\F=\C$, the graph obtained by \prm is precisely
	$\G(\X_n\cap [0,1]^d;r_n)$.  In the more general case, when
	$\F\subset \C$, \prm produces the graph $\G(\X_n\cap \F;r_n)$. As $\F$
	can be non-convex, we emphasize that the latter notation describes the
	maximal subgraph of $\G(\X_n;r_n)$ such that vertices and edges are
	contained in~$\F$.
	
	In the \emph{query stage}, recall that \prm accepts two
    configurations $s,t\in \F$, which are then connected to the
    preprocessed graph. Here we slightly diverge from the standard
    definition of \prm in the literature. In particular, instead of
    using the same radius $r_n$ when connecting $s,t$, we use the
    (possibly larger) radius $r^{st}_n$. The graph obtained after
    query is formally defined below:
	
	\begin{definition}\label{def:prm}
		The \prm graph $\P_n$ is the union between
		$\G(\X_n\cap \F;r_n)$ and the supplementary edges
		\[\bigcup_{y\in \{s,t\}} \left\{(x,y) \middle| x\in \X_n\cap
		\B_{r^{st}_n}(y)\textup{ and } xy\subset \F\right\}. \]
	\end{definition} 
	
	\noindent \textbf{Remark.} We emphasize that the larger radius
	$r^{st}_n$ is only used when $s,t$ are connected to the preprocessed
	RGG. \vspace{5pt}	

	We reach our main contribution. 
	\begin{theorem}\label{thm:prm}
		Suppose that $(\F,s,t)$ is robustly feasible. Then there exists a critical radius 
		$r_n^*=\gamma^*n^{-1/d}$, where $\gamma^*$ is a constant (see supplementary material), such that the following holds:
		\begin{enumerate}[i.]
			\item If $r_n<r^*_n$ and $r^{st}_n=\infty$ then \prm fails (to find a
			solution) \aas\footnote{Let
				$A_1,A_2,\ldots$ be random variables in some probability
				space and let $B$ be an event depending on~$A_n$. We say that
				$B$ occurs \emph{asymptotically almost surely} (\aas, in
				short) if $\limn\Pr[B(A_n)]=1$.}
			\item Suppose that
			$r_n>r^*_n$. There exists $\beta_0>0$ such that for $r^{st}_n=\frac{\beta\log^{1/(d-1)}n}{n^{1/d}}$, where $\beta \geq \beta_0$, and any $\eps>0$ the following holds with probability
			$1-O(n^{-1})$:
			\begin{enumerate}[1.]
				\item $\P_n$ contains a path $\pi_n \in \Pi_{s,t}^\F$
				with \mbox{$\len(\pi_n)\leq(1+\eps)\xi\lenopt$}, where $\xi$ is independent of~$n$;
				\item If $\M$ is well behaved then $\P_n$ contains a path
                  \mbox{$\pi'_n \in \Pi_{s,t}^\F$} with
				$\bot(\pi'_n,\M)\leq (1+\eps) \botopt$.
			\end{enumerate}
		\end{enumerate}
	\end{theorem}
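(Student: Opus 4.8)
The plan is to establish a correspondence between robust optimal paths in the free space and paths in the PRM roadmap, by "tiling" a neighborhood of a near-optimal robust path with a chain of overlapping balls and showing that, asymptotically almost surely, each ball contains a sample and consecutive samples are within connection radius $r_n$. Since the problem is robustly feasible, fix a robust path $\pi$ with $\len(\pi)\leq(1+\eps/2)\lenopt$ and clearance $\delta>0$, so that $\B_\delta(\pi)\subset\F$. The key insight is that the regime $r_n>r^*_n=\gamma^*n^{-1/d}$ sits \emph{above the percolation threshold} of the associated RGG $\G(\X_n;r_n)$, which is precisely where continuum percolation guarantees the existence of a giant connected component and, more importantly for us, local connectivity along any fixed tube. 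This is the essential departure from the $\Theta((\log n/n)^{1/d})$ radius of prior work: we no longer demand that \emph{every} small ball be individually occupied (which forces the $\log n$ factor via a union bound over $\Theta(n)$ regions), but instead exploit that supercritical percolation yields a connected backbone threading the tube with only $\Theta(1)$ neighbors per vertex.

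\textbf{Length bound (part ii.1).} First I would partition the tube $\B_\delta(\pi)$ into a sequence of cells and invoke the supercritical behavior of $\G(\X_n;r_n)$ to produce, \aas, a connected path of samples $x_0,x_1,\dots,x_k$ lying inside the tube with $x_0$ near $s$ and $x_k$ near $t$, where consecutive samples are joined by edges of $\P_n$. Because the straight segments $x_{i-1}x_i$ all lie within $\B_\delta(\pi)\subset\F$, the concatenation is a valid path $\pi_n\in\Pi_{s,t}^\F$. The length overhead is controlled by the geometry of percolation: the backbone cannot follow $\pi$ exactly, so its length is inflated by a constant factor $\xi$ (the \emph{chemical-distance} to Euclidean-distance ratio in supercritical continuum percolation), which is exactly why we obtain asymptotic \emph{near}-optimality rather than optimality for the length cost. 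Combining the two constant factors gives $\len(\pi_n)\leq(1+\eps)\xi\lenopt$ after absorbing the $\eps/2$ slack. The connection of $s,t$ themselves uses the enlarged radius $r^{st}_n=\beta\log^{1/(d-1)}n/n^{1/d}$, whose $\log^{1/(d-1)}n$ factor is tuned precisely to guarantee, with probability $1-O(n^{-1})$, that at least one sample falls in each of $\B_{r^{st}_n}(s)$ and $\B_{r^{st}_n}(t)$ with a free connecting segment, so $s$ and $t$ attach to the backbone.

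\textbf{Bottleneck bound (part ii.2).} For the bottleneck cost I would restrict attention to the $\M$-sublevel tube. Given an $\M$-robust path realizing cost at most $\botopt$, its defining property supplies, for the chosen $\eps$, a clearance $\delta>0$ within which $\M\leq(1+\eps)\botopt$ everywhere. Running the same percolation argument inside this sublevel tube produces a roadmap path $\pi'_n$ whose every configuration has $\M$-value at most $(1+\eps)\botopt$; since the bottleneck cost is a maximum rather than an integral, the constant-factor length inflation $\xi$ does \emph{not} enter, yielding true asymptotic optimality $\bot(\pi'_n,\M)\leq(1+\eps)\botopt$.

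\textbf{The main obstacle} will be making the continuum-percolation connectivity statement quantitative enough to yield the sharp $1-O(n^{-1})$ failure bound along the \emph{entire} tube simultaneously, rather than merely \aas\ existence of a giant component. Standard continuum percolation gives connectivity with probability tending to $1$, but pinning down the polynomial rate requires a careful block/renormalization argument: I would discretize the tube into $\Theta(n^{1/d})$ macroscopic blocks of side $\Theta(n^{-1/d})$, declare a block ``good'' if its sample configuration crosses it in the right direction (a local event with probability bounded away from $1$ by a constant depending only on how far $r_n$ exceeds $r^*_n$), and then show that the good blocks percolate across the tube with the complementary bad event decaying fast enough that a union bound over the $\Theta(n^{1/d})$ blocks still leaves a $1-O(n^{-1})$ guarantee. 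Quantifying $\gamma^*$ and verifying that $r_n>r^*_n$ genuinely places us strictly above threshold—so that the per-block good probability exceeds the critical value for the renormalized lattice percolation—is where the delicate constant-tracking resides, and it is the crux that separates this analysis from the looser $\log n$ regime.
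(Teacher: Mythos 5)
Your treatment of part~ii follows essentially the same route as the paper's: a supercritical percolation backbone threading the clearance tube of a near-optimal robust path, the chemical-distance (stretch) constant $\xi$ accounting for the length inflation, an enlarged query radius $r^{st}_n$ whose $\log^{1/(d-1)}n$ factor governs attachment to the backbone, and the observation that the bottleneck cost, being a maximum rather than a sum, escapes the factor $\xi$. (The paper formalizes the tube-following step with waypoints $p_i$ spaced $\delta/(2\xi)$ apart along the robust path, a giant-component point $q_i$ within $r^{st}_n$ of each $p_i$, and a stretch theorem $\dist(\G_n,q_i,q_{i+1})\leq \xi\|q_i-q_{i+1}\|$, which forces each graph subpath to remain inside the tube and hence collision-free.) However, there are genuine gaps. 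The most basic one: you never address part~i. It is not a byproduct of the positive direction; the paper proves it by showing that in the subcritical regime the largest connected component of $\G(\X_n;r_n)$ has size $O(\log n)$ \aas (an adaptation of Penrose's Proposition~11.2 to the PPP setting), and then exhibiting a concrete robustly feasible instance in which any $s$--$t$ path must cross a corridor of constant width, forcing $\Omega(n^{1/d})$ edges of length $r_n=O(n^{-1/d})$ inside a single component --- a contradiction. A constructed instance is unavoidable here: with $r^{st}_n=\infty$, any instance in which $s$ sees $t$ succeeds trivially, so part~i must be established on a suitable family of examples, which requires an argument you have not supplied.

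Second, your plan for the quantitative $1-O(n^{-1})$ bound would fail as written. If each block of side $\Theta(n^{-1/d})$ is ``good'' with a probability that is merely a constant, then a union bound over $\Theta(n^{1/d})$ blocks gives nothing (a union bound would need per-block failure $O(n^{-1-1/d})$), and renormalization needs the good-block probability to exceed a lattice-percolation threshold, which is arranged by taking blocks of side $Ln^{-1/d}$ for a large constant $L$ so that this probability tends to $1$ --- not by the constant you describe. More importantly, the tube-crossing events are not where the $O(n^{-1})$ term comes from: their failure probabilities are exponentially small in $n^{1/d}$ (this is the content of the Penrose--Pisztora estimates the paper imports). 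The binding constraint is the attachment step, and there your argument is too weak: it is not enough that ``at least one sample falls in $\B_{r^{st}_n}(s)$,'' since that event fails with probability only $\exp\left(-\Theta\left(\log^{d/(d-1)}n\right)\right)$ and a radius of order $(\log n/n)^{1/d}$ would already suffice for it. What is needed is that a point of the \emph{giant component} lies in each attachment ball; the relevant density estimate has failure probability $\exp\left(-\Theta\left((n')^{(d-1)/d}\right)\right)$ where $n'=\Theta\left(\log^{d/(d-1)}n\right)$ is the expected number of points per ball, and it is exactly this estimate that dictates the exponent $1/(d-1)$ in $r^{st}_n$ and produces the $n^{-1}$ rate, together with the $1-O(n^{-1})$ guarantee of the stretch theorem itself.
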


    \noindent\textbf{Remark:} Theorem~\ref{thm:prm} also holds, with a slight
modification, when the PPP is replaced with the standard binomial
point process (BPP), in which the number of sampled points is fixed a priori. The only difference is that the probability of
success should be reduced from $1-O(n^{-1})$ to $1-O(n^{-1/2})$. This
follows from Lemma~1 in~\cite{FriSauSta13}, which states that if a
property holds for PPP then it also holds for BPP, albeit with
slightly smaller probability.
	
	\subsection{Outline of proof}
	For the remainder of this section we briefly describe our technique for proving this theorem, in preparation for the full proof, which is given in Section~\ref{sec:proof}. The critical radius $r^*_n$ defined above, coincides with the \emph{percolation threshold}, which determines the emergence of a connected component of $\G$ that is of infinite size. In particular, if $r_n<r^*_n$ then $\G(\X_n;r_n)$ breaks into tiny connected 
	components of size $O(\log n)$ each. 
	Thus, unless $s,t$ are infinitesimally close, no connected component can have both $s$ and $t$ simultaneously. 
	
	More interestingly, the radius of $r_n>r_n^*$ leads to the emergence
	of a unique infinite component of $\G(\X_n;r_n)$. That is, in
	such a case one of the components of $\G(\X_n;r_n)$ must contain
	an infinite number of vertices (Section~\ref{sec:basics}). Denote this
	component by $C_\infty$. 
	
	In contrast to $\G(\X_n;r_n)$, which is defined for the unbounded space $\dR^d$, our motion-planning problem is bounded to $\C=[0,1]^d$. Thus, the next step is to investigate the properties of $\G(\X_n;r_n)$ when restricted to $[0,1]^d$ (Section~\ref{sec:bounded}). Denote by $C_n$ the largest connected component of $C_\infty \cap [0,1]^d$. This structure plays a key role in our proof (see Section~\ref{sec:proof}): With high probability (to be defined), there exist vertices of $C_n$ that are sufficiently close to $s$ and $t$, respectively, so that a connection between the two vertices can be made through $C_n$. 
	
	Of course, this overlooks the fact that some portions of $C_n$ lie in forbidden regions of $\C$. Thus, we also have to take the structure of $\F$ into consideration. To do so, we rely on~\cite{PenPiz96} to prove that any small subset of $[0,1]^d$ must contain at least one point of $C_n$ (see lemmata~\ref{lem:H_n} and~\ref{lem:H'_n}). This allows us to trace the robust optimum (and collision-free) path with points from~$C_n$. 
	
	The final ingredient, which allows to bound the path length along $\G(\X_n;r_n)\cap [0,1]^d$, is~Theorem~\ref{thm:stretch}. 
	It states that the distance over this graph is proportional to the Euclidean distance between the end points. This also ensures that the trace points from~$C_n$ can be connected with collision-free paths over the graph. 
	
	To conclude, in Section~\ref{sec:elements} we provide background on continuum percolation in unbounded and bounded domains, and prove two key lemmata (Lemma~\ref{lem:H_n} and Lemma~\ref{lem:H'_n}). In Section~\ref{sec:proof} we return to the setting of motion planning and utilize the aforementioned results in the proof of Theorem~\ref{thm:prm}.
	
	\section{Elements of continuum percolation}\label{sec:elements}
	In this section we describe some of the properties of the unbounded
	graph $\G(\X_n;r_n)$ that will be employed in our analysis in the
	following section.
	
	\subsection{The basics}\label{sec:basics} 
	A fundamental question is when $\G$ contains an infinite connected
	component around the origin.
	
	\begin{definition}
		The \emph{percolation probability} $\theta(n,r)$ is the
		probability that the origin $o\in \dR^d$ is contained in a connected
		component of $\G(\X_n\cup \{o\}; r)$ of an infinite number of
		vertices. That is, if $C_o$ denotes the set of vertices connected
		to $o$ in the graph, then $\theta(n,r)=\Pr(|C_o|=
		\infty)$.
	\end{definition}
	
	We say that a graph \emph{percolates} iff $\theta(n,r)>0$. Note
	that the selection of the origin is arbitrary, and the following
	result can be obtained for any  $x\in \dR^d$ alternative to $o$.
	
	\begin{theorem}{\cite[Theorem~12.35]{Gri99}}\label{thm:radius}
		There exists a critical radius $r^*_n=\gamma^* n^{-1/d}$, where
		$\gamma^*$ is a constant, such that $\theta(n,r_n)=0$ when
		$r_n<r^*_n$, and $\theta(n,r_n)>0$ when $r_n>r^*_n$.
	\end{theorem}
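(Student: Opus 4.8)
The plan is to exploit the scale invariance of the Poisson point process to reduce the statement to the classical existence of a strictly positive, finite percolation threshold for a \emph{fixed}-density process, and then to produce that threshold by the standard trio of monotonicity, a subcritical domination, and a supercritical renormalization.

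First I would set up the scaling reduction. Fix $\alpha>0$ and consider the dilation $x\mapsto \alpha x$ of $\dR^d$. Since volumes scale by $\alpha^d$, property~(2) of Definition~\ref{def:ppp} shows that this map sends a PPP of density $\lambda$ to a PPP of density $\lambda/\alpha^d$, while every pairwise distance is multiplied by $\alpha$. Hence the dilation carries $\G(\X_n;r)$ isomorphically onto $\G(\X';\alpha r)$, where $\X'$ has density $n/\alpha^d$, and crucially the origin is a fixed point of the dilation, so the cluster of $o$ is preserved. Choosing $\alpha=n^{1/d}$ produces density $1$ and yields
\[
\theta(n,r)=\theta\!\left(1,\,r\,n^{1/d}\right).
\]
Consequently, if the density-$1$ process admits a constant $0<\gamma^*<\infty$ with $\theta(1,\rho)=0$ for $\rho<\gamma^*$ and $\theta(1,\rho)>0$ for $\rho>\gamma^*$, then setting $r^*_n=\gamma^* n^{-1/d}$ gives the theorem at once, because $r\,n^{1/d}<\gamma^*$ (resp.\ $>\gamma^*$) is equivalent to $r<r^*_n$ (resp.\ $>r^*_n$).

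It remains to produce $\gamma^*$ at fixed density $1$, which I would do in three steps. (i) \emph{Monotonicity}: $\theta(1,\rho)$ is nondecreasing in $\rho$, by the obvious coupling in which enlarging the radius only adds edges to $\G$, so the cluster of $o$ can only grow; this makes $\gamma^*:=\sup\{\rho:\theta(1,\rho)=0\}$ well defined. (ii) \emph{Subcritical phase} ($\gamma^*>0$): for small $\rho$ I would dominate the breadth-first exploration of the cluster of $o$ by a Galton--Watson process whose offspring count is Poisson with mean $V_d\,\rho^d$, the expected number of points of $\X$ within distance $\rho$ of a given point ($V_d$ being the volume of the unit ball); when $V_d\,\rho^d<1$ this process is subcritical and dies out almost surely, forcing $|C_o|<\infty$ and hence $\theta(1,\rho)=0$, so $\gamma^*\geq V_d^{-1/d}>0$. (iii) \emph{Supercritical phase} ($\gamma^*<\infty$): for large $\rho$ I would coarse-grain $\dR^d$ into boxes of side $\approx\rho/(2\sqrt{d})$, declare a box good if it contains a point of $\X$, and note that any two points in adjacent boxes (diagonals included) are then deterministically within distance $\rho$, so a good box and its good neighbours are joined in $\G$. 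Disjoint boxes are independent by property~(1), so the good boxes form an independent site-percolation process on $\dZ^d$ with parameter $1-e^{-(\rho/(2\sqrt{d}))^{d}}$, which exceeds $p_c(\dZ^d)<1$ (valid since $d\geq2$) once $\rho$ is large; the resulting infinite good-box cluster forces an infinite cluster in $\G$, whence $\theta(1,\rho)>0$.

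I expect step~(iii) to be the main obstacle, as it is where a continuum event must be converted into a finite-range lattice event and a stochastic-domination comparison carried out to land in the supercritical regime of $\dZ^d$; if one wants the good-box predicate to directly certify continuum connectivity through nonconvex configurations, the clean independence above is lost and one must invoke a standard domination theorem (Liggett--Schonmann--Stacey, or a Peierls contour count) instead. Steps~(i) and~(ii) are comparatively routine. Finally, the qualitative claim needs only $0<\gamma^*<\infty$; the sharper numerical bounds $0.4\leq\gamma^*\leq0.6$ asserted elsewhere would follow from quantitative versions of the same subcritical and supercritical comparisons.
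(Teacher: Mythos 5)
Your proof is correct, but there is nothing in the paper to compare it against: Theorem~\ref{thm:radius} is stated purely as an import, cited as Theorem~12.35 of Grimmett's book, and the paper never argues it — its own contributions begin downstream of this fact. What you have written is, in essence, the classical argument that sits behind that citation: the dilation invariance of the PPP giving the exact scaling identity $\theta(n,r)=\theta\left(1,rn^{1/d}\right)$, a Galton--Watson domination of the cluster exploration giving $\gamma^*\geq V_d^{-1/d}>0$, and coarse-graining onto supercritical i.i.d.\ site percolation on $\dZ^d$ giving $\gamma^*<\infty$. This is the standard textbook proof (Grimmett, Meester--Roy, Penrose) rather than a genuinely different route, so you have in effect reconstructed the cited proof. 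Two small remarks. First, your closing worry that the supercritical step might require Liggett--Schonmann--Stacey is moot for the construction exactly as you set it up: the predicate ``the box contains a point of $\X$'' is measurable with respect to disjoint regions, hence genuinely independent across boxes by property~(1) of Definition~\ref{def:ppp}, and with side length $\rho/(2\sqrt{d})$ even diagonally adjacent boxes have diameter-of-union at most $\rho$, so adjacency of good boxes certifies continuum edges outright; domination theorems are needed only for fancier block events, not here. Second, it is worth one explicit line that monotonicity makes $\{\rho:\theta(1,\rho)=0\}$ an interval containing $0$, so that $\gamma^*=\sup\{\rho:\theta(1,\rho)=0\}$ indeed yields $\theta(1,\rho)=0$ for every $\rho<\gamma^*$ and $\theta(1,\rho)>0$ for every $\rho>\gamma^*$, which is the precise dichotomy the theorem asserts; and, as you correctly note, this argument delivers only $0<\gamma^*<\infty$, while the numerical range $0.4\leq\gamma^*\leq 0.6$ quoted in the paper's introduction comes from the estimates tabulated in Appendix~\ref{sec:critical_values}, not from this style of proof.
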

	
    The following lemma states that the infinite connected component exists with probability strictly $0$ or $1$. 
	\begin{lemma}\label{lem:psi}
		Let $\psi(n,r)$ be the probability that
		$\G(\X_n; r)$ contains an infinite connected component,
		i.e., without conditioning on any specific additional vertex.  Then
		$\psi(n,r)=0$ when $\theta(n,r)=0$ and $\psi(n,r)=1$
		when $\theta(n,r)>0$.
	\end{lemma}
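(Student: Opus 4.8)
The plan is to prove the two implications by combining a zero-one law for $\psi$ with a Palm-calculus (Mecke) computation that links $\psi$ to $\theta$. I would first establish that $\psi(n,r)\in\{0,1\}$. The event $E=\{\G(\X_n;r)\textup{ contains an infinite component}\}$ is invariant under every translation of $\dR^d$: shifting the whole configuration by a vector $v$ carries infinite components to infinite components, and $\G(\X_n+v;r)$ has the same law as $\G(\X_n;r)$. Since a PPP is ergodic (indeed mixing) with respect to the translation group, any translation-invariant event has probability $0$ or $1$, and hence $\psi(n,r)=\Pr(E)\in\{0,1\}$.

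Next, for the implication $\theta(n,r)=0\Rightarrow\psi(n,r)=0$, I would apply Mecke's equation to the functional that counts the points of $\X_n$ inside a bounded domain $D$ whose cluster is infinite. Since adding a point at $x$ and requiring its cluster to be infinite has probability $\theta(n,r)$ by translation invariance (matching the definition of $\theta$, as the paper notes the choice of origin is arbitrary), Mecke's formula with intensity $n$ yields
\[
\E\#\{x\in \X_n\cap D : |C_x|=\infty\} = n\,|D|\,\theta(n,r).
\]
If $\theta(n,r)=0$ this expectation vanishes, so almost surely no point of $\X_n\cap D$ lies in an infinite component. Tiling $\dR^d$ by the countably many unit cubes $z+[0,1]^d$, $z\in\dZ^d$, and taking a union bound over them, almost surely no point of $\X_n$ lies in an infinite component; hence $\G(\X_n;r)$ has none and $\psi(n,r)=0$.

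Finally, for $\theta(n,r)>0\Rightarrow\psi(n,r)=1$, I would show $\psi(n,r)>0$ and then invoke the zero-one law. If $\theta(n,r)>0$, then with positive probability the added origin $o$ lies in an infinite component $C_o$ of $\G(\X_n\cup\{o\};r)$. Because $|\X_n\cap \B_r(o)|$ is Poisson with finite mean, $o$ almost surely has finitely many neighbors, so deleting $o$ partitions $C_o\setminus\{o\}$ into finitely many components of $\G(\X_n;r)$ (each must contain a former neighbor of $o$), at least one of which is infinite. Thus $\G(\X_n;r)$ has an infinite component with positive probability, i.e.\ $\psi(n,r)>0$, and the zero-one law forces $\psi(n,r)=1$.

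The main obstacle I anticipate is the first step: rigorously invoking ergodicity of the PPP under translations and verifying the measurability and translation-invariance of the infinite-cluster event $E$. Once the zero-one law is in hand, the remainder reduces to a routine application of Mecke's formula and a simple neighbor-counting argument, both of which are standard for Poisson point processes.
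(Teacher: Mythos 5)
Your proof is correct, and it takes a genuinely different route from the paper's. For $\theta(n,r)=0\Rightarrow\psi(n,r)=0$ the paper discretizes space: it covers $\dR^d$ with balls around the countable lattice $Z_r=r\cdot\dZ^d$, notes that an infinite component of $\G(\X_n;r)$ would make $\G(\X_n\cup\{z\};r)$ percolate for some $z\in Z_r$, and finishes with a union bound over the countably many $z$, each contributing $\theta_z(n,r)=0$. Your Mecke-formula identity $\E\,\#\{x\in\X_n\cap D:|C_x|=\infty\}=n\,|D|\,\theta(n,r)$ achieves the same end with no discretization, and needs no zero-one law at all for this direction; incidentally, it also sidesteps a granularity issue in the paper's covering claim (a point of $\dR^d$ can lie at distance $r\sqrt{d}/2>r$ from the lattice $r\cdot\dZ^d$ when $d\geq 5$, so the paper's lattice would need refining --- a cosmetic fix that your argument never requires). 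For $\theta(n,r)>0\Rightarrow\psi(n,r)=1$ both arguments rest on a zero-one law, the paper invoking Kolmogorov's (existence of an infinite component is a tail event in the points of $\X_n$) and you invoking ergodicity of the PPP under translations; these are interchangeable here. What you add is the explicit neighbor-counting step: with probability $1$ the origin has finitely many neighbors, so deleting it from an infinite cluster of $\G(\X_n\cup\{o\};r)$ leaves finitely many pieces, at least one of which is infinite and hence yields an infinite component of $\G(\X_n;r)$. The paper compresses precisely this point into the unexplained inequality $\psi(n,r)\geq\theta(n,r)$ --- the one place where the augmented process defining $\theta$ and the unaugmented process defining $\psi$ actually differ --- so your version makes rigorous a step the paper only asserts.
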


    \begin{proof}
  Suppose that $\theta(n,r)=0$, and for any $x\in \dR^d$, denote by
  $\theta_x(n,r)$ the percolation probability of $\G(\X_n\cup \{x\};r)$,
  and note that $\theta_x(n,r)=0$. Define
  $Z_r=\left\{r\cdot z|z\in \dZ^d\right\}$, and observe that for any $y\in \dR^d$ there exists $z\in Z_r$ such that $y\in \B_r(z)$. By definition,
  $\G(\X_n\cup\{x\};r)$ percolates, i.e., the infinite component
  touches $\B_r(x)$, iff there exists $y\in C_\infty$ such that
  $\|x-y\|\leq r$. By definition of $Z_r$, if the latter event occurs then there exists $z\in Z_r$ such that $\|z-y\|\leq r$. Thus, by applying
  the union bound, and noting that a sum of countable number of zeros
  is still zero, we establish that
		\begin{align*}
		\psi(n,r)& =\Pr\left[\exists x\in \dR^d, \G(\X_n\cup \{x\};r)\textup{ percolates}\right]\\ &= \Pr\left[\exists z\in Z_r, \G(\X_n\cup \{z\};r)\textup{ percolates}\right] \\ & \leq \sum_{z\in Z_r}\theta_z(n, r)=0.
		\end{align*}
		
		For the other direction we employ Kolmogorov's zero-one law (see,
		e.g.,~\cite[Theorem~1, p.36]{BolRio06}). Informally, it states that
		an event, e.g., existence of an infinite connected component in
		$\G$, that occurs independently of any finite subset of independent
		random variables, e.g., points from $\X_n$, has a ability
		of either $0$ or $1$. Thus, as
		$\psi(n,r)\geq \theta(n,r)$ it immedietly follows that
		$\psi(n,r)=1$ when $\theta(n,r)>0$. 
\end{proof}
	
	The following theorem establishes that the infinite connected component is unique.
	
	\begin{theorem}{\cite[Theorem~2.3]{MeeRoy94}}\label{thm:unique}
		With probability $1$, $\G(\X_n;r)$ contains at most one
		infinite connected component.
	\end{theorem}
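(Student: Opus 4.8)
The plan is to follow the classical Burton--Keane uniqueness argument, adapted to the continuum (Poisson) setting. Write $N$ for the (random) number of infinite connected components of $\G(\X_n;r)$. The strategy proceeds in three stages: first establish that $N$ equals an almost-sure constant, then exclude every finite value $N\geq 2$, and finally exclude $N=\infty$. To begin, I would note that $N$ is invariant under translations of $\dR^d$ and that a PPP is both stationary and ergodic with respect to this translation action (a standard property of Poisson processes). Since $N$ is a translation-invariant functional of an ergodic process, it must equal some deterministic $k\in\{0,1,2,\ldots,\infty\}$ almost surely, so it remains only to rule out each $k\geq 2$.

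To exclude $2\leq k<\infty$, I would exploit the independence of the PPP over disjoint regions (property~(1) of Definition~\ref{def:ppp}), which gives the finite-energy / insertion-tolerance property: conditioned on the points of $\X_n$ outside a bounded box $B$, the restriction of $\X_n$ to $B$ is still a PPP on $B$ and, in particular, places points in any prescribed sub-region of $B$ with positive probability. If $k\geq 2$, then with positive probability at least two distinct infinite components intersect a sufficiently large box $B$; conditioning on the configuration outside $B$ and inserting a bounded, connected cluster of points inside $B$ that bridges these two components would merge them, strictly decreasing $N$ on a positive-probability event. This contradicts $N\equiv k$ almost surely.

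The delicate step is excluding $k=\infty$, and this is where I expect the main obstacle to lie. I would call a point an (approximate) \emph{trifurcation} if, upon deleting the points of $\X_n$ inside a small ball around it, the infinite component passing through it breaks into exactly three distinct infinite pieces. Finite energy again guarantees that trifurcations occur with positive density, so the expected number of trifurcations inside a box of side $L$ is of order $L^d$. On the other hand, the three infinite branches emanating from each trifurcation can be routed to the boundary $\partial B_L$ along disjoint paths; treating the trifurcations as internal degree-$3$ vertices of a forest whose leaves lie on $\partial B_L$, a combinatorial/topological argument bounds their number by the order of $|\partial B_L|\sim L^{d-1}$. For $L$ large this contradicts the $L^d$ lower bound, forcing $k\neq\infty$ and leaving $k\in\{0,1\}$, i.e.\ at most one infinite component.

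The first two stages are comparatively routine given ergodicity and the independence structure of the PPP recorded in Definition~\ref{def:ppp}. The real work is in the third stage: making the trifurcation notion precise in the continuum (so that finite energy yields a genuine positive density) and carrying out the disjoint-path counting that caps the number of trifurcations by the surface area of the box rather than its volume.
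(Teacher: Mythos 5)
Your outline is the standard Burton--Keane trifurcation argument as adapted to the continuum setting, which is precisely the proof underlying the result the paper invokes: the paper itself gives no proof of this theorem but imports it verbatim as Theorem~2.3 of \cite{MeeRoy94}. Your three-stage structure---ergodicity of the PPP forcing the number $N$ of infinite components to be an almost-sure constant, insertion tolerance ruling out $2\leq k<\infty$, and the $L^{d}$ versus $L^{d-1}$ trifurcation count ruling out $k=\infty$---matches the argument of that cited source, so your approach is essentially the same as the paper's.
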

	
	\subsection{Bounded domains}\label{sec:bounded}
	We study different properties of $\G(\X_n;r)$ when it is restricted to
	the domain $[0,1]^d$. In case that $\theta(n,r)>0$, we use
	$C_{\infty}$ to refer to the infinite connected component of the
	unbounded graph $\G(\X_n;r)$.  Note that $C_{\infty}$ exists
	(Lemma~\ref{lem:psi}) and is unique (Theorem~\ref{thm:unique})
	with probability $1$.
	
	Denote by $C_n$ the largest connected component of
    $C_\infty\cap [0,1]^d$. By definition, $C_n$ is also a subgraph of
    $\G(\X_n\cap [0,1]^d;r_n)$. The following lemma shows that with
    high probability all the points from $C_\infty$, that are
    sufficiently close to the center of $[0,1]^d$, are members of
    $C_n$.
	
	\begin{lemma}\label{lem:H_n}
		Let $r_n>r^*_n$. Define
		\[H_n=[0,1]^d\setminus \B_{1/ \log n }\left(\dR^d\setminus
          [0,1]^d\right).\]
        Denote by $\EE^1_n$ the event that
        $C_{\infty}\cap H_n\subset C_n$.  Then there exist
        $n_0\in \dN$ and $\alpha>0$ such that for any $n>n_0$ it holds
        that
		\[\Pr[\EE^1_n]\geq 1-\exp\left(-\alpha n^{1/d}\log^{-1}n\right).\] 
	\end{lemma}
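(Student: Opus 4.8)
The plan is to pass to a block (renormalisation) argument in the supercritical regime and reduce the complement of $\EE^1_n$ to the existence of a large separating surface of ``bad'' blocks, whose probability I can control by a contour (Peierls) estimate. First I would rescale space by the factor $n^{1/d}$, so that $\X_n$ becomes a PPP of unit density on $[0,n^{1/d}]^d$ and the connection radius becomes the fixed constant $\gamma$, where $\gamma>\gamma^*$ by the hypothesis $r_n>r^*_n$ (with $\gamma^*$ the density-one critical radius of Theorem~\ref{thm:radius}). I then tile space by blocks of side $K\gamma$ for a large constant $K$ to be fixed, indexed by $z\in\dZ^d$, and call a block \emph{good} if the standard crossing event holds on the concentric block of triple side: there is a cluster of $\G(\X_n;r_n)$ meeting all faces of the triple block, it is the unique cluster joining the central block to the outer boundary, and every point of the central block that connects outside the triple block lies in it. By the large-deviation estimates of \cite{PenPiz96}, for $K$ large enough a block is bad with probability below any prescribed $\eps>0$; moreover this is a finite-range dependent field, so by stochastic domination it lies below a subcritical Bernoulli field.

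Two observations drive the reduction. Good blocks that are lattice-adjacent have overlapping triple blocks, so by uniqueness their crossing clusters coincide on the overlap and merge; hence any lattice-connected set of good blocks corresponds to a single connected cluster of $\G(\X_n;r_n)\cap[0,1]^d$. Dually, if $x\in C_\infty$ lies in a good block then, being joined to infinity, $x$ connects outside its triple block and therefore lies in that block's crossing cluster. Consequently the giant good-block cluster corresponds to $C_n$, and the only way $x\in C_\infty$ can fail to lie in $C_n$ is that the good-block cluster $\Gamma(x)$ containing $x$'s block is distinct from the giant, i.e.\ $\Gamma(x)$ is separated from the bulk by a $*$-connected surface $S$ of bad blocks.

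Here the interior hypothesis $x\in H_n$ enters. Since $x\in C_\infty$, its cluster in $\G(\X_n;r_n)\cap[0,1]^d$ reaches within $r_n$ of $\partial[0,1]^d$, so $\Gamma(x)$ spans from $x$'s block out to a boundary block; as $x$ lies at distance at least $1/\log n$ from $\partial[0,1]^d$, this span is at least $\rho:=\Theta\!\left(n^{1/d}/\log n\right)$ blocks. A surface $S$ of bad blocks separating such a cluster from the bulk (even when it exploits $\partial[0,1]^d$) must have diameter at least $\rho$ and hence contain at least $c\rho$ blocks. Counting $*$-connected surfaces through a fixed block by the usual contour bound, and using that each block is bad with arbitrarily small probability once $K$ is large, the probability of any separating surface of size at least $c\rho$ anchored near a given interior block is at most $\exp\!\left(-c'\,n^{1/d}/\log n\right)$.

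Finally I would take a union bound over the $\Theta(n)$ blocks meeting $H_n$, giving $\Pr[\,\overline{\EE^1_n}\,]\le \Theta(n)\exp\!\left(-c'\,n^{1/d}/\log n\right)\le\exp\!\left(-\alpha\,n^{1/d}/\log n\right)$ for all large $n$, since $\log n$ is dominated by $n^{1/d}/\log n$. I expect the main obstacle to be the geometric step of the third paragraph: making precise, in general dimension $d$, that a boundary-reaching interior cluster that is disconnected from the giant forces a bad-block separating surface of diameter $\gtrsim n^{1/d}/\log n$, and matching this to the exact form of the crossing and uniqueness events for which \cite{PenPiz96} supplies exponential control.
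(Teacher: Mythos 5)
Your plan is sound and, if carried to completion, would prove the lemma --- but it is in essence a from-scratch reconstruction of the very result the paper uses as a black box. The paper's proof has exactly two ingredients: (i) the geometric observation you make in your third paragraph, namely that any component $C'_i\neq C_n$ of $C_\infty\cap[0,1]^d$ must come within $r_n$ of $\partial([0,1]^d)$ in order to connect to the rest of $C_\infty$, so that if it also reached $H_n$ its diameter would be at least $1/\log n - r_n$; and (ii) a direct citation of Theorem~2 of \cite{PenPiz96}, which says that with probability at least $1-\exp(-\alpha' n^{1/d}\phi_n)$ every non-largest component of $C_\infty\cap[0,1]^d$ has diameter below $\phi_n$; taking $\phi_n=1/(2\log n)$ finishes the proof. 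Your block/renormalization scheme with crossing--uniqueness events, stochastic domination, and a Peierls contour count is precisely the machinery by which \cite{PenPiz96} establish such diameter and uniqueness bounds, so you are re-deriving the citation rather than invoking it. What your route buys is self-containedness and an explicit explanation of where the exponent $n^{1/d}/\log n$ comes from (span of $\rho=\Theta(n^{1/d}/\log n)$ blocks, each bad with small probability); what the paper's route buys is a proof of a few lines, with all renormalization technicalities outsourced.

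Two gaps in your sketch deserve mention beyond the one you flag yourself. First, your dichotomy ``either the good-block cluster $\Gamma(x)$ of $x$'s block is the giant, or it is cut off by a bad $*$-surface'' silently assumes that the block containing $x$ is good; if it is bad, you must instead argue about the chain of blocks crossed by $x$'s boundary-reaching cluster (each fully crossed block must be bad or belong to a non-giant good-block component, and either case feeds back into the contour estimate). Second, you identify the merged crossing cluster of the giant good-block component with $C_n$, but $C_n$ is \emph{defined} as the largest component of $C_\infty\cap[0,1]^d$; that identification needs a separate argument (e.g.\ a volume bound of the type given in Theorem~1 of \cite{PenPiz96}). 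Both gaps are patchable, and both are exactly the content the paper's citation absorbs; the most economical repair of your write-up is to keep your third-paragraph geometry and replace the rest with the appeal to \cite[Theorem~2]{PenPiz96}, at which point your proof coincides with the paper's.
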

	
	\begin{proof}
      This statement is an adaptation of Lemma~8 and Theorem~2
      of~\cite{PenPiz96}. We mention that~\cite{PenPiz96} uses
        a slightly different, but nevertheless equivalent model. While
        we consider an RGG that is bounded to $[0,1]^d$ and PPP of
        density $n$, they consider the domain $[0,n]^d$ with density
        $1$. It is only a matter of rescaling and variable
        substitution to import their results to our domain.  Let
      $C'_1,\ldots,C'_k$ denote the connected components of
      $\C_\infty\cap [0,1]^d$, and set $C_n$ to be the component
      $C'_i$ with the largest number of vertices. Without loss of
      generality, $C_n=C'_1$. See illustration in
      Figure~\ref{fig:h_n}.
		
      Observe that if $C'_i\subset C_\infty$ then it must have at
      least one vertex $x'_i\in C'_i$ that lies closely to the
      boundary of $[0,1]^d$, i.e., $\|x-\partial([0,1]^d)\|\leq r_n$,
      as otherwise $C'_i$ will not be able to connect to the rest of
      $C_\infty$. Thus, it must be the case that
      $\textup{diam}(C'_i)\leq \log^{-1} n-r_n$ in order to be able to
      reach $H_n$, where $\textup{diam}(D)=\sup_{x,x'\in D}\|x-x'\|$
      defines the diameter of a given $D\subset \dR^d$. We shall show
      that this does not hold.
		
      Theorem~2 in~\cite{PenPiz96} states that for any $\phi_n$ large
      enough there exist $\alpha',n_0$ such that for any $n>n_0$ it
      holds with probability at least
      $1-\exp\left(-\alpha' n^{1/d}\phi_n\right)$ that $\textup{diam}(C'_i)<\phi_n$ for any
      $1<i\leq k$. By setting
      $\alpha=\alpha'/2,\phi_n=1/(2\log n)$ the conclusion immediately
      follows.
	\end{proof}
	
	\begin{figure}
		\centering
		\includegraphics[width=0.5\columnwidth]{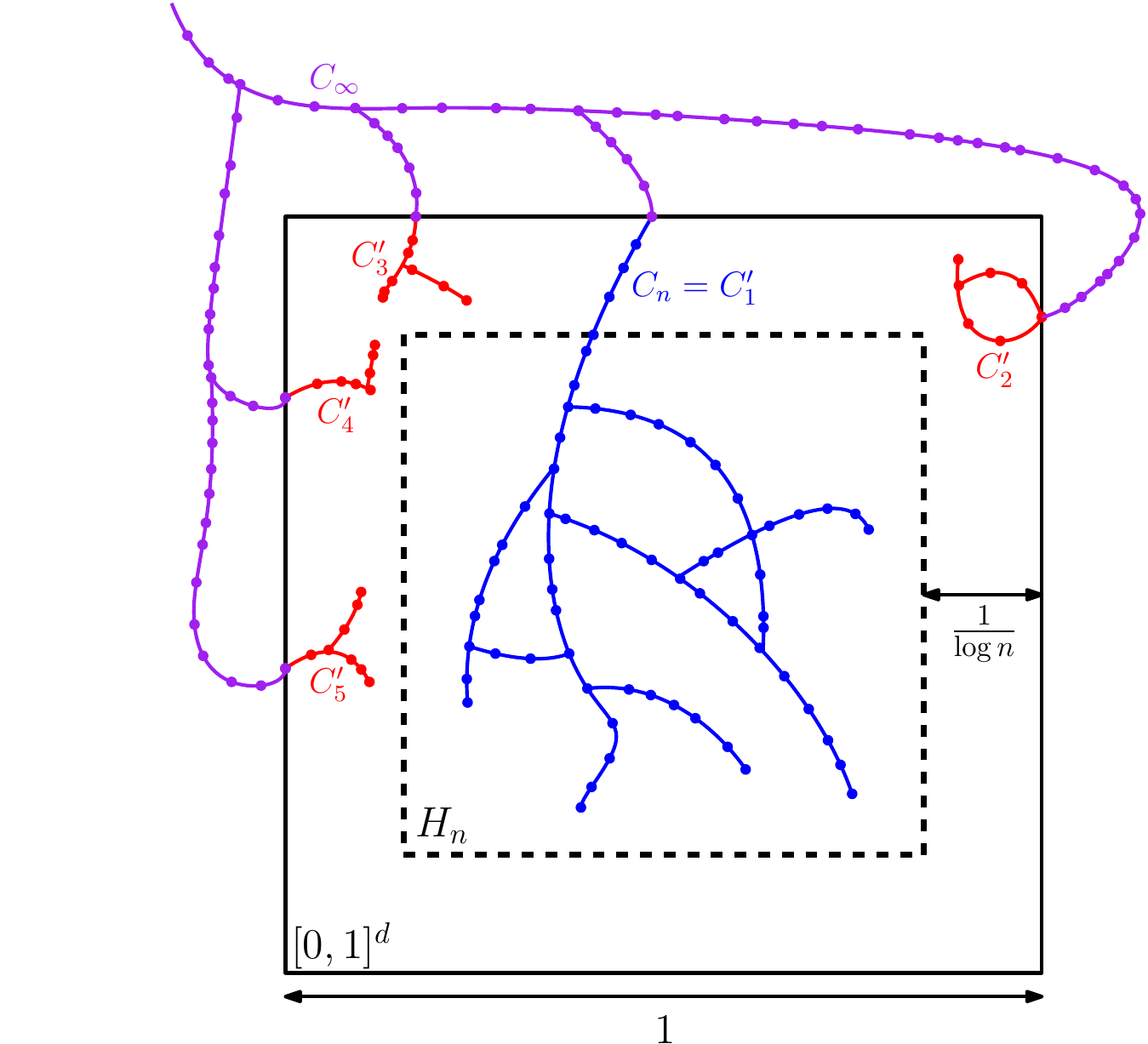}
		\caption{Illustration for Lemma~\ref{lem:H_n}. The outer cube
			represents $[0,1]^d$, whereas the inner cube depicted with dashed
			boundary is $H_n$. Observe that $H_n$ has  side length of
			$1-2/\log n$. The purple, blue and red graphs combined describe
			$C_\infty$, whereas $C_n$ is depicted in blue, and
			$C'_2,C'_3,C'_4,C'_5$ are in red.\label{fig:h_n}}
	\end{figure}
	
	Observe that for any fixed point $x$ internal to $[0,1]^d$ there
    exists $n_0\in (0,\infty)$ such that for any $n>n_0$ it holds that
    $x\in H_n$.  The following lemma bounds the probability of having
    at least one point from $C_n$ in a (small) subregion of
    $H_n$. Note that the value $\beta$ corresponds to the value used
    in Theorem~\ref{thm:prm}.
	\begin{lemma}\label{lem:H'_n}
		Let $r_n>r^*_n$. Define $H'_n\subset H_n$ to be a hypercube of side
		length $$h'_n=\tfrac{\beta\log^{1/(d-1)}n}{n^{1/d}}.$$
		Denote by $\EE^2_n$ the event that $H'_n\cap C_n\neq \emptyset$.
		Then there exists $n_0\in \dN$ and $\beta_0>0$ such that for
		any $n>n_0,\beta>\beta_0$ it holds that
		$\Pr[\EE^2_n | \EE^1_n]\geq 1-n^{-1}$.
	\end{lemma}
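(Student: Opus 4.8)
The plan is to reduce the statement to an unconditional estimate in continuum percolation and then invoke a surface-order large-deviation bound. First I would exploit the conditioning on $\EE^1_n$. Since $H'_n\subseteq H_n$, on the event $\EE^1_n$ every point of $C_\infty$ lying in $H'_n$ automatically lies in $C_n$; that is, $\EE^1_n\cap\{H'_n\cap C_\infty\neq\emptyset\}\subseteq \EE^2_n$. Consequently
\[\Pr[\EE^2_n\mid \EE^1_n]\;\geq\; \Pr\!\left[H'_n\cap C_\infty\neq\emptyset\mid \EE^1_n\right]\;\geq\; 1-\frac{\Pr[H'_n\cap C_\infty=\emptyset]}{\Pr[\EE^1_n]}.\]
By Lemma~\ref{lem:H_n} we have $\Pr[\EE^1_n]\geq \tfrac12$ for $n$ large, so it suffices to prove the \emph{unconditional} bound $\Pr[H'_n\cap C_\infty=\emptyset]\leq n^{-c\beta^{d-1}}$ for some constant $c>0$, and then to pick $\beta_0$ with $c\beta_0^{d-1}\geq 2$.

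To estimate the probability that the small cube misses the infinite cluster, I would pass to the rescaled picture (multiply by $n^{1/d}$), in which $\X_n$ becomes a unit-density PPP, the radius becomes the constant $\gamma=n^{1/d}r_n>\gamma^*$, and $H'_n$ becomes a cube $Q_n$ of side $L_n=n^{1/d}h'_n=\beta\log^{1/(d-1)}n$. The decisive point is the choice of exponent $1/(d-1)$: it makes $L_n^{\,d-1}=\beta^{d-1}\log n$, so a bound that decays like $\exp(-cL_n^{d-1})$ becomes exactly polynomial, $n^{-c\beta^{d-1}}$. Since the regime $\gamma>\gamma^*$ is supercritical (Theorem~\ref{thm:radius}), I would run the standard renormalization/block argument for continuum percolation (as in~\cite{MeeRoy94,FraMee08}): tile space by cubes of a large constant side $a$, declare a cube \emph{good} when its local point configuration forms a crossing cluster that links up with the good cubes adjacent to it, and choose $a$ so large that good cubes stochastically dominate a supercritical site percolation and that the good cubes of the infinite block-cluster genuinely carry points of $C_\infty$.

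The core estimate is then a Peierls-type surface argument: for $Q_n$ to avoid the infinite good-cluster there must be a ``blocking'' layer of bad cubes separating $Q_n$ from infinity, and any such separating surface must contain at least order $(L_n/a)^{d-1}$ cubes. Summing over all such surfaces (each occurring with probability decaying geometrically in its size, because the site parameter can be pushed arbitrarily close to $1$) yields $\Pr[H'_n\cap C_\infty=\emptyset]\leq \exp(-c L_n^{d-1})=n^{-c\beta^{d-1}}$, which is exactly what is needed; alternatively the same bound can be imported directly from the large-deviation estimates of~\cite{PenPiz96} already used in Lemma~\ref{lem:H_n}. The main obstacle is this last step: making rigorous that connectivity to the infinite block-cluster certifies a genuine $C_\infty$-point \emph{inside} $H'_n$ rather than merely adjacent to it, and controlling the finite-range dependence between neighboring good-cube events so the surface sum converges. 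Both are resolved by taking the block side $a$ sufficiently large, which affects only the constant $c$ and hence the threshold $\beta_0$.
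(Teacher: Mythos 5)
Your proposal is correct, and its skeleton matches the paper's: both arguments hinge on rescaling $H'_n$ and on the observation that the exponent $1/(d-1)$ in $h'_n$ turns a surface-order estimate $\exp(-c\,L^{d-1})$ into a polynomial $n^{-c\beta^{d-1}}$, which is then pushed below $n^{-1}$ by taking $\beta$ large. The difference is in how the core percolation estimate is obtained. The paper rescales $H'_n$ to the unit cube with effective density $n'=\beta^d\log^{d/(d-1)}n$ (equivalent to your unit-density rescaling, since your $L_n$ satisfies $L_n^d=n'$ and hence $L_n^{d-1}={n'}^{(d-1)/d}$) and then simply cites Theorem~1 of \cite{PenPiz96}, which yields the stronger statement
\[
\Pr\left[C_\infty\cap H'_n=\Theta(n')\right]\geq 1-\exp\left(-\beta_0^{-(d-1)}{n'}^{\frac{d-1}{d}}\right),
\]
a bound the paper itself calls ``overkill'' since only non-emptiness is needed. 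You instead sketch a self-contained renormalization-plus-Peierls proof of the non-emptiness bound, keeping the \cite{PenPiz96} import only as a fallback; this is more work but standard and sound, and the two caveats you flag (that membership in the infinite block-cluster certifies a genuine $C_\infty$ point \emph{inside} the box, and the finite-range dependence of good-cube events) are exactly the ones resolved by the usual crossing-cluster definition of ``good'' together with stochastic domination by high-parameter independent site percolation. One respect in which your write-up is tighter than the paper's: the paper derives the unconditional bound $\Pr[C_\infty\cap H'_n\neq\emptyset]\geq 1-n^{-1}$ and then asserts the conditional statement $\Pr[\EE^2_n\mid\EE^1_n]\geq 1-n^{-1}$ without comment, whereas you perform the division by $\Pr[\EE^1_n]$ explicitly and absorb the loss by choosing $\beta_0$ so that the exponent exceeds $2$. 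A minor inaccuracy: the estimate you would import from \cite{PenPiz96} is its Theorem~1, not the results (Lemma~8 and Theorem~2) already used in Lemma~\ref{lem:H_n}.
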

	\begin{proof}
      Define $G_n=\G(\X_n,r_n)\cap H'_n$, $n'=E[|\X_n\cap H'_n|]$ and
      observe that $n'= n\cdot |H'_n|=\beta^d \log^{d/(d-1)}n$.
		
      We treat $G_n$ as a subset of $\dR^d$ in order to apply a
      rescaling argument.  Observe that (scalar) multiplication of
      every point of $H'_n$ with $1/h'_n$ yields a translation of
      $[0,1]^d$. We will use the superscript $1/h'_n$ to describe this
      rescaling to a given object. For instance, applying the same
      transformation on $G_n$ yields the graph $G_n^{1/h'_n}$, which
      has the same topology as $G_n$.  Denote by $x_{\ell}\in H'_n$
      the (lexicographically) smallest point of $H'_n$, i.e.,
      $x_{\ell}=(1/\log n,\ldots,1/\log n)$.  Notice that
      \begin{align*}G_n^{1/h'_n}-x_{\ell}^{1/h'_n} &=\G(\X^{1/h'_n}_n\cap
      [0,1]^d,r_n/h'_n) \\ &=\G(\X_{n'}\cap [0,1]^d,r_{n'}),\end{align*}
      where the minus sign in the left-hand side represents a
      translation by a vector. This implies that $G_n$, which is
      defined over $H'_n$ behaves as $\G(\X_{n'}\cap
      [0,1]^d;r_{n'})$.
      This allows to leverage Theorem~1 from~\cite{PenPiz96}, which
      bounds the number of vertices from the unbounded component. In
      particular, there exists $\beta_0>0$ such that
		\begin{align*}
		\Pr[C_\infty\cap H'_n=\Theta(n')] & \geq                                 1-\exp\left(-\beta_0^{-(d-1)}{n'}^{\frac{d-1}{d}}\right)
		\\ & = 1-\exp\left(-\beta_0^{-(d-1)}\beta^{d-1}\log n\right)
		\\ & \geq 1-\exp(-\log n)= 1-n^{-1}.
		\end{align*}
		While this is an overkill for our purpose, it does the job in proving that 
		$\Pr[C_\infty\cap H'_n\neq \emptyset]\geq 1-n^{-1}$. As we assume that $\EE^1_n$ holds (Lemma~\ref{lem:H_n}), it follows
		that $C_n\cap H'_n\neq \emptyset$ holds with probability at
		least $1-n^{-1}$. 
	\end{proof}
	
	The following statement allows to bound the graph distance between two connected vertices. We endow every edge of the
	graph with a length attribute that represents the Euclidean distance
	between the edges' endpoints. For every two vertices $x,x'$ of $\G$,
	$\textup{dist}(\G,x,x')$ denotes the length of the shortest (weighted) path on $\G$ between the two vertices.
	
	\begin{theorem}{\cite[Theorem~3]{FriSauSta13}}\label{thm:stretch}
		Let $r_nr^*_n$. There exists a constant
		$\xi\geq 1$, independent of $n$, such that $\Pr[\EE^3_n]=1-O(n^{-1})$,
		where the event~$\EE^3_n$ is defined as follows:
		For any two vertices $x, x'$ in the same connected
		component of $\G(\X_n\cap [0,1]^d;r_n)$, with
		$ \|x - x'\| = \omega(r_n)$, it holds that
		$\dist(\G_n,x,x') \leq \xi\|x-x'\|$.
	\end{theorem}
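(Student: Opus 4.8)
The plan is to prove this by a renormalization (coarse-graining) argument that reduces the continuum percolation graph $\G(\X_n;r_n)$ to a highly supercritical dependent site-percolation process on a lattice of blocks, and then, for any admissible pair $x,x'$, to construct a graph path that follows the straight segment $xx'$ block by block. Since $r_n>r^*_n=\gamma^* n^{-1/d}$, tiling space by cubes of side $L r_n$ for a large constant $L$ places a Poisson mean of $\Theta(L^d)$ points in each cube while keeping us strictly above the percolation threshold; this is precisely the regime in which a block-based argument can convert ``being connected'' into an essentially deterministic geometric statement about distances.

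First I would fix $L$ and declare a block \emph{good} if a suitable local connectivity event holds: the block and each of its lattice neighbours contain a cluster of $\X_n$ that crosses the block in every coordinate direction and links up through the shared faces, so that any two points lying in adjacent good blocks are joined in $\G(\X_n;r_n)$ by a path of weighted length $O(L r_n)$. Using supercriticality together with FKG and standard large-deviation bounds on the Poisson counts, $\Pr[\text{block good}]\to 1$ as $L\to\infty$. Because goodness depends only on points in a bounded neighbourhood of the block, the good-block field is finite-range dependent and, by the Liggett--Schonmann--Stacey domination theorem, stochastically dominates an i.i.d.\ site-percolation process whose parameter $p(L)$ can be pushed arbitrarily close to $1$ by enlarging $L$.

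The core of the argument is then the \emph{detour} construction. Given $x,x'$ in the same component with $\|x-x'\|=\omega(r_n)$, consider the $m=\Theta(\|x-x'\|/(L r_n))$ blocks met by the segment $xx'$. Along good blocks the path follows the block backbone, accruing $O(L r_n)$ per block; each maximal run of bad blocks is bypassed along the boundary of the bad cluster containing it, at a cost proportional to that cluster's diameter. In the dominating, highly supercritical field bad clusters have exponentially small diameter, so a large-deviation estimate on the number of bad blocks encountered bounds the total detour overhead by a constant fraction of $\|x-x'\|$, yielding $\dist(\G_n,x,x')\le\xi\|x-x'\|$ for an absolute $\xi\ge 1$. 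One should also treat the case in which $x,x'$ lie in a finite cluster rather than the giant one; such a cluster must have Euclidean diameter at least $\|x-x'\|=\omega(r_n)$, an event whose probability again decays exponentially in $\|x-x'\|/r_n$ (compare the diameter bound invoked in Lemma~\ref{lem:H_n}).

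The step I expect to be the main obstacle is upgrading this per-pair estimate to the uniform bound $\Pr[\EE^3_n]=1-O(n^{-1})$. The construction gives, for a fixed pair at distance $D=\|x-x'\|$, a failure probability of order $\exp(-cD/r_n)$, and the natural route is a first-moment bound: the expected number of violating pairs is $\sim n^2\int D^{d-1}\exp(-cD/r_n)\,\mathrm{d}D$ over the admissible range, which, using $n r_n^d=\Theta(1)$, one estimates and then feeds into Markov's inequality. The delicate point is calibration: this first moment is comfortably $O(n^{-1})$ once the lower distance threshold is of order $r_n\log n$, so pushing the guarantee down to the stated $\omega(r_n)$ scale requires summing the first moment dyadically across distance scales and handling the short-range finite-cluster contributions separately, all while correctly accounting for the blocks that meet $\partial([0,1]^d)$. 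The single indispensable ingredient underlying the whole scheme is the exponential tail on the chemical distance, which is exactly what the block argument of the first two steps is designed to furnish.
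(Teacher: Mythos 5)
First, a point of reference: the paper contains no proof of this statement at all --- it is imported verbatim as \cite[Theorem~3]{FriSauSta13} and used as a black box --- so your proposal can only be measured against the cited literature. Judged that way, your pipeline is the standard and correct one: coarse-graining at scale $Lr_n$ (legitimate precisely because $nr_n^d=\Theta(1)$ keeps the rescaled model supercritical), good-block events, Liggett--Schonmann--Stacey domination by a highly supercritical i.i.d.\ site process, and detours around exponentially small bad clusters. This machinery does deliver the per-pair estimate
\[
\Pr\left[x\leftrightarrow x' \textup{ and } \dist(\G_n,x,x')>\xi\|x-x'\|\right]\leq e^{-c\|x-x'\|/r_n},
\]
and your calibration that a union bound over all pairs is $O(n^{-1})$ once $\|x-x'\|\geq C r_n\log n$ is also correct.

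The genuine gap is exactly the step you defer to ``dyadic summation'': upgrading uniformity from the scale $r_n\log n$ down to the stated $\omega(r_n)$. No reorganization of the summation can do this, because the first moment itself diverges there: at a concrete scale $D_n=r_n\log\log n$ the expected number of violating pairs is of order $n^2D_n^d\,e^{-cD_n/r_n}=\gamma^d n(\log\log n)^d(\log n)^{-c}\to\infty$. Worse, the uniform statement at that scale is not merely out of reach of a first-moment argument --- it is false, so no proof can close the gap. Indeed, place a vacant ``wall'' (a $(d-1)$-dimensional disk of radius $\xi D_n$ and thickness $r_n$) across the midpoint of a prospective pair: its Poisson cost is $\exp\left(-c'n(\xi D_n)^{d-1}r_n\right)=\exp\left(-O((\log\log n)^{d-1})\right)=n^{-o(1)}$, it forces any connecting path to have length at least $2(\xi D_n-r_n)>\xi D_n$, and both endpoints still join the giant component around the wall with constant probability; since $[0,1]^d$ contains $\Theta\left(n/(\log\log n)^{d}\right)=n^{1-o(1)}$ disjoint cells in which this construction can be attempted independently, violating pairs exist \aas Consequently the provable distance threshold is polylogarithmic --- $\Omega(r_n\log n)$ from your linear-rate bound, or $\Omega\left(r_n\log^{1/(d-1)}n\right)$ with sharper surface-cost estimates, which is not coincidentally the scale of $r^{st}_n$ and of the box in Lemma~\ref{lem:H'_n}. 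Two mitigating remarks: the $\omega(r_n)$ condition is a defect of the theorem as transcribed in the paper as much as of your attempt, and it is harmless downstream, because the proof of Theorem~\ref{thm:prm} invokes the stretch bound only for pairs at constant separation (Claims~\ref{clm:free} and~\ref{clm:short}), a regime your argument already covers completely.
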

	
	\section{Proof of Theorem~\ref{thm:prm}}\label{sec:proof}
    Proofs for all the three settings of the main theorem are given individually in the subsections below.

    \subsection{Case i}
Here we provide proof for the first (and easy) part of the theorem,
which states that \prm fails when $r_n<\gamma^* n^{-1/d}$, even with
$r^{st}_n=\infty$. We mention that our proof is a simpler and shorter
version of a similar proof that was given in~\cite{KF11} for a
slightly different setting.

We show below that in the subcritical regime, i.e., $r_n<r^*_n$,
the graph breaks into many small connected components. In particular,
the probability of having the largest connected component  of
size $m$ decays exponentially in $m$. Denote by $L(\G)$ the size of
the largest connected component in $\G$.
	
\begin{proposition}\label{prop:log}
  Let $r_n<r^*_n$. Then $L(\G_n)=O(\log n)$ \aas
\end{proposition}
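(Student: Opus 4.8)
The plan is to prove this statement about the subcritical regime of continuum percolation, where we need to show that the largest connected component has size $O(\log n)$ asymptotically almost surely. Since we are working with the RGG $\G(\X_n; r_n)$ where $\X_n$ is a PPP of density $n$ and $r_n < r^*_n = \gamma^* n^{-1/d}$, the natural first step is to reduce to a scale-invariant setting. By the same rescaling argument used in Lemma~\ref{lem:H'_n}, I would multiply all points by $n^{1/d}$ (equivalently view the PPP of density $n$ on $[0,1]^d$ as a PPP of density $1$ on $[0,n^{1/d}]^d$), so that the connection radius becomes a constant $r = n^{1/d} r_n < \gamma^*$ strictly below the critical radius for density-$1$ percolation. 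This transformation preserves the graph topology, hence preserves component sizes, so it suffices to bound the largest component in this normalized picture.

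The core of the argument is the standard subcritical exponential decay of cluster sizes. First I would invoke the classical continuum percolation fact (see, e.g., \cite{Gri99,MeeRoy94,FraMee08}) that strictly below the critical radius there exist constants $c_1, c_2 > 0$ such that for the component $C_o$ containing a fixed point $o$, one has $\Pr[|C_o| \geq m] \leq c_1 e^{-c_2 m}$ for all $m$; equivalently the expected cluster size is finite and the tail decays exponentially. This is the analogue in the continuum setting of the subcritical exponential tail for Bernoulli percolation. The key consequence is that any single point lies in a large cluster only with exponentially small probability.

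Next I would translate this pointwise tail bound into a statement about the largest component in the bounded domain $[0,n^{1/d}]^d$, which contains $\Theta(n)$ points in expectation. The strategy is a union bound: if the largest component had size exceeding $m = K \log n$ for a suitable constant $K$, then in particular some vertex would lie in a component of size at least $K \log n$. Summing the per-vertex tail bound $c_1 e^{-c_2 K \log n} = c_1 n^{-c_2 K}$ over the $\Theta(n)$ vertices, and choosing $K$ large enough that $c_2 K > 1$, makes the total probability $o(1)$. Some care is needed here because the number of points is itself a Poisson random variable, so I would either condition on the number of points (which concentrates sharply around $n$ by Poisson tail bounds) or apply the Mecke/Campbell formula to handle the expected number of points in large clusters directly, yielding $E[\#\{x \in \X_n : |C_x| \geq K\log n\}] = \Theta(n) \cdot c_1 n^{-c_2 K} \to 0$, whence by Markov's inequality no such vertex exists \aas.

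The main obstacle I anticipate is making the union-bound step fully rigorous across a continuum of possible cluster locations rather than a discrete lattice: unlike discrete percolation, there is no finite vertex set fixed in advance, and clusters can be positioned arbitrarily. The cleanest way around this is the Palm-calculus / Mecke-equation approach described above, which computes the expected number of points belonging to oversized clusters directly and sidesteps the need to enumerate locations; this converts the heuristic union bound into an honest first-moment computation. A secondary technical point is ensuring that the exponential-tail constant $c_2$ from the infinite-domain statement transfers to the bounded domain $[0,n^{1/d}]^d$, but since restricting to a bounded region can only shrink clusters, the infinite-domain tail bound dominates the bounded-domain one, so this causes no difficulty.
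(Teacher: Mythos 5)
Your proposal is correct and takes essentially the paper's own route: the paper obtains your two main ingredients in one stroke by citing Penrose's Proposition~11.2, which already packages the subcritical exponential tail together with the union bound over the $\Theta(n)$ points into $\Pr[L(\G_n)\geq m]\leq n e^{-m\zeta}$ (adapted from binomial to Poisson input), and then sets $m=\alpha\log n$ with $\alpha>1/\zeta$ --- exactly your final step. Your reconstruction of that bound from the infinite-volume exponential decay via rescaling, the Mecke/Campbell formula, and Markov's inequality is a sound, self-contained substitute, and working natively with the PPP even sidesteps the de-Poissonization factor $\exp(-\mu n)$ that the paper notes must be stripped out of Penrose's binomial-process statement.
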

\begin{proof}
  First, we start by stating that there exist a constant $\zeta>0$ and
  an integer $m_0$ such that for all $m\geq m_0$ it holds that
  $\Pr[L(\G_n)\geq m]\leq n e^{-m\zeta}$.  This is a simplified
  version of Proposition~11.2 in~\cite{Pen03}, which is given for $n$
  uniformly sampled points, to the case of PPP, which we have here. In
  particular, the original proof relies on a relation between these
  two distributions, which transforms uniform sampling into a PPP and
  induces an additional factor that is not necessary in our
  setting. In particular, the factor ``$\textup{exp}(-\mu n)$'' which
  appears in Equation~11.1 in~\cite{Pen03} should be eliminated.
		
  To conclude the proof, we set $m=\alpha \cdot \log n$, where
  $\alpha >1/\zeta$, similarly to the proof of Theorem~11.1,
  Equation~11.4 in~\cite{Pen03}. Observe that
  $ne^{-m\zeta}\rightarrow 0$ as $n$ tends to $\infty$. 
\end{proof}

Now, consider the configuration space $C=[0,1]^2$ depicted in
Figure~\ref{fig:incomplete} (similar examples can be devised for any
$d\geq 2$). The white and blue regions represent the free
space. Observe that any path connecting $s$ to $t$ must go through the
blue region, whose width is greater than $1/2$. Also note that any
point contained in the blue region cannot be connected by an edge from
$s$ or $t$, which deems the large value of $r^{st}_n$ as irrelevant.
\begin{figure}[h]
  \centering
  \includegraphics[width=0.4\columnwidth]{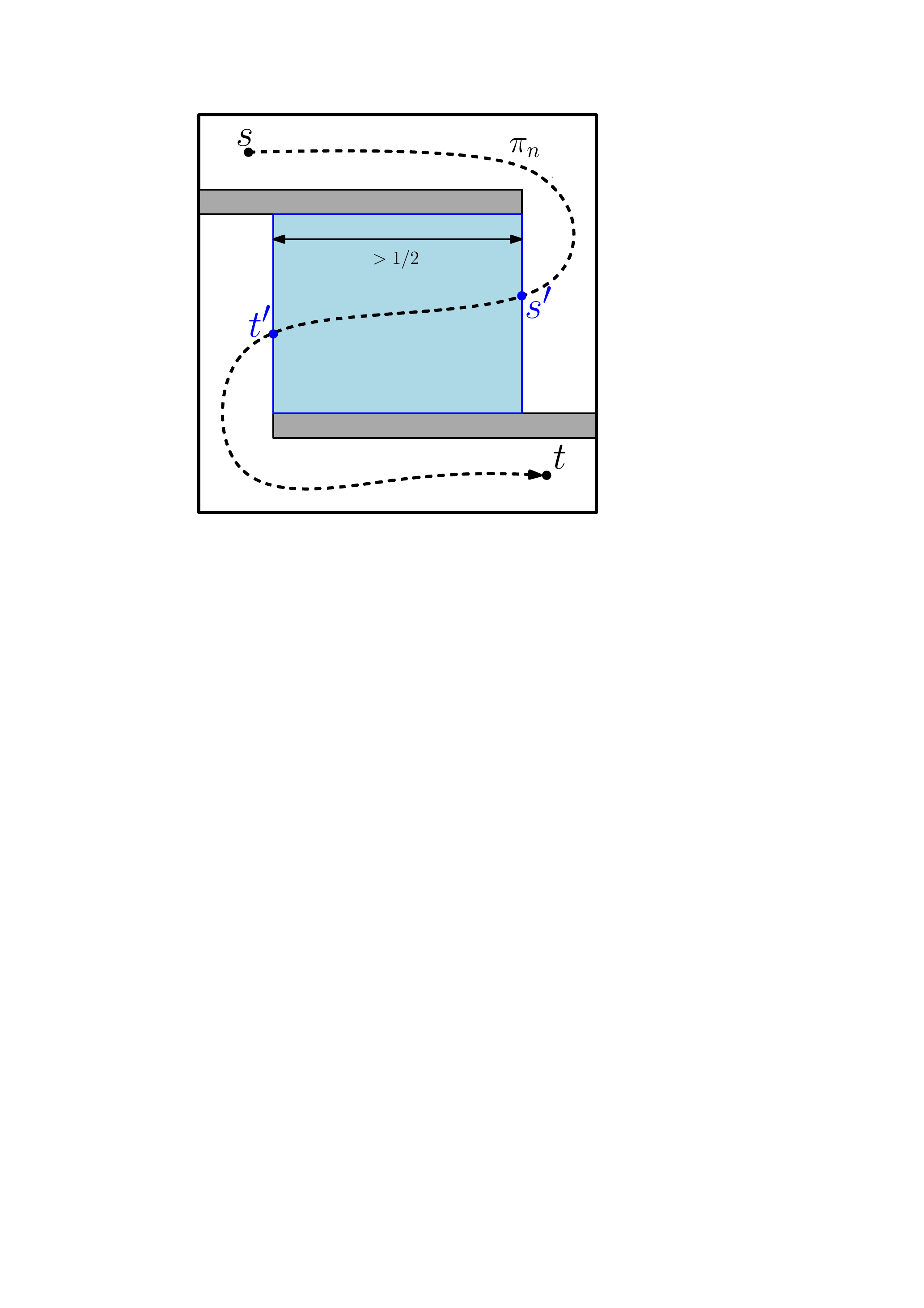}
  \caption{Scenario for the proof of
    Theorem~\ref{thm:prm}.i.\label{fig:incomplete}}
\end{figure}
	
Suppose that $\G_n=\G(\X_n,r_n)$ contains a path $\pi_n$ (black dashed
curve) connecting~$s$ and~$t$, and denote by $\pi'_n\subset \pi_n$ the
subpath that is contained in the blue region. Also, denote by $s'$ and
$t'$ the first and last points along $\pi'_n$, respectively. Obviously
$1/2\leq \|s'-t'\|\leq \len(\pi_n)$. Then, the number of edges of
$\G_n$, which induce $\pi'_n$, is at least
$\|s'-t'\| / r_n = \Omega(n^{1/d})$.  However, this is in
contradiction with the fact that every connected component of $\G_n$
is of logarithmic size in $n$ (Proposition~\ref{prop:log}).

    \subsection{Case ii.1}
	We provide a full proof for the positive setting with length cost.
    The proof for the bottleneck case, which appears later on, is very similar to the length
    case.

	Suppose that
    $r_n>r^*_n,r^{st}_n=\frac{\beta\log^{1/(d-1)}n}{n^{1/d}}$, $\beta>\beta_0$.
    For simplicity, we set $r_n=\gamma n^{-1/d}$, where
    $\gamma>\gamma^*$.  By Lemma~\ref{lem:psi} and
    Theorem~\ref{thm:unique}, $\G(\X_n; r_n)$ contains a unique
    infinite connected component $C_\infty$. Recall that $C_n$ denotes
    the largest connected component of $C_\infty\cap [0,1]^d$. Also
    note that $r^{st}_n= h'_n$, where $h'_n$ is defined in
    Lemma~\ref{lem:H'_n}.
	
	Recall that $\lenopt$ denotes the robust optimum, with respect to
	path length (Definition~\ref{def:robust_opt_len}). Fix
	$\eps>0$. By definition, there exists a robust path
	$\pi_{\eps}\in \Pi_{s,t}^\F$ and $\delta>0$ such that
	$\len(\pi_{\eps})\leq (1+\eps)\lenopt$ and
	$\B_{\delta}(\pi_\eps)\subset \F$. See illustration in Figure~\ref{fig:complete}.
	
	\begin{figure}
		\centering
		\includegraphics[width=0.5\columnwidth]{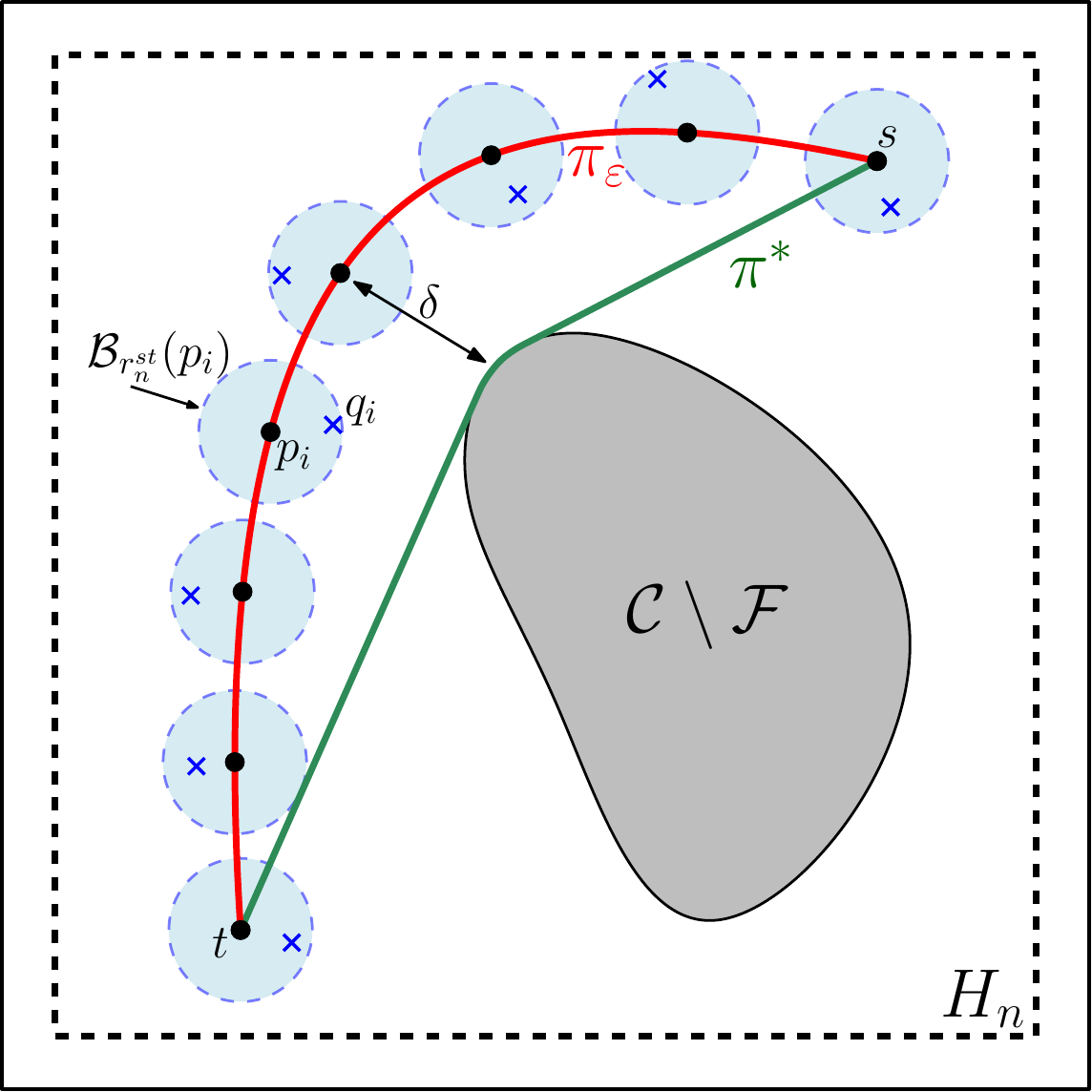}
		\caption{Illustration for the proof of
			Theorem~\ref{thm:prm}.ii. The outer cube represents the configuration space, whereas the dashed cube is $H_n$. The gray area represents the forbidden regions. The robust feasible path $\pi^*$ and $\pi_\eps$ are depicted in green and red, respectively. Observe that every point along $\pi_\eps$ is at least $\delta$ away from $\C\setminus \F$.  $p_1=s,p_2,\ldots,p_{k-1},p_k=t$ are depicted as black bullets, where $k=8$. $\B_{r^{st}_n}(p_i)$ are depicted as blue circles, while the blue cross in each such circle represents $q_i$. \label{fig:complete}}
	\end{figure}
	
	We now define a sequence of $k$ points $p_1,\ldots,p_k$ along
	$\pi_\eps$ that are separated by exactly $\delta/2\xi$ units,
	where $\xi$ is as defined in Theorem~\ref{thm:stretch}. In
	particular, define $k=\left\lceil\len(\pi_{\eps})\cdot 2\xi / \delta\right\rceil$,
	set $p_1=s,p_k=t$, and assign $p_i$ along $\pi_\eps$, such that
	$\len\left(\pi_\eps^{i-1,i}\right)=\delta/2\xi$, where
	$\pi_\eps^{i-1,i}$ represents the subpath of $\pi_\eps$ starting
	at $p_{i-1}$ and ending at $p_i$. Notice that $k$ is finite.
	
	\begin{claim}\label{clm:close_points}
		Denote by $\EE^4_n$ the event that for all $i\in [k]$ there
		exists $q_i\in C_n$ such that $q_i\in \B_{r^{st}_n}(p_i)$ and
		$q_i\in C_n$. Then
		$\Pr[\EE^4_n|\EE^1_n]\geq 1-k\Pr[\overline{\EE^2_n}|\EE^1_n]$ (see
		definition of $\EE^1_n,\EE^2_n$ in Lemma~\ref{lem:H_n} and Lemma~\ref{lem:H'_n}, respectively).
	\end{claim}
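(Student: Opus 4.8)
The plan is to derive $\EE^4_n$ from $k$ separate applications of Lemma~\ref{lem:H'_n}, one per intermediate point $p_i$, and then collect the failure probabilities with a single union bound. Concretely, for each $i\in[k]$ I would introduce a hypercube $H'_{n,i}$ of side length $h'_n$, obtained by translating the hypercube $H'_n$ of Lemma~\ref{lem:H'_n} so that it lies inside $\B_{r^{st}_n}(p_i)$. Writing $\EE^2_{n,i}$ for the event $H'_{n,i}\cap C_n\neq\emptyset$, any witness point in this intersection automatically lies in $\B_{r^{st}_n}(p_i)\cap C_n$ and can serve as $q_i$. Hence the failure event $\overline{\EE^4_n}$ is contained in $\bigcup_{i=1}^k\overline{\EE^2_{n,i}}$, and a union bound (everything conditioned on $\EE^1_n$) gives $\Pr[\overline{\EE^4_n}\mid\EE^1_n]\le\sum_{i=1}^k\Pr[\overline{\EE^2_{n,i}}\mid\EE^1_n]$, which is exactly the asserted inequality once each term is identified with $\Pr[\overline{\EE^2_n}\mid\EE^1_n]$.

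Two ingredients make this reduction legitimate. First, each $p_i$ lies in the interior of the free space with fixed clearance: since $\B_\delta(\pi_\eps)\subset\F$ and the $p_i$ sit on $\pi_\eps$, we have $\B_\delta(p_i)\subset\F$ and in particular $p_i$ is bounded away from $\partial([0,1]^d)$ by a constant independent of $n$. Because $h'_n=r^{st}_n\to 0$ as $n\to\infty$, for all large $n$ the translated cube $H'_{n,i}$ can be placed inside $\B_{r^{st}_n}(p_i)$ and simultaneously inside $H_n$ (recall that $H_n$ exhausts the interior of $[0,1]^d$). This placement is essential because on the event $\EE^1_n$ we have $C_\infty\cap H_n\subset C_n$, so a point of the infinite component falling in $H'_{n,i}\subset H_n$ is guaranteed to belong to $C_n$, which is precisely what $\EE^2_{n,i}$ demands. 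Second, since $\X_n$ is a stationary (translation-invariant) Poisson point process, the probability appearing in Lemma~\ref{lem:H'_n} is unchanged under the translation carrying $H'_n$ to $H'_{n,i}$; thus $\Pr[\overline{\EE^2_{n,i}}\mid\EE^1_n]=\Pr[\overline{\EE^2_n}\mid\EE^1_n]$ for every $i$, and the sum collapses to $k\,\Pr[\overline{\EE^2_n}\mid\EE^1_n]$.

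The only genuinely geometric step, and the place I would be most careful, is the containment $H'_{n,i}\subset\B_{r^{st}_n}(p_i)$. A cube of side $h'_n$ has diameter $\sqrt{d}\,h'_n$, which exceeds the radius $r^{st}_n=h'_n$, so one cannot simply centre the cube at $p_i$; instead I would inscribe in the ball a cube of the appropriate (slightly smaller) side and absorb the resulting $\sqrt{d}$ factor into the constant $\beta$, invoking Lemma~\ref{lem:H'_n} with this adjusted constant (still $\ge\beta_0$ for $\beta$ taken large enough). This is a constant-factor bookkeeping matter rather than a real obstacle. The substantive content of the claim is carried entirely by Lemma~\ref{lem:H'_n} together with the stationarity of the PPP, and the role of conditioning on $\EE^1_n$ throughout is exactly to upgrade membership in $C_\infty$ to membership in $C_n$ uniformly over all $k$ cubes.
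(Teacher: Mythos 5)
Your proposal is correct and follows essentially the same route as the paper's proof: place a copy of the hypercube from Lemma~\ref{lem:H'_n} near each $p_i$ (inside $H_n$ and inside $\B_{r^{st}_n}(p_i)$, using that each $p_i$ has clearance $\delta$ from $\C\setminus\F$ and hence from the boundary of $[0,1]^d$) and apply a union bound over the $k$ points. In fact, you are more careful than the paper on one point: the paper centres a cube of side $h'_n$ at $p_i$ and asserts $H'_n(p_i)\subset\B_{r^{st}_n}(p_i)$, which fails for $d>4$ because the cube's half-diagonal $\sqrt{d}\,h'_n/2$ then exceeds $r^{st}_n=h'_n$; your inscribed-cube adjustment, absorbing the $\sqrt{d}$ factor into the constant $\beta$, repairs exactly this minor flaw.
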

	\begin{proof}
		Define
		$H'_n(x)\subset \dR^d$ to represent a $d$-dimensional (axis-aligned)
		hypercube of side length $h'_n$ that is centered in $x\in
		\dR^d$. Formally, $H'_n(x)=x+h'_n
		\cdot\left[-\tfrac{1}{2},\tfrac{1}{2}\right]^d$.
		Observe that $H'_n(p_i)\subset H_n$ for $n$ large enough. Also note that $H'_n(p_i)\subset\B_{r^{st}_n}(p_i)$. Thus, the result
		follows from the union bound. 
	\end{proof}
	
	Suppose that $\EE^1_n,\EE^4_n$ are satisfied.  Let
	$q_1,\ldots,q_k\in C_n$ be the points obtained from
	Claim~\ref{clm:close_points}. These points reside in a single
	connected component of $\G_n$. Define the path
	\[\pi_n:=s\rightarrow q_1 \leadsto q_2\leadsto\ldots\leadsto q_k\rightarrow
	t,\]
	where $s\rightarrow q_1$ represents a straight-line path from $s$
	to $q_1$, and $q_i\leadsto q_{i+1}$ represents the shortest path
	from $q_i$ to $q_{i+1}$ in $\G_n$. The next claim states that if
	in addition $\EE^3_n$ is satisfied (Theorem~\ref{thm:stretch}),
	then $\pi_n$ is also collision free.
	
	\begin{claim}\label{clm:free}
		Suppose that $\EE^1_n,\EE^3_n,\EE^4_n$ are satisfied. Then
		$\pi_n\in \Pi_{s,t}^\F$ is a path in $\P_n$ (with probability~$1$).
	\end{claim}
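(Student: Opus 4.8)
The plan is to verify the two requirements of the claim separately: that the image of $\pi_n$ lies entirely in $\F$, and that each of its constituent edges is present in $\P_n$. First I would fix $n$ large enough that $r^{st}_n<\delta$, which is possible since $r^{st}_n=\beta\log^{1/(d-1)}n/n^{1/d}\to 0$. I would then treat the three kinds of segments comprising $\pi_n$ in turn: the two straight-line end segments $s\rightarrow q_1$ and $q_k\rightarrow t$, and the graph segments $q_i\leadsto q_{i+1}$.

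For the end segments the key observation is that $q_1\in\B_{r^{st}_n}(p_1)=\B_{r^{st}_n}(s)$ (by $\EE^4_n$), so for large $n$ the entire ball $\B_{r^{st}_n}(s)$ is contained in $\B_{\delta}(s)\subseteq\B_{\delta}(\pi_\eps)\subseteq\F$. Since balls are convex, the straight segment $sq_1$ lies in this ball and hence in $\F$; the same argument handles $q_kt$ using $q_k\in\B_{r^{st}_n}(t)$. This simultaneously certifies that $(s,q_1)$ and $(q_k,t)$ are legal supplementary edges of $\P_n$ in the sense of Definition~\ref{def:prm}, because $q_1,q_k\in\X_n$, they lie in the respective $r^{st}_n$-balls, and the connecting segments are collision free.

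The heart of the argument is the graph segments, which I expect to be the main obstacle. I would first bound the Euclidean separation of consecutive anchors via the triangle inequality through $p_i,p_{i+1}$: since $\len(\pi_\eps^{i-1,i})=\delta/2\xi$ gives $\|p_i-p_{i+1}\|\leq\delta/2\xi$ and $\|q_j-p_j\|\leq r^{st}_n$, we obtain $\|q_i-q_{i+1}\|\leq\delta/2\xi+2r^{st}_n$. Invoking $\EE^3_n$ (Theorem~\ref{thm:stretch}) then bounds the length of the shortest path $q_i\leadsto q_{i+1}$ in $\G_n$ by $\xi\|q_i-q_{i+1}\|\leq\delta/2+2\xi r^{st}_n$. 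Because the graph length of a path upper-bounds the Euclidean distance from $q_i$ to any point on it, every point $z$ on $q_i\leadsto q_{i+1}$ satisfies $\|z-p_i\|\leq\|z-q_i\|+\|q_i-p_i\|\leq\delta/2+(2\xi+1)r^{st}_n$, which is below $\delta$ for $n$ large. Hence $z\in\B_{\delta}(p_i)\subseteq\B_{\delta}(\pi_\eps)\subseteq\F$, so the segment is collision free; moreover, since $q_i,q_{i+1}\in C_n$ and $C_n$ is a connected component of $\G_n$ (any $\G_n$-neighbor of a $C_n$ vertex is itself in $C_\infty\cap[0,1]^d$ and hence in $C_n$), the whole shortest path stays in $C_n$, and being collision free it is in fact a path of $\G(\X_n\cap\F;r_n)\subseteq\P_n$. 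The spacing $\delta/2\xi$ is engineered precisely so that the stretch factor $\xi$ leaves the slack $\delta/2$ to absorb the lower-order $r^{st}_n$ error terms.

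The one technical point requiring care is the applicability of Theorem~\ref{thm:stretch}, whose hypothesis is $\|q_i-q_{i+1}\|=\omega(r_n)$. Since $\pi_\eps$ and the anchors $p_1,\ldots,p_k$ are fixed independently of $n$, each $\|p_i-p_{i+1}\|$ is a fixed constant; whenever $p_i\neq p_{i+1}$ we have $\|q_i-q_{i+1}\|\geq\|p_i-p_{i+1}\|-2r^{st}_n$ bounded below by a positive constant, which is indeed $\omega(r_n)$ because $r_n\to 0$. The only degenerate case $p_i=p_{i+1}$ I would rule out by taking $\pi_\eps$ to be a simple (injective) path without loss of generality, or by an infinitesimal adjustment of the anchor spacing. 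Assembling the three parts, the concatenation $\pi_n$ is a continuous curve whose image lies in $\F$ and all of whose edges belong to $\P_n$; that is, $\pi_n\in\Pi_{s,t}^\F$ is a path in $\P_n$, and since $\EE^1_n,\EE^3_n,\EE^4_n$ are assumed to hold the conclusion is deterministic (probability $1$) under that conditioning.
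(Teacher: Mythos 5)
Your proof is correct and follows essentially the same route as the paper's: the end segments are handled via the \prm definition together with $r^{st}_n<\delta$, and each graph segment $q_i\leadsto q_{i+1}$ is bounded by the triangle inequality through $p_i,p_{i+1}$ plus the stretch event $\EE^3_n$, so that every point of $\pi_n$ lies in $\B_{\delta}(\pi_\eps)\subset \F$ and all edges are therefore present in $\P_n$. The only difference is added rigor on your side: you explicitly verify the $\|q_i-q_{i+1}\|=\omega(r_n)$ hypothesis of Theorem~\ref{thm:stretch} (ruling out $p_i=p_{i+1}$ by taking $\pi_\eps$ simple) and check that the shortest path stays inside $C_n$, both of which the paper's proof leaves implicit.
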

	\begin{proof}
      First, observe that the straight-line paths
      $s\rightarrow q_1,q_k\rightarrow t$ are contained in $\P_n$ due
      to the definition of \prm and the fact that $r_n^{st}<\delta$.
      Let us consider a specific subpath $q_i\leadsto q_{i+1}$. Recall
      from Claim~\ref{clm:close_points} and by definition of
      $p_i,p_{i+1}$ that
      $\dist(\G_n,q_i,q_{i+1}) \leq 2r_n^{st}+\xi \|p_{i+1}-p_i\| =
      o(1)+ \delta/2$, which is bounded by $\delta$. Thus, for any point $q'_i$ along $q_i\leadsto q_{i+1}$ it
      holds that $\|q'_i-p_i\|<\delta, \|q'_i-p_{i+1}\|<\delta$.
      Thus, $\Im(\pi_n)\subset \B_{\delta}(\pi_\eps)\subset \F$. 
	\end{proof}
	
	The next claim states that the length of $\pi_n$ is a constant
	factor from the optimum.
	\begin{claim}\label{clm:short}
		Suppose that $\EE^1_n,\EE^3_n,\EE^4_n$ are satisfied. We have
		that $\len(\pi_n)\leq (1+\eps)\xi \lenopt$ (with probability $1$).
	\end{claim}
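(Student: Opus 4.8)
The plan is to bound $\len(\pi_n)$ segment by segment. Since $\pi_n=s\rightarrow q_1\leadsto\cdots\leadsto q_k\rightarrow t$ and length is additive under concatenation (Definition~\ref{def:length}), I would first write
\[\len(\pi_n)=\|s-q_1\|+\sum_{i=1}^{k-1}\dist(\G_n,q_i,q_{i+1})+\|q_k-t\|.\]
As $p_1=s$ and $p_k=t$, Claim~\ref{clm:close_points} gives $q_1\in\B_{r^{st}_n}(s)$ and $q_k\in\B_{r^{st}_n}(t)$, so each of the two terminal straight-line segments contributes at most $r^{st}_n=o(1)$.

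Next I would bound each interior segment. The triangle inequality together with $q_j\in\B_{r^{st}_n}(p_j)$ yields $\|q_i-q_{i+1}\|\le\|p_i-p_{i+1}\|+2r^{st}_n$. Because $q_i,q_{i+1}$ lie in the same component $C_n\subseteq\G(\X_n\cap[0,1]^d;r_n)$, I can apply the stretch bound of Theorem~\ref{thm:stretch} (event $\EE^3_n$) to obtain $\dist(\G_n,q_i,q_{i+1})\le\xi\|q_i-q_{i+1}\|\le\xi\bigl(\|p_i-p_{i+1}\|+2r^{st}_n\bigr)$. Summing, and using that the $p_i$ subdivide $\pi_\eps$ so that $\sum_{i=1}^{k-1}\|p_i-p_{i+1}\|\le\len(\pi_\eps)\le(1+\eps)\lenopt$, I arrive at
\[\len(\pi_n)\le\xi(1+\eps)\lenopt+2\bigl(1+\xi(k-1)\bigr)r^{st}_n.\]
The decisive structural point is that $k=\lceil\len(\pi_\eps)\cdot 2\xi/\delta\rceil$ is a constant independent of $n$, since the anchors are placed at the fixed arc length $\delta/2\xi$ along $\pi_\eps$; hence the accumulated error is a \emph{fixed} multiple of $r^{st}_n$ rather than a sum growing with $n$, and is therefore $o(1)$.

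The two steps I expect to demand the most care are the following. First, invoking Theorem~\ref{thm:stretch} requires the precondition $\|q_i-q_{i+1}\|=\omega(r_n)$ for every one of the $k-1$ pairs; I would secure this by taking $\pi_\eps$ to be simple (any loop can be excised without increasing length or violating $\B_\delta(\pi_\eps)\subset\F$), so that the anchors are spatially distinct and $c:=\min_i\|p_i-p_{i+1}\|$ is a fixed positive constant, giving $\|q_i-q_{i+1}\|\ge c-2r^{st}_n\ge c/2=\omega(r_n)$ for all large $n$. Second, I must absorb the residual $o(1)$ into the multiplicative factor, which is the genuinely delicate bookkeeping: I would choose $\pi_\eps$ with slack $(1+\eps/2)$ in place of $(1+\eps)$, so that once $n$ exceeds a threshold the term $2\bigl(1+\xi(k-1)\bigr)r^{st}_n$ is dominated by $\tfrac{\eps}{2}\xi\lenopt$ and the stated bound $\len(\pi_n)\le(1+\eps)\xi\lenopt$ follows. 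The ``with probability~$1$'' qualifier then simply records that, once $\EE^1_n,\EE^3_n,\EE^4_n$ hold and $n$ is large enough, the inequality is deterministic.
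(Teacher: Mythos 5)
Your proof is correct and follows essentially the same route as the paper's: decompose $\pi_n$ into the two terminal straight segments plus the graph paths $q_i\leadsto q_{i+1}$, bound $\|q_i-q_{i+1}\|$ via the triangle inequality by the corresponding piece of $\pi_\eps$ plus an $o(1)$ term, apply Theorem~\ref{thm:stretch}, and sum using the fact that $k$ is independent of $n$. If anything, you are more careful than the paper on two points it leaves implicit: checking the precondition $\|q_i-q_{i+1}\|=\omega(r_n)$ of Theorem~\ref{thm:stretch}, and explicitly absorbing the $o(1)$ residue by taking smaller slack in the choice of $\pi_\eps$ (the paper only remarks that one can rerun the argument with $\pi_{\eps'}$, $\eps'<\eps$).
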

	\begin{proof}
      First, observe that by the triangle inequality, it holds that 
      \begin{align*}
        \|q_{i-1} & -q_i\| \\ & \leq \|q_{i-1}-p_{i-1}\|+\|p_i-p_{i-1}\|+	\|q_{i}-p_{i}\| \\ & \leq \|q_{i-1}-p_{i-1}\|+\len(\pi_\eps^{i-1,i})+	\|q_{i}-p_{i}\| \\ & \leq 2r_n + \len(\pi_\eps^{i-1,i}) =o(1) + \len(\pi_\eps^{i-1,i}).
      \end{align*}
By Theorem~\ref{thm:stretch} and the triangle inequality, it follows that
		\begin{align*} \len( \pi_n) & = \|s-q_1\|+ \dist(\G_n,q_1,q_k) +
		\|t-q_k\| \\ & \leq 2r^{st}_n+ \sum_{i=2}^k\dist(\G_n,q_{i-1},q_i)
		\\ &  \leq o(1)+
		\sum_{i=2}^k \xi \|q_{i-1}-q_i\| \\ &  \leq  o(1) + \xi \sum_{i=2}^k
		\len(\pi_\eps^{i-1,i}) \\ & = o(1) + \xi \len(\pi_\eps)\leq
		o(1) + (1+\eps)\xi \lenopt.
		\end{align*}
		We note that in order to eliminate the $o(1)$ summand one can
        fix $\eps$ but work with $\pi_{\eps'}$ instead of
        $\pi_{\eps}$, where $\eps'<\eps$. For simplicity, we chose to
        leave the proof as is. 
	\end{proof}
	
	To conclude, Claim~\ref{clm:free} and Claim~\ref{clm:short} show
	the existence of a (collision free) path $\pi_n$ in $\P_n$, whose
	length is at most $(1+\eps)\xi\lenopt$. It remains to bound the
	probability that $\EE^1_n,\EE^3_n,\EE^4_n$ hold simultaneously:
	\begin{align*}
	\Pr[\EE^1_n & \wedge  \EE^3_n\wedge
	\EE^4_n] =\Pr\left[\EE^3_n\wedge
	\EE^4_n | \EE^1_n\right]\cdot \Pr[\EE^1_n] \\
	&= \left(1-\Pr\left[\overline{\EE^3_n\wedge\EE^4_n} | \EE^1_n\right]\right)\cdot\Pr[\EE^1_n]\\
	&= \left(1-\Pr\left[\overline{\EE^3_n}\vee\overline{\EE^4_n} | \EE^1_n\right]\right)\cdot\Pr[\EE^1_n] \\
	&\geq \left(1-\Pr\left[\overline{\EE^3_n} | \EE^1_n\right] - \Pr\left[\overline{\EE^4_n} | \EE^1_n\right]\right)\cdot\Pr[\EE^1_n]
	\end{align*}
	By Claim~\ref{clm:close_points} and Lemma~\ref{lem:H'_n}, $\Pr[\overline{\EE^4_n}| \EE^1_n] \leq k\Pr\left[\overline{\EE^2_n}| \EE^1_n\right]\leq kn^{-1}$.
	Also note that $\Pr[\EE^3_n| \EE^1_n] \geq \Pr[\EE^3_n]$, since $\Pr[\EE^3_n| \overline{\EE^1_n}] = 0$.
	Therefore, 
	\begin{align*}
	\Pr&[ \EE^1_n  \wedge \EE^3_n\wedge
	\EE^4_n]
	\\ &\geq \left(1-\Pr\left[\overline{\EE^3_n} | \EE^1_n\right] - \Pr\left[\overline{\EE^4_n} | \EE^1_n\right]\right)\cdot\Pr[\EE^1_n] \\
	&\geq \left(1 - O(n^{-1}) - kn^{-1}\right)\left(1-\exp\left(-\alpha n^{1/d}\log^{-1}n\right)\right)\\
	& = 1-O(n^{-1}).
	\end{align*}

\subsection{Case ii.2}
We prove the asymptotic optimality of \prm for bottleneck cost, with appropriate values of $r_n,r^{st}_n$.
	The proof follows very similar lines to that of the length
	cost. Fix $\eps>0$. For simplicity we assume that $\eps<1$ (the
	proof can be adapted to larger values of $\eps$).  By definition,
	there exists an $\M$-robust path $\pi'\in \Pi_{s,t}^\F$ and
	$\delta,\delta'>0$ such that
	\begin{enumerate}[(a)]
		\item $\B_\delta(\pi')\subset\F$, 
		\item $\bot(\pi',\M)\leq (1+\eps/3)\botopt$, 
		\item and $ \forall x\in\B_{\delta'}(\pi'):\M(x)\leq
		(1+\eps/3)\bot(\pi',\M)$.
	\end{enumerate}
	
	Now, define $\delta^*=\min\{\delta,\delta'\}$. By substituting
	$\delta$ with $\delta^*$, and $\pi_\eps$ with $\pi'$ in the proof
	of Theorem~\ref{thm:prm}.ii, it follows that if
	$\EE^1_n,\EE^3_n,\EE^4_n$ are satisfied, $\P_n$ contains a path
	$\pi'_n\in \Pi_{s,t}^\F$ such that
	$\Im(\pi'_n)\subset \B_{\delta'}(\pi')$. This implies that
    \begin{align*}
	\bot(\pi'_n,\M)&\leq (1+\eps/3)\bot(\pi',\M) \\ & \leq
	(1+\eps/3)^2\botopt\leq (1+\eps)\botopt,
    \end{align*}
	which concludes this proof. 

\section{Analysis of other planners}\label{sup:other_planners}
	In this section we describe the implications of Theorem~\ref{thm:prm}
	to additional planners, which are closely related to \prm. In all the
	results below, the values $r^*_n,\gamma^*,\beta, \xi$ are identical to
	those in Theorem~\ref{thm:prm}. Furthermore, for simplicity of
	presentation, we assume that $(\F,s,t)$ is robustly feasible, and $\M$
	is well behaved, where relevant.
	
	%We first consider planners designed for the single-robot
	%setting. Then, we proceed to analyze planners tailored for the
	%multi-robot setting. In all the following algorithms we assume
	%that the sampling method is a PPP.
	
	We first consider \fmt and \btt, and then proceed to \rrg, whose
	analysis is more involved.  In all the following algorithms we
	assume that the sampling method is a PPP.
	
	%\subsection{Single-robot planners}
	\subsection{The \fmt and the \btt planners}
	The following two corollaries are direct consequences of
	Theorem~\ref{thm:prm} as the two algorithms \fmt and \btt
	traverse an implicitly-represented \prm graph while minimizing
	the cost functions $\len$ and $\bot$, respectively. 
	\begin{corollary}\label{cor:fmt}
		If $r_n<r^*_n$ and $r^{st}_n=\infty$ then \fmt fails \aas If
		$r_n>r^*_n$ and $r^{st}_n=\frac{\beta\log^{1/(d-1)}n}{n^{1/d}}$, then for any $\eps>0$ \fmt returns a path
		$\pi_n \in \Pi_{s,t}^\F$ with $\len(\pi_n)\leq(1+\eps)\xi\lenopt$ with probability $1-O(n^{-1})$.
	\end{corollary}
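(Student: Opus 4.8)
The plan is to treat this corollary as a reduction to Theorem~\ref{thm:prm}, exploiting the fact that \fmt is a single-query algorithm whose search is confined to the implicit \prm roadmap $\P_n$: it explores only the vertex set $\X_n\cap \F$, connects samples lying within the radius $r_n$ (and the endpoints $s,t$ within $r^{st}_n$), and returns a feasible $s$--$t$ path in $\P_n$. Consequently, both the failure and the success claims should follow from properties of $\P_n$ already established for \prm, rather than requiring a fresh probabilistic analysis. This is precisely the sense in which the statement is ``a direct consequence'' of the main theorem.

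For the subcritical case $r_n<r^*_n$ with $r^{st}_n=\infty$, I would reuse the argument of Case~i almost verbatim. By Proposition~\ref{prop:log} all connected components of $\G_n$ have size $O(\log n)$ \aas, so in the obstacle scenario of Figure~\ref{fig:incomplete} no single component can contain both $s$ and $t$. Since \fmt only ever links samples that are adjacent in $\G_n$, its wavefront expansion cannot cross from the component of $s$ to that of $t$, and hence it fails to report any path \aas

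For the supercritical case I would invoke the construction from the proof of Theorem~\ref{thm:prm}.ii.1. Under the events $\EE^1_n,\EE^3_n,\EE^4_n$, which jointly hold with probability $1-O(n^{-1})$, the roadmap $\P_n$ contains the collision-free path $\pi_n=s\rightarrow q_1\leadsto\cdots\leadsto q_k\rightarrow t$ with $\len(\pi_n)\leq(1+\eps)\xi\lenopt$, whose internal portions $q_i\leadsto q_{i+1}$ are graph paths with consecutive vertices at Euclidean distance at most $r_n$ (Claim~\ref{clm:free} and Claim~\ref{clm:short}). It then remains to argue that the cost of the path \fmt returns never exceeds $\len(\pi_n)$: this is exactly the shortest-path/dynamic-programming guarantee of \fmt, which I would import from the analysis of~\cite{JSCP15}, stating that the \fmt cost is bounded by that of any collision-free path in $\P_n$ traceable by samples spaced at most $r_n$ apart. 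Combining the two, \fmt returns a path of length at most $(1+\eps)\xi\lenopt$ with probability $1-O(n^{-1})$.

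The main obstacle is this last step. \fmt performs \emph{lazy} collision checking and therefore does not literally compute the exact shortest path in $\P_n$; one must verify that this laziness never forces \fmt to return a path costlier than $\pi_n$, nor to discard the feasible connection realizing $\pi_n$. I expect this to reduce to the structural lemma of~\cite{JSCP15} showing that a sample pruned through its locally optimal parent remains reachable through an alternative parent at no greater cost, so that the upper bound $\len(\pi_n)$ is preserved by \fmt's output. Once this is in place, every remaining step is bookkeeping identical to the \prm proof, and the analogous statement for \btt follows by replacing $\len$ with $\bot$ and the path $\pi_n$ with the $\M$-robust path of Case~ii.2.
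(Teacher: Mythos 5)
Your overall route is the same as the paper's: both halves of the corollary are reductions to Theorem~\ref{thm:prm}---the failure half by re-running Case~i (Proposition~\ref{prop:log} plus the observation that \fmt only ever joins samples adjacent in $\G(\X_n\cap\F;r_n)$), and the success half by exhibiting, under $\EE^1_n\wedge\EE^3_n\wedge\EE^4_n$, the path $\pi_n$ of Claims~\ref{clm:free} and~\ref{clm:short} inside the implicit \prm graph that \fmt searches, then bounding \fmt's output cost by $\len(\pi_n)$. The paper indeed dispatches the corollary exactly this way, so up to the last step your proposal and the paper coincide.

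The gap is in the ingredient you import from \cite{JSCP15} to neutralize lazy collision checking. There is no lemma there saying that a sample whose locally optimal connection is blocked ``remains reachable through an alternative parent at no greater cost''; that statement is false in general---lazy checking is precisely what lets \fmt lock in strictly suboptimal connections, which is why \fmt is not an exact shortest-path algorithm on $\P_n$. What \cite[Remark~3.3]{JSCP15} actually provides, and what the paper invokes, is a \emph{locality} statement: \fmt can deviate from exact dynamic programming on $\P_n$ only at vertices lying within distance $r_n$ of $\C\setminus\F$, since a skipped or suboptimal connection requires a blocked edge of length at most $r_n$. This suffices here because the path $\pi_n$ constructed in the proof of Theorem~\ref{thm:prm}.ii.1 is built entirely from high-clearance vertices: its image lies in $\B_\delta(\pi_\eps)\subset\F$, and every point of $\pi_n$ is within roughly $\delta/2+o(1)$ of some $p_i$, hence at distance at least $\delta/2-o(1)$ from $\C\setminus\F$, whereas $r_n=\gamma n^{-1/d}=o(1)$. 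So for $n$ large enough no vertex or edge used by $\pi_n$ comes within $r_n$ of an obstacle, \fmt's wavefront along $\pi_n$ coincides with Dijkstra on $\P_n$, and its returned cost is at most $\len(\pi_n)\leq(1+\eps)\xi\lenopt$. Replacing your ``no greater cost'' lemma with this clearance argument closes the gap and makes your proof identical to the paper's.
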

	
	We remark that \fmt can ignore certain edges of the underlying
	\prm graph, which may result in a path of lower quality than that
	obtained by \prm. However, this situation only occurs for
	vertices which lie within distance of $r_n$ from $\C\setminus \F$
	(see~\cite[Remark~3.3]{JSCP15}). As our analysis of \prm in the
	previous section utilizes only vertices and edges that are far from
	obstacles our proofs extend to \fmt straightforwardly.
	\begin{corollary}\label{cor:btt}
		If $r_n<r^*_n$ and $r^{st}_n=\infty$ then \btt fails \aas If
		$r_n>r^*_n$ and $r^{st}_n=\frac{\beta\log^{1/(d-1)}n}{n^{1/d}}$, then for any $\eps>0$ \btt returns a path
		$\pi'_n \in \Pi_{s,t}^\F$ with
		$\bot(\pi'_n,\M)\leq (1+\eps) \botopt$ with probability $1-O(n^{-1})$.
	\end{corollary}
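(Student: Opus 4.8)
The plan is to reduce this corollary directly to Theorem~\ref{thm:prm}, in exact parallel with the way Corollary~\ref{cor:fmt} was reduced to the length-cost case. The decisive structural fact I would use is that \btt returns a path of \emph{minimum} bottleneck cost among all $s$-$t$ paths of the same implicitly-represented \prm graph $\P_n$ that Theorem~\ref{thm:prm} already analyzes. Thus the corollary should require no new probabilistic machinery: both the failure statement and the near-optimality statement are inherited from the two corresponding cases of the theorem.

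For the subcritical regime $r_n<r^*_n$ with $r^{st}_n=\infty$, I would invoke case~i of Theorem~\ref{thm:prm} essentially verbatim. By Proposition~\ref{prop:log} the underlying graph $\G_n$ fragments into connected components of size $O(\log n)$ \aas, and in the scenario of Figure~\ref{fig:incomplete} any $s$-$t$ path must cross $\Omega(n^{1/d})$ edges. Since \btt searches only over $\P_n$, no feasible path is available and \btt fails \aas. This argument never refers to the cost function, so it transfers unchanged from the length setting to the bottleneck setting.

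For the supercritical regime, the key observation is that \btt minimizes $\bot$ over all $s$-$t$ paths in $\P_n$. By case~ii.2 of Theorem~\ref{thm:prm}, with probability $1-O(n^{-1})$ the graph $\P_n$ already contains a path $\pi'_n\in\Pi_{s,t}^\F$ with $\bot(\pi'_n,\M)\le(1+\eps)\botopt$. Consequently the path $\pi''_n$ actually returned by \btt satisfies $\bot(\pi''_n,\M)\le\bot(\pi'_n,\M)\le(1+\eps)\botopt$, because \btt can only do at least as well as this witness path on the same graph. This yields the claimed bound with the stated $1-O(n^{-1})$ success probability.

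The one point requiring genuine care—precisely the subtlety flagged for \fmt—is that \btt traverses $\P_n$ \emph{implicitly} and may therefore ignore some edges, so a priori it might not realize the true bottleneck optimum over $\P_n$. As in the \fmt remark, any such pruning only affects vertices lying within distance $r_n$ of $\C\setminus\F$. I would resolve this by recalling that the witness path $\pi'_n$ from the proof of case~ii.2 is built from vertices $q_1,\ldots,q_k\in C_n\cap H_n$ that all lie inside $\B_{\delta^*}(\pi')$ and are therefore bounded away from the obstacle boundary; none of the edges realizing $\pi'_n$ are among those \btt discards. I expect this verification—that the explicit witness survives \btt's implicit edge selection—to be the main (and essentially only) obstacle, and it is settled entirely by the clearance properties already established in the length-cost proof.
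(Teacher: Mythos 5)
Your proposal is correct and follows essentially the same route as the paper, which states the corollary as a direct consequence of Theorem~\ref{thm:prm} precisely because \btt minimizes $\bot$ over the implicitly-represented \prm graph $\P_n$, so the witness path from case~ii.2 (and the infeasibility argument from case~i) transfer immediately. Your additional check that the witness path's vertices and edges stay bounded away from $\C\setminus\F$ mirrors the paper's own remark (made there for \fmt) and correctly settles the only potential subtlety.
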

	
	\subsection{The \rrg planner}
	We consider the incremental planner \rrg, which can be viewed as
	a cross between \rrt and \prm. Due to this relation we can extend
	the analysis of \prm to \rrg. (The term ``incremental'' refers to planners that generate samples one after the other, and connect the
		current samples to previous ones.)
	
	We introduce an incremental version of a PPP to extend our theory
	to \rrg.
	
	\begin{claim}
		Let $N$ be a Poisson random variable with mean $1$.  At each
		iteration $i$ draw a sample $N_i\in N$ and define
		$\X_i=\{X_1,\ldots,X_{N_i}\}$ to be $N_i$ points chosen
		independently and uniformly at random from~$[0,1]^d$.  The set
		$\X=\bigcup_{1\leq i\leq n} \X_i$, obtained after $n$ iterations, is
		a PPP of density $\lambda = n$.
		\label{clm:inc_ppp}
	\end{claim}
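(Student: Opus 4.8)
The plan is to prove the claim through the \emph{superposition} property of Poisson point processes, namely that the union of finitely many independent PPPs is again a PPP whose density is the sum of the individual densities. First I would observe that a single iteration already produces a PPP of density $1$ on $[0,1]^d$: the construction of $\X_i$ — draw $N_i\sim \mathrm{Poisson}(1)$ and scatter $N_i$ independent uniform points in $[0,1]^d$ — is precisely the single-cube instance of the recipe in Claim~\ref{clm:recipe}, restricted to the unit cube $[0,1]^d$ with $\lambda=1$. Hence each $\X_i$ satisfies both defining conditions of Definition~\ref{def:ppp}: for any measurable $D\subseteq [0,1]^d$ the count $|\X_i\cap D|$ is $\mathrm{Poisson}(|D|)$, and for disjoint $D_1,\ldots,D_\ell$ the counts $|\X_i\cap D_1|,\ldots,|\X_i\cap D_\ell|$ are mutually independent. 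The iterations are mutually independent, since the Poisson sample $N_i$ and the placed points are drawn afresh each time.

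Next I would verify the two PPP axioms for $\X=\bigcup_{i=1}^n \X_i$ with density $\lambda=n$. For the count property, fix a bounded $D\subseteq [0,1]^d$ and write $|\X\cap D|=\sum_{i=1}^n |\X_i\cap D|$ (as finite sets of points with continuous law the $\X_i$ are almost surely pairwise disjoint, so no double-counting occurs). This is a sum of $n$ independent $\mathrm{Poisson}(|D|)$ variables, hence $\mathrm{Poisson}(n|D|)=\mathrm{Poisson}(\lambda|D|)$, which is exactly property~(2) of Definition~\ref{def:ppp}.

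For the independence property, fix disjoint $D_1,\ldots,D_\ell\subseteq [0,1]^d$. The key observation is that the entire collection $\{\,|\X_i\cap D_j|: 1\leq i\leq n,\ 1\leq j\leq \ell\,\}$ is mutually independent: independence across $i$ comes from independence of the iterations, and independence across $j$ for each fixed $i$ comes from property~(1) applied to $\X_i$. Since $|\X\cap D_j|=\sum_{i=1}^n |\X_i\cap D_j|$, the variables $|\X\cap D_1|,\ldots,|\X\cap D_\ell|$ are functions of disjoint blocks of these jointly independent counts, and are therefore themselves mutually independent, establishing property~(1).

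The argument is essentially routine, and I would not expect a genuine obstacle; the only point demanding a little care is the independence step, where one must note that adding $n$ independent random \emph{vectors}, each having independent coordinates, produces a vector with independent coordinates — which holds precisely because the across-$i$ and within-$i$ independence together make all $n\ell$ counts jointly independent. The main work is simply stating this bookkeeping cleanly and invoking Claim~\ref{clm:recipe} for the single-iteration base case.
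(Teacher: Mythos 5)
Your proof is correct and follows essentially the same route as the paper, which simply invokes the fact that a sum of $n$ independent Poisson random variables with means $\lambda_1,\ldots,\lambda_n$ is Poisson with mean $\sum_i \lambda_i$ and notes the claim "follows directly." You fill in the bookkeeping the paper leaves implicit (the single-iteration base case via Claim~\ref{clm:recipe}, the a.s.\ disjointness of the $\X_i$, and the joint independence of the counts across iterations and regions), which is a more complete write-up of the same superposition argument.
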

	
	Note that the sum of $n$ i.i.d.\ Poisson random variables with
	means $\lambda_1, \lambda_2, \ldots, \lambda_n$ is a Poisson
	random variable with mean $\lambda = \sum_{i=1}^{n}{\lambda_i}$.
	Claim~\ref{clm:inc_ppp} follows directly from this property.
	
	We now describe an adaptation of \rrg for an incremental PPP.  Given a
	start configuration $s\in \F$ and %a target configuration $t$,
	a goal region $\X_{\text{goal}}\subseteq \F$, \rrg initializes a
	roadmap with a single node $s$.  Let~$\eta$ denote the constant used
	by \rrg for local steering (see~\cite{KF11}). Similarly to \prm, \rrg employs the connection radii $r_n^{\rrg},{r_n^s}^{\rrg}$.
	%We define the goal region $\X_{\text{goal}}$ to be a ball of fixed radius $\alpha > 0$ centered at $t$.
	
	Let $N$ be a Poisson random variable with mean $1$ and let $n_{i-1}$
	denote the number of nodes of the constructed roadmap after $i-1$
	iterations.  At the $i$th iteration we draw a sample $N_i\in N$ and
	define $\X_i=\{X_1,\ldots,X_{N_i}\}$ to be~$N_i$ points chosen
	independently and uniformly at random from~$[0,1]^d$.  \rrg will
	iteratively process the samples $X_1,\ldots,X_{N_i}$, as follows: For
	a sample $X_k\in \X_i$ \rrg will first locate the nearest node
	$x_{\text{near}}$ to $X_k$.  Let $x_{\text{new}}$ be a node at
	distance at most~$\eta$ from $x_{\text{near}}$ at the direction of
	$X_k$.  \rrg will attempt to connect $x_{\text{new}}$ to
	$x_{\text{near}}$.  If the connection attempt is successful, the new
	node $x_{\text{new}}$ will be added to the roadmap. Then, \rrg will
	attempt to connect $x_{\text{new}}$ to all the existing nodes within a
	ball of radius $\min\{r_n^\rrg, \eta\}$. It will additionally attempt to connect $x_{\text{new}}$ to $s$ if $\|x_{\text{new}}-s\|\leq \min\{{r_n^s}^{\rrg},\mu\}$. % and $\gamma' > \gamma$. 
	%At the end of each iteration \rrg will attempt to connect the target configuration $t$, which is a node in the roadmap,
	%to all nodes within a distance $r_n$ from $t$.
	Note that for the $i$th iteration of \rrg we set $n=n_{i-1}$.
	%For each successful connection, a new edge is added
	%to the roadmap. 
	
	The connection radii $r^{\rrg}_{n}, {r^{s}_n}^{\rrg}$ that will be
	used during the $i$th
	iteration %for all $N_i$ samples in iteration $i$
	are fixed, as both are functions of the number of nodes in the roadmap
	at the beginning of the %$i$th
	iteration.  Both radii will be set to decrease with the number of
	roadmap nodes, as in \prm.  However, since we fix the radius at each
	iteration of \rrg we use a slightly larger radius than the one
	obtained had we considered the current roadmap size.  This will
	clearly keep all relevant connections and perhaps even add more.
	
	\begin{theorem}\label{thm:rrg}
		Suppose that $(\F,s,\X_{\text{goal}} )$ is robustly
		feasible (we extend Definition~\ref{def:robustly_feasible}
			to describe a robustly feasible path $(\F ,s, \X_{\text{goal}})$
			for \rrg).  Let $\R_n$ denote the roadmap constructed by \rrg
		after $n$ iterations with
		\[r^{\rrg}_n=(1+\mu)r_n,\quad {r^{s}_n}^{\rrg}=(1+\mu)r^{st}_n,\]
		where $r_n,r^{st}_n$ are as in Theorem~\ref{thm:prm}.ii, and $\mu$
		is any positive constant. Then for any~$\eps>0$ the following holds
		with probability $1-O(n^{-1})$:
		\begin{enumerate}[1.]
			\item $\R_n$ contains a path $\pi_n \in \Pi_{s,\X_{\text{goal}}}^\F$
			with \mbox{$\len(\pi_n)\leq(1+\eps)\xi\lenopt$};
			\item If $\M$ is well behaved then, $\R_n$ contains a path
		\mbox{$\pi'_n \in \Pi_{s,\X_{\text{goal}}}^\F$} with
			$\bot(\pi'_n,\M)\leq (1+\eps) \botopt$.
		\end{enumerate}
	\end{theorem}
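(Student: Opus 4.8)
The plan is to reduce Theorem~\ref{thm:rrg} to the positive part of Theorem~\ref{thm:prm} by exhibiting, with probability $1-O(n^{-1})$, a \prm-type roadmap inside $\R_n$. By Claim~\ref{clm:inc_ppp} the samples $X_k$ drawn over the $n$ iterations form a PPP $\X_n$ of density $n$, and $s$ is present as the root. The only genuine difference from \prm is the \emph{steering} step: a sample $X_k$ is not inserted directly, but rather the point $x_{\text{new}}$ at distance at most $\eta$ from the nearest roadmap node $x_{\text{near}}$ toward $X_k$ (so $x_{\text{new}}=X_k$ exactly when $\|X_k-x_{\text{near}}\|\leq\eta$). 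The factor $(1+\mu)$ in both radii is the slack I will use to absorb this perturbation together with the discrepancy between the iteration count and the roadmap-node count.

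First I would establish a coverage lemma: with probability $1-O(n^{-1})$, after a burn-in of $m=\Theta(\log n)$ iterations the roadmap vertices form an $\eta$-net of the tube $\B_\delta(\pi_\eps)$ (more generally, of the connected free component containing $s$). This is the standard \rrt space-filling argument of~\cite{KF11}: cover the relevant region by $\Theta(1)$ cells of diameter $\sim\eta$; the tree, rooted at $s=p_1$ and growing along the tube, enters each cell within $\Theta(\log n)$ iterations except with probability $O(n^{-1})$, and a union bound over the constantly many cells preserves this rate. Once the net is in place, every later sample $X_k\in\B_\delta(\pi_\eps)$ has a roadmap node within $\eta$, so $x_{\text{new}}=X_k$; that is, from iteration $m$ on the relevant samples are inserted \emph{faithfully}.

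Next I would combine this with the \prm machinery. The percolation and stretch statements are facts about the abstract PPP, independent of how \rrg builds its roadmap; by Claim~\ref{clm:inc_ppp} the faithfully inserted samples $\X_n^{>m}$ form a PPP of density $n-m=n-\Theta(\log n)\sim n$, and since $\gamma>\gamma^*$ we have $(1+\mu)\gamma n^{-1/d}>\gamma^*(n-m)^{-1/d}=r^*_{n-m}$ for all large $n$, so Lemmata~\ref{lem:H_n},~\ref{lem:H'_n} and Theorem~\ref{thm:stretch} apply to $\G(\X_n^{>m}\cap[0,1]^d;(1+\mu)r_n)$ with the same $O(n^{-1})$ and exponentially small failure probabilities, while ${r^{s}_n}^{\rrg}=(1+\mu)r^{st}_n$ dominates the box side $h'_{n-m}$ of Lemma~\ref{lem:H'_n}. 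Since \rrg joins each inserted vertex to all existing vertices within its (non-increasing) per-iteration radius, every pair of faithfully inserted vertices at distance at most $(1+\mu)r_n$ joined by a collision-free segment is adjacent in $\R_n$; by Claim~\ref{clm:free} the tracing subpaths $q_i\leadsto q_{i+1}$ stay inside $\B_\delta(\pi_\eps)\subset\F$, where insertion is faithful and short segments are collision free, so every edge they use is present in $\R_n$. I would then replay the proof of Theorem~\ref{thm:prm}.ii, taking $p_k$ inside $\X_{\text{goal}}$ (robust feasibility of $(\F,s,\X_{\text{goal}})$ supplies $\pi_\eps$ and its tube) with goal endpoint a giant-component vertex $q_k\in\X_{\text{goal}}\cap C_n$ obtained by placing the box $H'_n$ of Lemma~\ref{lem:H'_n} inside $\X_{\text{goal}}$, while $s=p_1$ is the root and is linked to $C_n$ through its own such box. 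Claims~\ref{clm:close_points},~\ref{clm:free},~\ref{clm:short} then yield the length bound and the argument of Case~ii.2 the bottleneck bound; a union bound over the coverage failure, $\overline{\EE^1_n}$, $\overline{\EE^3_n}$ and $\overline{\EE^4_n}$, each $O(n^{-1})$, gives overall probability $1-O(n^{-1})$.

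I expect the coverage lemma to be the main obstacle, as it is the only genuinely \rrg-specific ingredient: the \prm analysis treats the sample set as a static PPP, whereas here the vertex set is produced dynamically through steering, and one must certify both that the burn-in fills the tube fast enough to make later insertions faithful and that it consumes only $o(n)$ samples, so that the density loss is negligible and the fixed slack $\mu>0$ suffices. Pinning this down at failure probability $O(n^{-1})$, uniformly over the possibly non-convex reachable component of $\F$, is the delicate point; everything downstream is a transcription of the \prm proof.
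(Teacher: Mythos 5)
Your proposal is correct and follows the same architecture as the paper's proof: use the incremental PPP structure (Claim~\ref{clm:inc_ppp}) together with an RRT-style coverage argument to show that, after a burn-in phase, every sample landing near $\pi_\eps$ is inserted faithfully (steering is a no-op) and wired in \prm fashion, so that $\R_n$ contains a \prm graph in a tube around $\pi_\eps$ to which Theorem~\ref{thm:prm} can be applied; this is precisely the paper's Claim~\ref{claim:rrt} plus Lemma~\ref{lem:rrg_contains_prm}. The genuine difference is quantitative and concerns how the iteration budget and the $(1+\mu)$ slack are spent. The paper burns a \emph{constant fraction} of the iterations, the first $n-n'$ with $n'=(1+\mu)^{-d}n$, so the post-burn-in samples form a PPP of density exactly $n'$, and the inflation is calibrated so that the smallest \rrg radius satisfies $r^{\rrg}_n=(1+\mu)r_n=r_{n'}$; Lemma~\ref{lem:rrg_contains_prm} thus embeds a bona fide $\P_{n'}$, with its own \prm radius, into $\R_n$, with exponentially small failure probability for the embedding event, and the final bound is $1-O((n')^{-1})=1-O(n^{-1})$. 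You instead burn only $\Theta(\log n)$ iterations (valid: the per-iteration advance probability computed in Appendix~\ref{sec:rrt_pc} is a constant, so a Chernoff/Markov-chain bound gives coverage failure $O(n^{-1})$ once the constant multiplying $\log n$ is large enough), so your embedded \prm has density $n-o(n)$ and radius $(1+\mu)r_n$, which is supercritical simply because $(1+\mu)\gamma>\gamma^*$. What each buys: your accounting reveals that the $(1+\mu)$ inflation is not structurally necessary (any fixed margin above criticality absorbs an $o(n)$ density loss), whereas the paper's yields a clean exact identity $r^{\rrg}_n=r_{n'}$ and an exponentially small failure probability for the embedding. Two points to tighten: (i) the radius used at each iteration is a function of the \emph{node} count, not the iteration count, and the node count has Poisson fluctuations that can push it above $n$; you flag this and fold it into the slack, but a one-line Chernoff bound on the Poisson sample count should be stated (the paper elides this too); (ii) when you replay Claims~\ref{clm:close_points}--\ref{clm:short}, the spacing of the trace points must be taken with respect to the tube radius in which faithful insertion is guaranteed (i.e., replace $\delta$ by the coverage scale $\sim\min\{\delta,\eta\}$), so that the shortest paths $q_i\leadsto q_{i+1}$ remain inside the region where the embedded \prm actually exists; this is the same silent adjustment the paper makes when it restricts to $\B_{\kappa/2}(\pi_\eps)$.
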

	
	\subsubsection{Proof of Theorem~\ref{thm:rrg}}
	We provide proof only for $\len$, as the case for $\bot$ is almost
	identical. In what follows we consider a robust path
	$\pi_\eps\in\Pi_{s,\X_{\text{goal}}}^\F$ for which
	$\len(\pi_\eps)\leq (1+\eps)\lenopt$.  Let $\delta>0$ denote the
	clearance of $\pi_\eps$, and $L$ denote its length.  Finally, set
	$\kappa = \min{(\delta, \eta)}/5$.
	
	Our proof uses Theorem~\ref{thm:prm} as a central ingredient, due to
	the following observation, which is formalized below: For $n$ large
	enough, with probability $1-O(e^{-n})$, $\R_n$ contains a \prm roadmap
	$\P_{n'}$ in the vicinity of $\pi_\eps$, where $n'$ is slightly
	smaller than $n$. This in turn follows from the unique structure of \rrg, which can be viewed as a combination of \prm and \rrt, as \rrg supplements the \rrt tree with additional edges. 
	
%	In particular, the following corollary can be deduced from the
%	probabilistic completeness of \rrt~\cite{LaVKuf01}, which does not
	In particular, the following corollary can be deduced from the
	probabilistic completeness of \rrt with samples from a PPP (see Appendix~\ref{sec:rrt_pc}), which does not
	rely on the parameters $r^{\rrg}_n, {r^{s}_n}^{\rrg}$.
	\begin{claim}\label{claim:rrt}
		It is possible to tile $\pi_\eps$ % \endnote{The statement holds in
			% general to any robust path $\pi$.}
          with $m = L/\kappa$ balls of
		radius $\kappa$, such that after $n$ iterations of \rrg, with
		probability at least $1-a\cdot e^{-bn}$ for some constants
		$a,b\in \dR_{>0}$, every ball will contain at least one \rrg vertex.
	\end{claim}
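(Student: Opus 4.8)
The plan is to prove Claim~\ref{claim:rrt} by reducing the statement to the probabilistic completeness of \rrt under a PPP sampling regime, and then quantifying how quickly the tree ``fills in'' a tube around $\pi_\eps$. First I would tile the robust path $\pi_\eps$ by a sequence of $m = L/\kappa$ balls $\B_\kappa(y_1),\ldots,\B_\kappa(y_m)$ whose centers $y_1=s,\ldots,y_m$ are placed consecutively along $\pi_\eps$ at arc-length spacing $\kappa=\min(\delta,\eta)/5$. The key geometric observation is that because $\pi_\eps$ has clearance $\delta$ and $\kappa\leq \delta/5$, each ball $\B_{5\kappa}(y_j)\subseteq\F$ lies entirely in the free space; moreover, since the centers are spaced at distance $\kappa$ and $\kappa\leq\eta/5$, a tree vertex lying in $\B_\kappa(y_j)$ is close enough to the next ball that a single \rrt extension step of length at most $\eta$ toward a sample in $\B_\kappa(y_{j+1})$ both stays in $\F$ and deposits a new vertex inside $\B_\kappa(y_{j+1})$.

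The core of the argument is an inductive/propagation step. I would define, for each iteration block, the event that the tree has ``reached'' ball $j$, i.e.\ contains a vertex in $\B_\kappa(y_j)$, and show that conditioned on reaching ball $j$, the probability of reaching ball $j+1$ within the samples drawn in a single \rrg iteration is bounded below by a positive constant $p>0$. This constant depends only on the geometry (the volume ratio $|\B_\kappa|/|[0,1]^d|$ that controls the chance a uniform sample lands in the target ball, together with the Poisson mean $1$ governing how many samples appear per iteration) and not on the connection radii $r_n^{\rrg},{r_n^s}^{\rrg}$. This is exactly where the independence of the claim from the radius parameters comes in: the \rrt extension that drives the tree forward uses only the nearest-neighbor steering with step $\eta$, so the \rrg-specific radial connections are irrelevant to mere reachability and can only add edges. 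Since each successful propagation requires a sample to land in the appropriate $\kappa$-ball and the corresponding extension to succeed (which it does deterministically once the geometry above is arranged), each iteration succeeds with probability at least some constant $p$, independently across iterations by property~(1) of the PPP.

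To convert this per-step constant-probability propagation into the stated $1-a\cdot e^{-bn}$ bound, I would view the process as needing $m$ successful ``advances'' among $n$ iterations. Since $m=L/\kappa$ is a fixed constant independent of $n$, and each iteration independently yields an advance with probability at least $p$, the number of advances in $n$ iterations stochastically dominates a $\binomial(n,p)$ random variable; the probability that fewer than $m$ advances occur in $n$ trials decays exponentially in $n$ once $n$ is large (a standard Chernoff/Poisson tail bound gives $a\cdot e^{-bn}$ for suitable constants $a,b>0$ depending on $p$ and $m$). Taking a union bound over the $m$ balls to ensure every ball is eventually occupied preserves the exponential form since $m$ is constant. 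I would invoke the probabilistic completeness of \rrt under PPP sampling (Appendix~\ref{sec:rrt_pc}) to justify that the single-step extension succeeds with the claimed geometry and that nearest-neighbor selection does not stall the process.

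The main obstacle I anticipate is the conditioning and independence bookkeeping in the propagation step: I must argue cleanly that the event ``reach ball $j+1$'' has constant lower-bounded probability \emph{conditioned} on the entire history up to reaching ball~$j$, despite the nearest-neighbor rule making the dynamics history-dependent. The resolution is to use a conservative coupling---at each iteration I only demand that \emph{one} of the $N_i$ uniform samples falls into the target ball $\B_\kappa(y_{j+1})$ and triggers an extension from the already-present vertex in $\B_\kappa(y_j)$; this lower bound event depends only on the fresh PPP samples of that iteration and on the (favorable) fact that a vertex already sits in $\B_\kappa(y_j)$, so it dominates an i.i.d.\ Bernoulli($p$) sequence regardless of the rest of the tree's shape. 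Verifying that the extension of length $\eta$ from $\B_\kappa(y_j)$ toward such a sample indeed lands in $\B_\kappa(y_{j+1})$ and stays collision-free is the geometric lemma that must be stated carefully but follows from the clearance and spacing choices above.
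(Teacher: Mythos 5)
Your proposal is correct and takes essentially the same route as the paper: the paper proves Claim~\ref{claim:rrt} (in Appendix~\ref{sec:rrt_pc}) by adapting the probabilistic-completeness argument for \rrt of~\cite{KSKBH18} to per-iteration PPP sampling---tiling $\pi_\eps$ with balls of radius $\kappa=\min(\delta,\eta)/5$, modelling progress as a Markov chain whose per-iteration transition probability is bounded below by $(1-e^{-1})\cdot|\B_\kappa|/|\F|$ (Poisson count nonzero, times the chance a uniform sample lands in the next ball), and concluding with the same exponential tail bound, noting that the \rrg radii only add edges. Your coupling/stochastic-domination bookkeeping and the geometric steering lemma are precisely the details the paper delegates to~\cite{KSKBH18}.
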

	
	The following lemma formalizes the connection between \rrg and \prm. 
	\begin{lemma}\label{lem:rrg_contains_prm}
		Fix $n$ and define $n'=(1+\mu)^{-d}n$. Let $\R_n$ be an \rrg graph
		constructed after $n$ iterations with connection radii
		$r^{\rrg}_{n}, {r^{s}_n}^{\rrg}$.  With probability at least
		$1-a\cdot e^{-b'n}$, for some constants $a,b'\in\dR_{>0}$, the
		graph $\R_n\cap \B_{\kappa/2}(\pi)$ contains a \prm graph
		$\P_{n'}\cap \B_{\kappa/2}(\pi)$, constructed with the radius
		$r_{n'}$.
	\end{lemma}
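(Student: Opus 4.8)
The plan is to exploit the exact match between the two connection radii. Since $n'=(1+\mu)^{-d}n$, we have $r_{n'}=\gamma(n')^{-1/d}=(1+\mu)\gamma n^{-1/d}=(1+\mu)r_n=r^{\rrg}_n$, so the radius \rrg uses after $n$ iterations is precisely the radius of $\P_{n'}$. I would realize the embedded \prm by \emph{coupling}: drive \rrg with a density-$n$ PPP and obtain the \prm vertex set $\X_{n'}$ by Poisson thinning, so that $\X_{n'}$ is literally a subset of the \rrg samples. With this coupling it suffices to prove two deterministic facts (conditioned on a favorable event): (i) every point of $\X_{n'}$ lying in $\B_{\kappa/2}(\pi)$ is inserted into $\R_n$ at its exact location, and (ii) every \prm edge whose segment lies in $\B_{\kappa/2}(\pi)$ appears as an \rrg edge.

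To control \emph{when} samples are processed, I would split the $n$ iterations into two phases. Fix a constant $\epsilon_0\in(0,1)$ with $1-\epsilon_0\geq(1+\mu)^{-d}$, which is possible since $\mu>0$. Applying Claim~\ref{claim:rrt} to the first $\epsilon_0 n$ iterations gives, with probability at least $1-a\,e^{-b\epsilon_0 n}$, that every tiling ball along $\pi$ already contains an \rrg vertex; call this event coverage, and note it yields the stated bound with $b'=b\epsilon_0$. Since \rrg only adds vertices, coverage persists through the remaining $(1-\epsilon_0)n\geq n'$ iterations, whose samples form a PPP of density $(1-\epsilon_0)n$. I would therefore choose the density-$n'$ thinning to be supported on these post-coverage samples; this keeps $\X_{n'}$ a genuine density-$n'$ PPP over $[0,1]^d$ (so $\P_{n'}$ is a bona fide \prm) while guaranteeing that every point of $\X_{n'}$ is processed only after the tube is covered.

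Given coverage, (i) follows from a triangle-inequality estimate: for any $X_k\in\B_{\kappa/2}(\pi)$ there is an \rrg vertex within distance $O(\kappa)$, and the choice $\kappa=\min(\delta,\eta)/5$ keeps this distance below $\eta$, so the steering step returns $x_{\text{new}}=X_k$ exactly, while the connecting segment (of length $O(\kappa)$ with an endpoint in $\B_{\kappa/2}(\pi)$) stays inside $\B_{\delta}(\pi)\subseteq\F$; hence the insertion succeeds. For (ii), note that $r^{\rrg}_n<\eta$ once $n$ is large, so at each post-coverage iteration \rrg connects the new vertex to all existing vertices within its radius, which is at least $r_{n'}$; thus for any two thinned points $u,v\in\B_{\kappa/2}(\pi)$ with $\|u-v\|\leq r_{n'}$ and segment in the tube, the later-processed one is joined to the earlier along a collision-free edge. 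The start vertex is handled identically, using ${r^s_n}^{\rrg}=(1+\mu)r^{st}_n\geq r^{st}_{n'}$. Combining (i) and (ii) yields $\P_{n'}\cap\B_{\kappa/2}(\pi)\subseteq\R_n\cap\B_{\kappa/2}(\pi)$.

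I expect the main obstacle to be the temporal coupling of the second paragraph: the exact-placement argument is valid only for samples arriving \emph{after} coverage, yet the embedded \prm must simultaneously retain density at least $n'$ and remain a legitimate PPP so that Theorem~\ref{thm:prm} can later be invoked on $\P_{n'}$. Reserving an $\epsilon_0$-fraction of the iterations for coverage while leaving a $(1-\epsilon_0)\geq(1+\mu)^{-d}$ fraction to host the thinned process---and doing so with a failure probability that is exponentially small in $n$ rather than merely in $\epsilon_0 n$---is the delicate accounting; the remaining steps are deterministic once coverage holds and $n$ is large.
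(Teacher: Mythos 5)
Your proposal is correct and follows essentially the same route as the paper: reserve an initial constant fraction of the iterations so that Claim~\ref{claim:rrt} yields coverage of the tube around $\pi_\eps$ with exponentially small failure probability, then use the radius identity $r^{\rrg}_n=(1+\mu)r_n=\gamma(n')^{-1/d}=r_{n'}$ (and its analogue for ${r^s_n}^{\rrg}$) to argue that every post-coverage sample in $\B_{\kappa/2}(\pi)$ is inserted at its exact location and connected in \prm-fashion with radius at least $r_{n'}$. The only cosmetic difference is that the paper reserves exactly $n-n'=n\left(1-(1+\mu)^{-d}\right)$ iterations for coverage, so the remaining $n'$ iterations directly form a density-$n'$ PPP and your extra Poisson-thinning step is unnecessary.
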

	\begin{proof}	
      Consider the $n-n'=n\left(1-(1+\mu)^{-d}\right)$ first
      iterations of \rrg.  From Claim~\ref{claim:rrt} we obtain
      that given a robust path $\pi_\eps$, it can be tiled using a
      constant number of balls of radius $\kappa$ such that with
      probability of at least $1-ae^{-b(n-n')}=1-ae^{-b'n}$ every ball
      will contain an \rrg vertex after $n$ iterations of \rrg, for
      some constants $a,b\in \dR_{>0}$, and
      $b'=b\left(1-(1+\mu)^{-d}\right)$. Now, let us consider a new
      sample $x \in \B_{\kappa/2}(\pi)$ that is added in iteration
      $n''>n-n'$. By Claim~\ref{claim:rrt}, $\R_{n''}$ includes
      (with certain probability) a node $v$ that can be connected to
      $x$; that is, both $\|x-v\|\leq \eta$ and the straight-line path
      from $x$ to $v$ is collision free. Thus, $v$ is a added as a
      node to \rrg and then connected in a \prm-fashion to all its
      neighbors within a radius of $r_{n''}^{\rrg}$.
		
      Note that the smallest value of the latter radius is when
      $n''=n$. Observe that
      \begin{align*}
      r_{n}^{\rrg}=(1+\mu)r_n & = \gamma \left((1+\mu)^{-d}n\right)^{-1/d} \\ &  =
      \gamma {(n')}^{-1/d} =  r_{n'}.
      \end{align*}
      Additionally,
      \begin{align*}
      {r_{n}^s}^{\rrg} & =(1+\mu)r_n^{st}\\ &  = (1+\mu)\frac{\beta \log^{1/(d-1)}n}{n^{1/d}} \\ & = \frac{\beta \log^{1/(d-1)}n}{((1+\mu)^{-d}n)^{1/d}} \\ & \geq  \frac{\beta \log^{1/(d-1)}((1+\mu)^{-d}n)}{((1+\mu)^{-d}n)^{1/d}} = {r_{n'}^s}^{\rrg}.
      \end{align*}
      This implies that all samples added after iteration $n-n'$ are
      connected in a \prm-fashion with a radius $r_{n'}$. Thus,
      $\R_n\cap \B_{\kappa/2}(\pi)$ contains a \prm graph
      $\P_{n'}\cap \B_{\kappa/2}(\pi)$, constructed with the radius
      $r_{n'}$ with probability $1-a\cdot e^{-b'n}$, as required. 
	\end{proof}
	
	We are now ready to finish the proof of Theorem~\ref{thm:rrg}.  Let us
	denote by $A$ the event that the graph
	$\R_n$ contains a \prm graph
	$\P_{n'} \bigcap \B_{\kappa/2}(\pi_\eps)$.
	Given that $A$ occurs, we denote by $B$ the event that the constructed
	roadmap $\R_n \bigcap \B_{\kappa/2}(\pi_\eps)$ maintains the good
	properties of \prm, as defined in Theorem~\ref{thm:prm}.ii.1. That
	is, $\P_{n'} \bigcap \B_{\kappa/2}(\pi_\eps)$ contains a path
	$\pi_{n'} \in \Pi_{s,\X_\text{goal}}^\F$ with
	$\len(\pi_{n'})\leq(1+\eps)\xi\cdot\lenopt$;
	
	Given that $A$ occurs, from Theorem~\ref{thm:prm} we obtain that $B$
	occurs on the \prm-like graph inside $\B_{\kappa/2}(\pi)$ with
	probability of at least
    \begin{align*}
	1-O\left((n')^{-1}\right) & = 1- O\left((1+\mu)^{-d}n^{-1}\right) \\ &= 
	1-O\left(n^{-1}\right).
    \end{align*}

	That is, $\Pr[B|A] \geq 1-O({n}^{-1})$.  Additionally,
	Lemma~\ref{lem:rrg_contains_prm} states that
	$\Pr[A]\geq 1-ae^{-b'n}$ for some $b'$.
	Note that we would like to bound the probability that both $A$ and $B$
	occur simultaneously. From the definition of conditional probability
	we have that $\Pr[A\cap B] = \Pr[B|A]\cdot \Pr[A]$.  Therefore,
    \begin{align*}
	\Pr[A\cap B] & \geq
	\left(1-O(n^{-1})\right)\cdot\left(1-ae^{-b'n}\right) \\ &\geq 1-
	O(e^{-b'n}) - O(n^{-1}) \\ & = 1- O(n^{-1}).
    \end{align*}

	\subsection{Multi-robot planners}
	In this section we briefly state the implications of our analysis
	of \prm for sampling-based multi-robot motion planning. In
	particular, we consider the planners \mstar~\citep{WagCho15},
	\drrt~\citep{SolETAL16drrt} and \drrtstar~\citep{DobETAL17}.
	
	The multi-robot setting involves $m\geq 2$ robots operating in a
	shared workspace.  In this more challenging setting collisions
	between different robots must be avoided, in addition to standard
	robot-obstacle collisions. Single-robot sampling-based planners
	can be applied directly to the problem by considering the robot
	fleet as one highly-complex robot. However, recently-introduced
	planners that are tailored for the multi-robot case have proved
	to be much more effective than the aforementioned naive approach.
	
	Such recent approaches include~\mstar, \drrt, and \drrtstar. A
	common ground to these techniques is an implicit construction of
	a composite roadmap~$\mathbb{G}$~\citep{SolETAL16drrt}, which
	results from a tensor product between $m$ single-robot roadmaps
	$G_1,\ldots,G_m$---for every $1\leq i\leq m$ the graph $G_i$ is a
	\prm embedded in the configuration space of robot $i$. It was
	recently proved that if $G_1,\ldots,G_m$ are AO, with respect to
	the individual robots for which they are defined, then
	$\mathbb{G}$ is AO with respect to the multi-robot
	problem~\citep{DobETAL17}. 
	
	Thus, by Theorem~\ref{thm:prm} it follows that the underlying
	single-robot \prm graphs $G_1,\ldots,G_m$ can be constructed
	using a smaller connection radius, while still guaranteeing the
	AO (or AnO) of $\mathbb{G}$. This in turn, guarantees AO (or AnO) of
	\mstar, \drrtstar, and PC of \drrt.

	\section{Experimental results}
	\label{sec:experimental_results}
	We present experiments demonstrating the effect of using different
    values of the connection radius $r_n$ on the running time, cost of
    solution for the length cost $\len$, and success rate, when
    running the algorithms \prm and \fmt on problems of dimensions up
    to $12$. In particular, $r_n$ ranges between the critical radius
    (Theorem~\ref{thm:prm}) and previously obtained upper bounds
    from~\cite{JSCP15,KF11}.
	
	We validate our theory for \prm (Theorem\ref{thm:prm}) and \fmt
    (Corollary~\ref{cor:fmt}).  We observe that smaller connection
    radii, than previously-obtained bounds, still allow the planners
    to converge to high-quality, near-optimal paths.
     Furthermore, we identify situations in which using a radius that
     is close to $r^*_n$ allows to obtain a high-quality solution more
     quickly.  Moreover, although the resulting cost for the smaller
     radii can be slightly worse, we observe that postprocessing the
     paths using standard simplification methods yields solutions that
     are only marginally inferior to the best (postprocessed) solution. % \footnote{This indicates that that the resulting paths
     % are of similar homotopy classes.}
     Specifically, in harder
     scenarios the advantage in using a smaller connection radius is
     more prominent; in some cases we obtain a reduction of 50\% in
     running time, with an improved cost, and similar success rates when
     compared to the results obtained using the original \fmt
     connection radius.  % We do mention that the success rate may
     % deteriorate as the radius decreases.
     %We also note that we observed a maximal deviation from the optimum by a factor of $2$, which suggests that $\xi$ is small.	
	\subsection{Implementation details}
	In our experiments, we used the Open Motion Planning Library (OMPL
	1.3, \cite{OMPL}) on a 2.6GHz$\times 2$ Intel Core i5 processor with
	16GB of memory.  Results were averaged over 50 runs and computed for
	dimensions up to 12. 
	
		The planners that we used are \prm and the batch variant of \fmt,
		which were adapted for samples from a PPP, where $n$ is the expected number of samples.  Specifically, given
		the expected number of samples $n$, these variants generate a set
		of samples according to the recipe in Claim~\ref{clm:recipe}.  
		The
		two planners use the connection radii $r_n,r^{st}_n$.  Note that $r^{st}_n$ should be at least
		$\frac{\beta\log^{1/(d-1)}n}{n^{1/d}}$, but the exact value of
		$\beta$ is unknown.  For simplicity, %in our experiments 
		we set 
		$r^{st}_n$ to be identical to $r_{\prm^*}$, defined
		in~\cite{KF11}. We emphasize that although we use an asymptotically smaller value,
		it still yields
		(empirically) convergence in cost.  This suggests that the bound
		on $r^{st}_n$ can be further reduced.
%	\begin{wrapfigure}{r}{0.45\columnwidth}
%		%\vspace{-15pt}
%		\centering 
%		\includegraphics[trim={7cm 5.7cm 6.5cm 5.7cm},clip, width=0.4\columnwidth]{./../plots/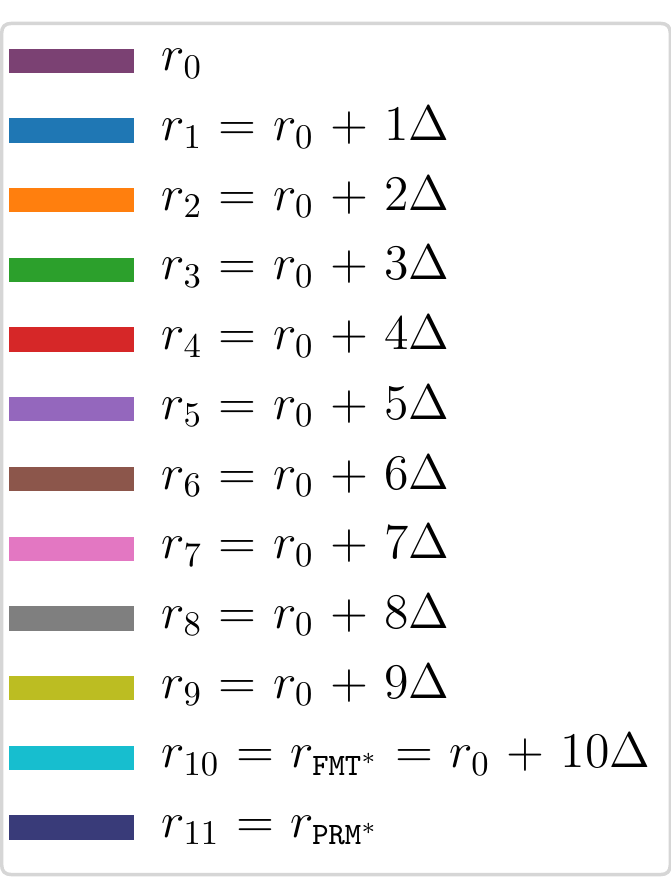}
%		% \caption{Colors and labeling legend. Radii appear in increasing order.
%		% 	$r_{\text{perc}} = \gamma n^{-1/d}$, for $\gamma=1$, and $\Delta = (r_{\text{\fmt}} - r_{\text{perc}})/10$}
%		% \label{fig:legend} 
%		\vspace{-5pt}
%	\end{wrapfigure}
	
    \begin{wrapfigure}{r}{0.2\columnwidth}
		\vspace{-10pt}
		\centering 
		\includegraphics[trim={7cm 5.7cm 6.5cm 5.7cm},clip, width=0.2\columnwidth]{legend.pdf}
		\vspace{-5pt}
	\end{wrapfigure}
	Given a scenario and a value $n$, we define a set of $k+1$
    increasing connection radii, $\{r_0, \ldots , r_k\}$, as follows.
    We set the minimal connection radius to be
    $r_0 = \gamma n^{-1/d}$, where $\gamma=1$. Note that $\gamma$ is
    larger than $\gamma^*$ by a factor of roughly $2$. The maximal
    connection radius, denoted by $r_k = r_{\text{\fmt}}$, is as
    defined in~\cite{JSCP15}.  For each $1\leq i\leq k-1$ we define
    $r_i = r_0 + i\cdot\Delta$, where $\Delta=(r_k-r_0)/k$.  Now, for
    every scenario and number of samples $n$ we run our planning
	algorithm with $r^{st}_n$, and $r_n\in\{r_0, \ldots , r_k\}$.
    Note that all our plots are for $k=10$, and that in some
    experiments an additional radius
    $r_{k+1} = r_{\prm^*}>r_{\text{\fmt}}$ appears as well
    (see~\cite{KF11}). The figure to the right depicts the colors and
    labeling that will be used throughout this section.

	\begin{figure*}
\centering
		\begin{subfigure}[b]{.3\linewidth}
          \centering
			\includegraphics[height=4cm]{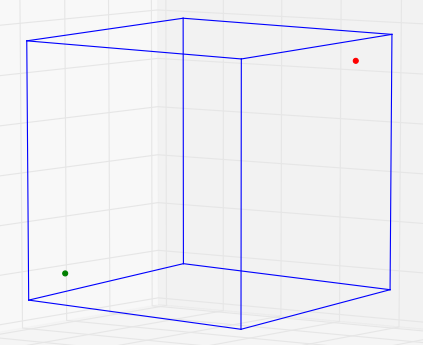}
            \caption{}
			\label{fig:hypercube}
		\end{subfigure}%
		\begin{subfigure}[b]{.3\linewidth}
          \centering
			\includegraphics[height=4cm]{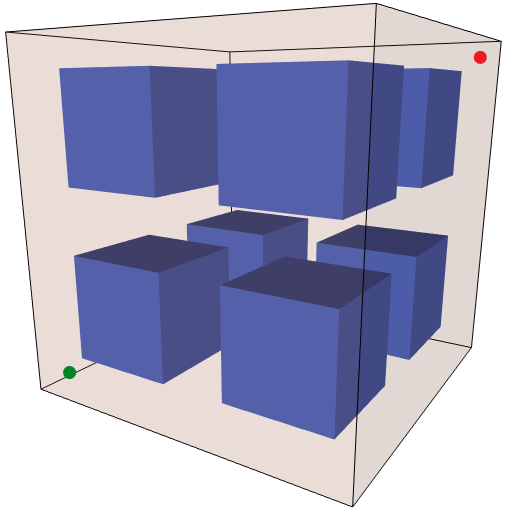}
             \caption{}
			\label{fig:hypercube_with_obst} 
		\end{subfigure}
		\begin{subfigure}[b]{.3\linewidth}
\centering
			\includegraphics[height=4.5cm]{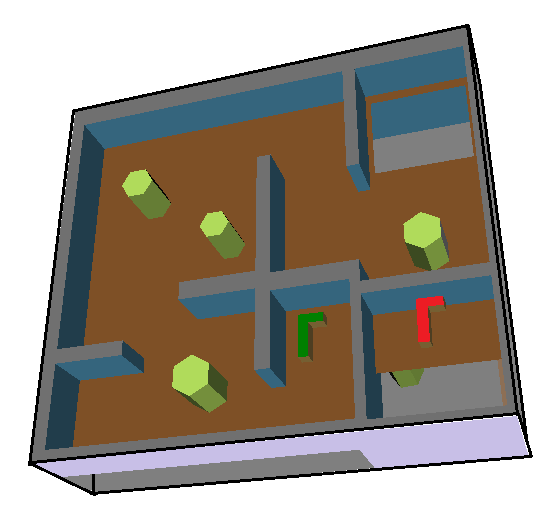}
 \caption{}
			\label{fig:cubicles} 
		\end{subfigure}
		\caption{Scenarios used in experiments. (a)~$d$D Hypercube,
			(b)~$d$D Hypercube with $2^d$ hypercubical obstacles, and
			(c)~3D Cubicles.  Start and target configurations for a
			robot are depicted in green and red, respectively.  Scenario
			(c) is provided with the~OMPL distribution.}
		\label{fig:scenarios}
	\end{figure*}

	\subsection{Results}
	\noindent\textbf{Euclidean space.}
	The scenario we consider (see Figure~\ref{fig:hypercube}) consists of a point robot moving in
	the obstacle-free unit $d$-dimensional hypercube. Therefore,
	$\F = \C = [0,1]^d$.  We set the start and target positions of the
	robot to be $s=(0.1,\ldots, 0.1)$ and $t=(0.9,\ldots, 0.9)$,
	respectively. 
	
	We run \prm and plot (i) the overall running time, (ii) the normalized cost ($\len$) of the obtained
	solution, where a value of 1 represents the best possible cost, and (iii)
	the portion of successful runs---all as a function of the expected
	number of samples $n$.  Results are
	depicted in Figure~\ref{fig:prm_empty}.
	
	%Note that for fixed $n,d,r$ the expected running time, which
	%corresponds to the overall number of edges in the graph, should be
	%proportional to the expression $n^2r^d$. Since
	%$r_{i+1} - r_i = \Delta$ for every $1\leq i<k$, the difference in
	%running time between two consecutive radii can be formulated as
	%$n^2((r_i +\delta)^d - r_i^d)$.  This value increases quite slowly as
	%$i$ increases, as can also be inferred from the plots.  \todo{Kiril: I
	%	suggest to omit the last sentences about the running time, as I
	%	don't see its purpose. Unless you think that they are important.}
	
	The plots demonstrate the following trend: for each radius~$r$ the
	cost obtained by \prm converges to some constant times the optimal cost, which is marked with the dashed
	red curve in the ``Cost vs. $n$'' plot. (We note
		that it is possible that for larger values of $n$ the cost
		values for different radii will eventually converge to the same
		value.) % Moreover, there is a
	% clear separation between costs obtained using each radius.
	Clearly, $r_{\prm^*}$ yields the best cost but at the price of
	increased running time.  $r_{10} = r_{\text{\fmt}}$ obtains the
	next best cost, with improved running time, and so on.  Note that
	for $d=4$, already for $n=1$K a solution is found for all radii
	except $r_0$, whereas for $d=8$ and $n=5$K a solution is found for
	all radii above $r_4$.  It is important to note that there is a
	clear speedup in the running times of \prm when using $r_i$ for
	$i\leq 9$, over $r_{\text{\fmt}}, r_{\prm^*}$, with a slight
	penalty (a factor of roughly $2$) in the resulting costs.
	%We also
    %study how $r_i$ affects the size of $C_n$.

    For the first set of experiments of \prm in an obstacle free
$d$-dimensional hypercube, we also study how the value of $r_i$
affects the size of the connected components. Note that
  Theorem~1 in \cite{PenPiz96} states that $|C_n|=\Theta(n)$. In
particular, we measure the size of the two largest connected
components, denoted by $C_n$ and $C'_n$, as a function of both the
radius $r_i$ and the expected number of samples $n$. The results for
$d=2, d=12$ are summarized in Table~\ref{tbl:size_of_cc2}.  Already for
$r_2$, $C_n$ is significantly larger than $C'_n$.  However, for $r_0$
there is no clear difference. That is, we do not see in practice the
expected emergence of the ``huge'' component.  As $n$ increases, the
proportion of $C_n$ increases as well, whereas that of $C'_n$ shrinks.
Also note that for specific $r_i, n$ the maximal component is smaller
for $d=12$ than for $d=2$.

% Additionally, we provide plots (Figure) for \prm for
% a point robot operating in the obstacle-free $12$D hypercube, as we
% mention in the main paper. 

\begin{table*}[h]
		\centering
		\begin{tabular}[]{c c ||c | c | c | c | c | c }
			& & \multicolumn{2}{c|}{$r_0$} & \multicolumn{2}{c|}{$r_2$} & \multicolumn{2}{c}{$r_{10}$} \\ \hline
			$d$ & $n$ & $|C_n|/n$ & $|C'_n|/n$ & $|C_n|/n$ & $|C'_n|/n$ & $|C_n|/n$ & $|C'_n|/n$ \\ \hline \hline
			2 & 1K & 0.17 & 0.1 & 0.88 & 0.05 & 1.0 & 0.0 \\ \hline
			2 & 5K & 0.08 & 0.05 & 0.97 & 0.001 & 0.999 & 0.0 \\ \hline
			2 & 10K & 0.05 & 0.04 & 0.98 & 0.003 & 1.0 & 0.0 \\ \hline
			2 & 50K & 0.02 & 0.01 & 0.99 & 0.001 & 1.0 & 0.0 \\ \hline
			\hline
			12 & 1K & 0.04 & 0.02 & 0.84 & 0.003 & 1.0 & 0.0 \\ \hline
			12 & 5K & 0.12 & 0.01 & 0.91 & 0.001 & 1.0 & 0.0 \\ \hline
			12 & 10K & 0.21 & 0.005 & 0.94 & 0.001 & 1.0 & 0.0 \\ \hline	
		\end{tabular}
		\caption{The average proportion of the two largest connected components $C_n,C'_n$ obtained using \prm  as a function of $r_i, n, d$.}
        \label{tbl:size_of_cc2}
	\end{table*}

	\begin{figure*}
      \vspace{-10pt}
		\centering 
		\begin{subfigure}{.32\textwidth}
				\includegraphics[trim={0 0 0 2.5cm},clip,width=1\textwidth]{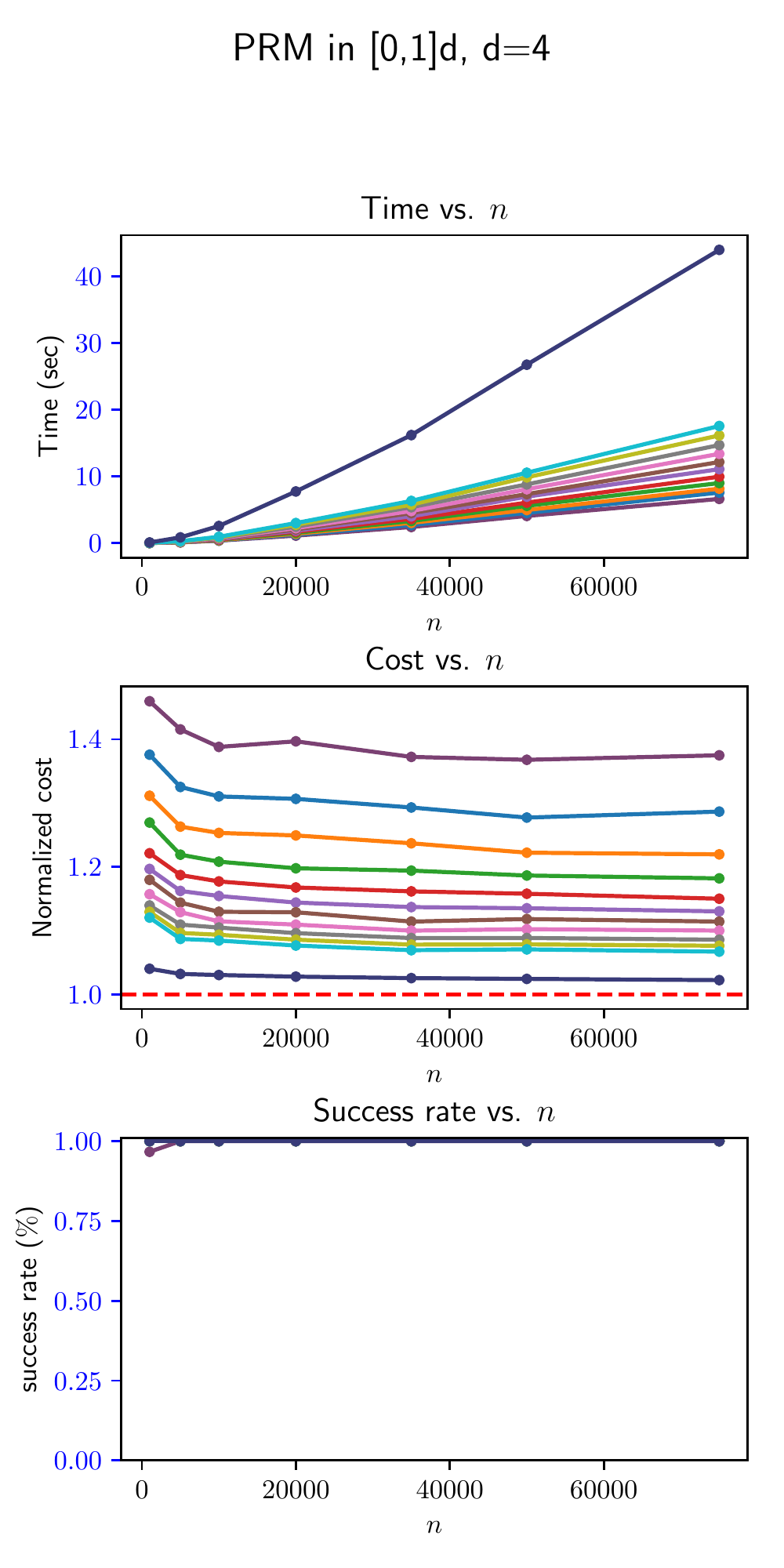}
				\caption{$d=4$}
                \label{fig:prm_res_4d}
		\end{subfigure}
		\begin{subfigure}{.32\textwidth}
			\includegraphics[trim={0 0 0 2.5cm},clip,width=1\textwidth]{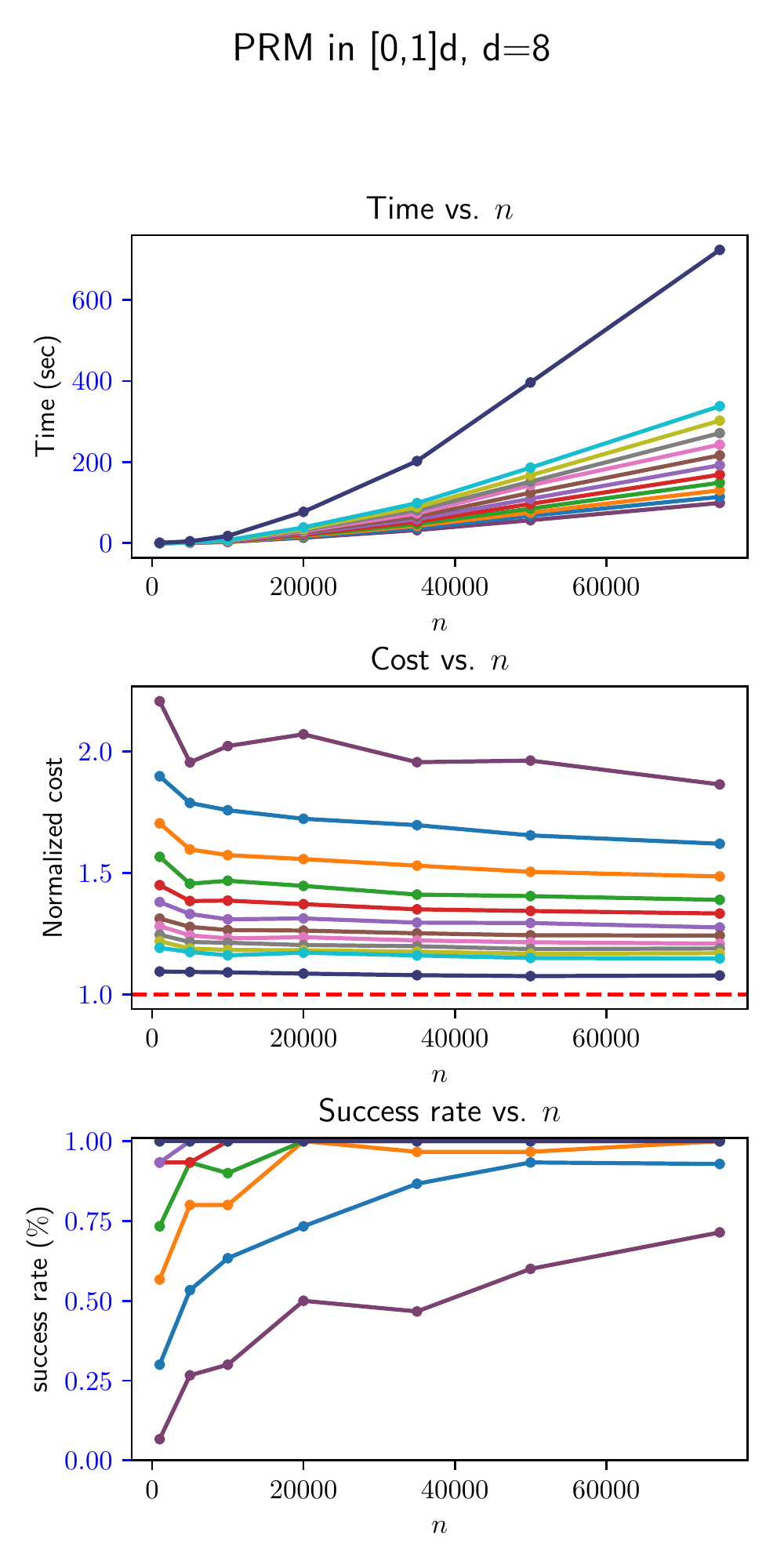}
            \caption{$d=8$}
			\label{fig:prm_res_8d} 
		\end{subfigure}
        \begin{subfigure}{.32\textwidth}
			\includegraphics[trim={0 0 0 2.5cm},clip,width=1\textwidth]{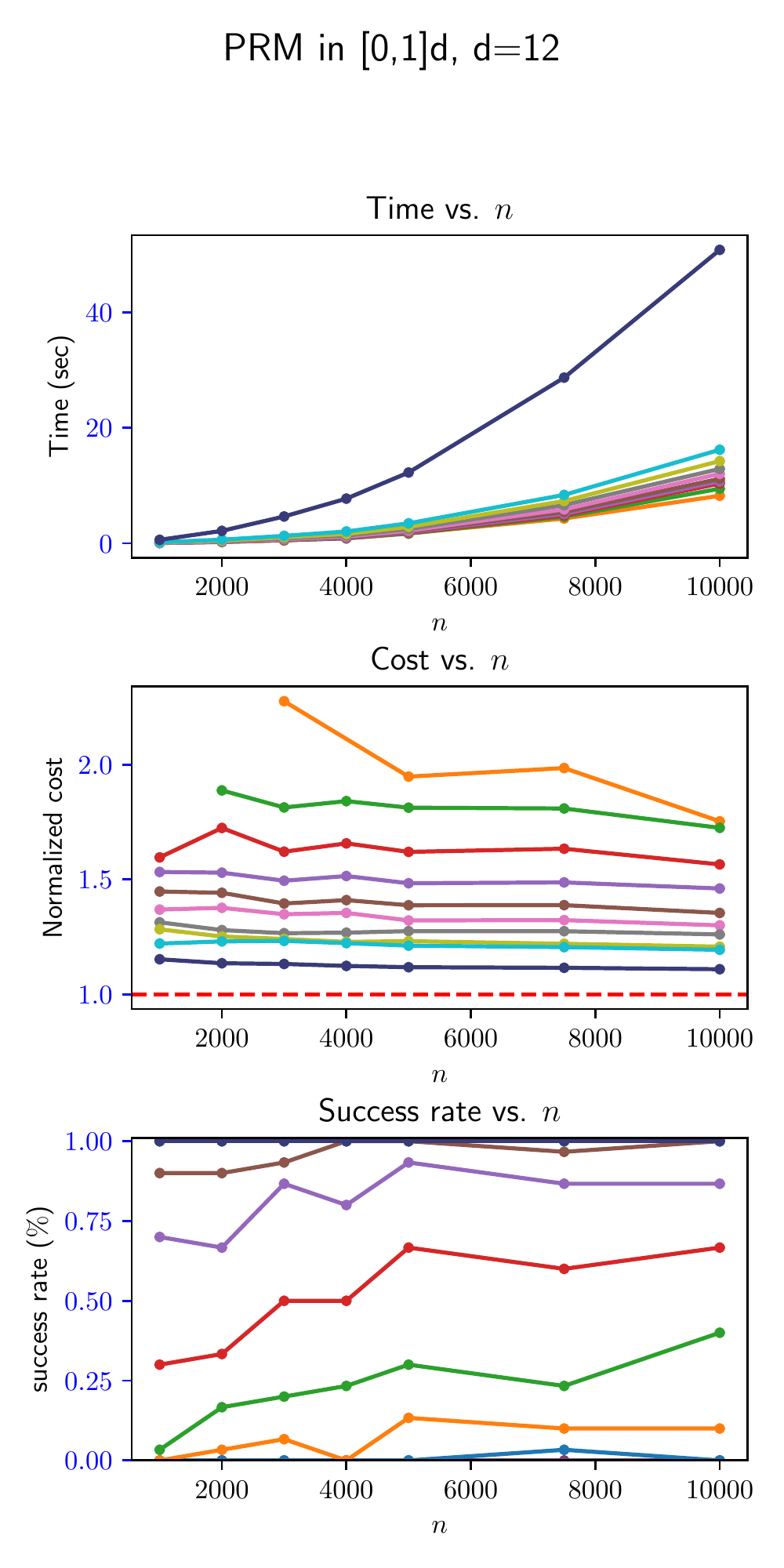}
            \caption{$d=12$}
			\label{fig:prm_12} 
		\end{subfigure}
		\caption{\prm in the (a) $4$D, (b) $8$D, and $12$D obstacle free hypercube.\label{fig:prm_empty}}
	\end{figure*}	
    \vspace{5pt}

	\noindent\textbf{General Euclidean space.}
	We consider the following $d$-dimensional scenario (based on a
    scenario from~\cite{SolETAL16}), for $d\in \{4,8\}$, depicted in
    Figure~\ref{fig:hypercube_with_obst} in 3D, and a point robot: $\C = [0,1]^d$
    is subdivided into $2^d$ sub-cubes by halving it along each axis.
    Each sub-cube contains a centered $d$-dimensional axis-aligned
    hypercubical obstacle that covers 25\% of the sub-cube.  The start
    position $s$ of the robot is placed on the diagonal between
    $(0,\ldots, 0)$ and $(1,\ldots, 1)$, such that it is equidistant
    from the origin and from the closest hypercubical obstacle.  $t$
    is selected similarly, with respect to $(1,\ldots, 1)$.
	
	Here we use \fmt with radii ranging from $r_0$ to $r_{10}$. We
	also ran \prm which exhibited a similar behavior.
	Figure~\ref{fig:fmt_hypercube_with_obst_4d} presents the results
	for $d=4$. We plot the average cost after simplifying the
	resulting paths in addition to the average original cost.  We
	mention that there is a difference in cost between the various
	radii and that costs obtained using larger radii are often better.
	However, after applying OMPL's default path-simplification
	procedure we obtain paths with negligible differences in cost.
	This suggests that paths obtained using the smaller
	connection radii are of the same homotopy class as the path
	obtained using $r_{\text{\fmt}}$.

For $d=8$ (Figure~\ref{fig:fmt_hc_with_obs_8d}), as in the case of $d=4$, costs obtained using larger radii are often better.
However, applying OMPL's default path-simplification
procedure yields paths with negligible differences in cost.
We do mention, though, that the success rates of all radii deteriorate, when compared to $d=4$.

%	The results indicate that the theory extends to configuration
%    spaces with obstacles.  Note that for $d=4$ (see
%    Figure~\ref{fig:hypercube_with_obst}) the cost obtained using
%    $r_{10}$ (light blue) with $n=5$K samples is already close to the
%    optimum and is obtained quite fast; specifically, $r_{10}$ yields
%    a cost of $2.15$ after $0.14$ seconds.  When a smaller radius is
%    used, e.g. $r_8$ (gray), a slightly worse cost of $2.17$ is
%    obtained in a shorter running time of $0.12$ seconds.
%    Figure~\ref{fig:hypercube_with_obst} summarizes the results for
%    $d=8$.  Here the success rates deteriorate, when compared to
%    $d=4$.  Particularly, while the success rates for $r_4$ (red) were
%    above 95\% in the 4D setting, for the same $r_4$ in 8D we obtain a
%    rate of 20\% to 70\%.  However, the costs obtained in the
%    successful runs using $r_5$ (light purple) or larger radii are
%    noticeably close to each other.
%	
	\vspace{5pt}\noindent\textbf{General non-Euclidean space.}
	We use the Cubicles scenario (see Figure~\ref{fig:cubicles})
	provided with the OMPL distribution~\citep{OMPL}.  Here the goal is
	to find a collision-free path for one or 
	two L-shaped robots that need to exchange their positions.
    In the single-robot case, the robot needs to move between the two configurations depicted in red and green. In the two-robot case, these two configurations represent the start positions of the robots, where the goal is for each robot to finish at the start of the other robot.  Since the robots are allowed to
	translate and rotate, then $d=6$ or
	$d=12$, respectively, and $\C$ is non-Euclidean.  Although our theory does not
	directly apply here, we chose to test it empirically for such a
	setting.  	

	We ran our experiments with \fmt using a connection radii ranging
	from $r_0$ to $r_{10}= r_{\text{\fmt}}$. Initially, no solution
	was found in all runs.  Indeed, as was mentioned in~\cite{KSH16},
	this is not surprising, since the theory from which the radius
	values are derived assumes that the configuration space is
	Euclidean. In the same paper the authors propose a heuristic for
	effectively using radius-based connections in motion planning. In
	our experiments (Figure~\ref{fig:fmt_cubicles_12d_r}) we increased
	all radii by a multiplicative factor of $3$ and $10$, respectively, in order to increase
	the success rates. We mention that this yielded similar behavior
	to that when using the connection scheme suggested
	in~\cite{KSH16}.

    Results for the single-robot case are depicted
    in~\ref{fig:fmt_cubicles_1robot}. In this case, even with 25K
    samples already with $r_3$ (green) we were able to obtain high
    successes rates.  The running time was slightly better than that
    of $r_{10}$, whereas the cost was higher than the one obtained
    using $r_{10}$.  However, after applying the smoothing procedure
    the difference in cost decreased significantly.

	Let us proceed to the two-robot setting. Observe in Figure~\ref{fig:fmt_cubicles_12d_r}, that the variance
	 in cost after path simplification is again significantly smaller
	 than the variance in the original cost.  Clearly, smaller radii
	 exhibit shorter running times, but with smaller success rates.
	 However, since the success rate improves as the number of samples
	 $n$ increases, one could use the smaller radii with larger $n$ and
	 still obtain a solution with comparable cost and success rate in
	 shorter running time.  Indeed, using $r_3$ (green) with 75K
	 samples we obtain a solution whose cost (after simplification) is
	 slightly better than the cost obtained using $r_{10}$ with 42K
	 samples (after simplification).  Moreover, the running time of the
	 former is roughly 50\% of the time taken for the latter, and both
	 obtain similar success rates.  This indicates that in such
	 settings, one could benefit from using smaller radii in terms of
	 favorable running times and obtain a comparable or even better
	 cost with similar success rates.	

	\begin{figure*}
		\vspace{-10pt}
		\centering 
		\begin{subfigure}[b]{.35\textwidth}
				\includegraphics[trim={0 0 0 2.5cm},clip, width=1\textwidth]{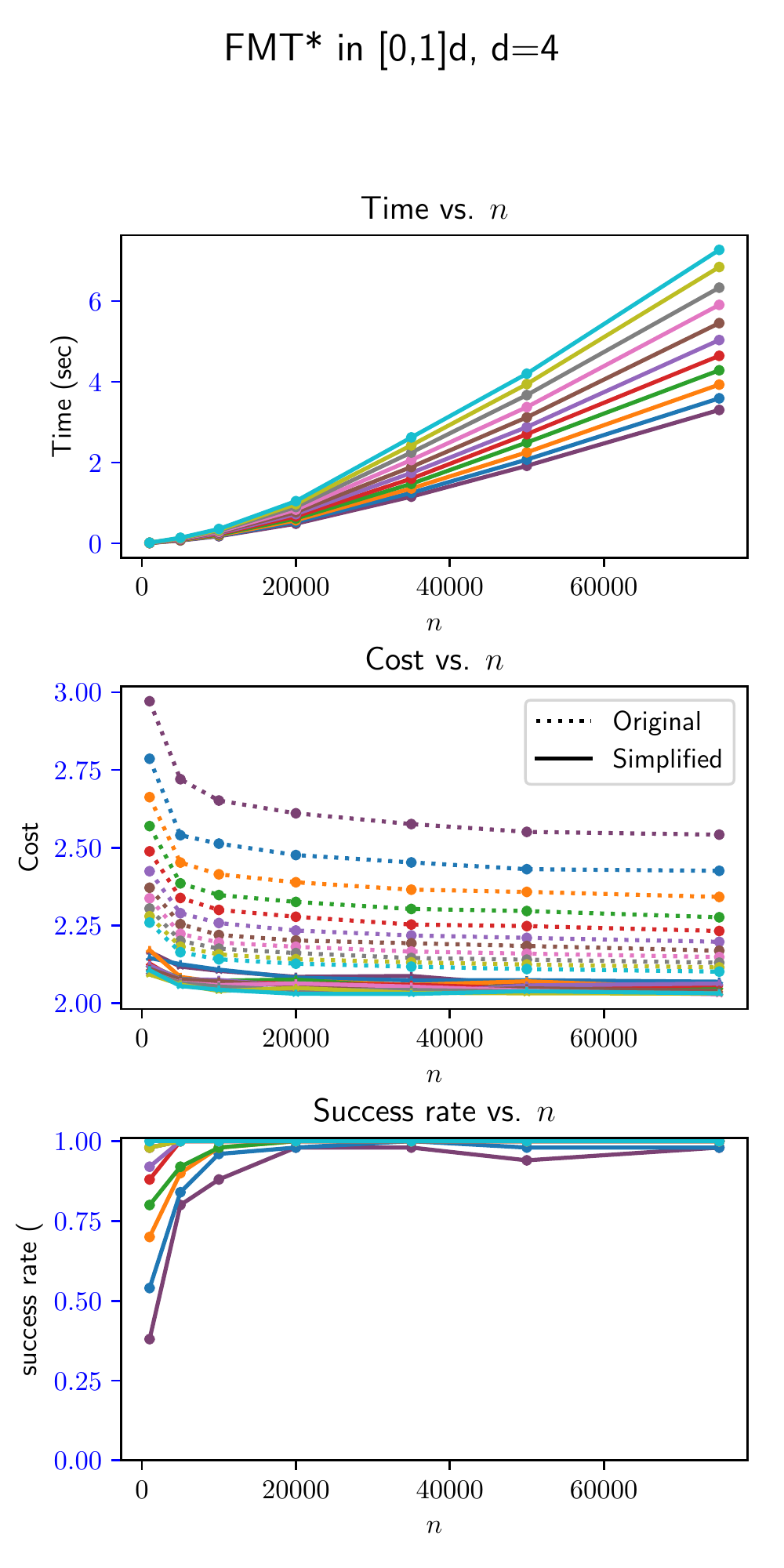}
\caption{$d=4$}
				\label{fig:fmt_hypercube_with_obst_4d}       
		\end{subfigure}
        \begin{subfigure}[b]{.35\textwidth}
			\includegraphics[trim={0 0 0 2.5cm},clip,width=1\textwidth]{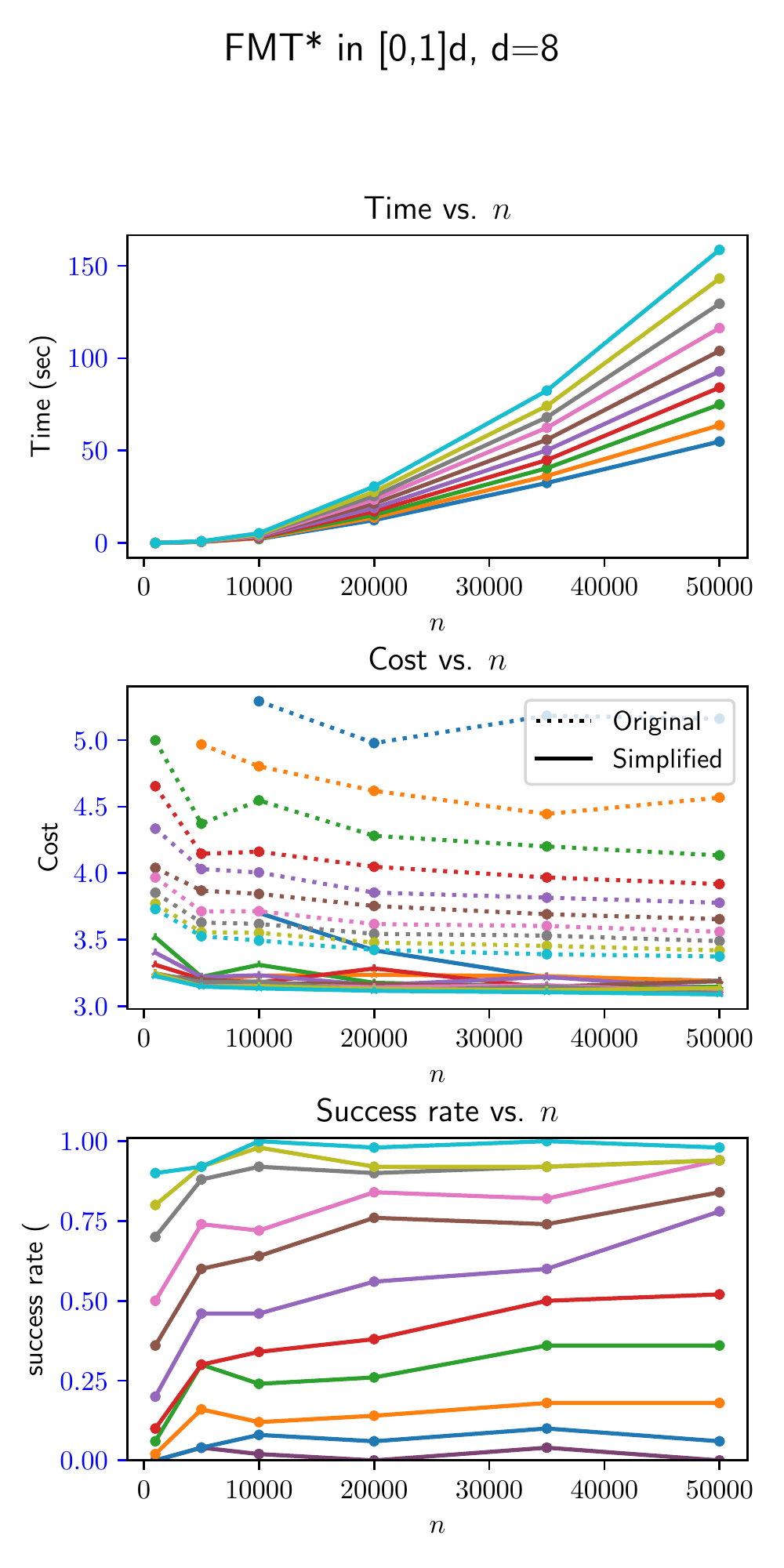}
			\caption{$d=8$}
			\label{fig:fmt_hc_with_obs_8d} 
		\end{subfigure}
		\caption{\fmt for a point robot in the (a) $4$D and (b) $8$D
          hypercube with obstacles scenario
          (Figure~\ref{fig:hypercube_with_obst}). Both the average
          original cost and the average cost after simplification are
          presented for each radius.  There is a difference in cost
          between the various radii (dashed lines), that diminishes
          after simplifying the resulting paths (solid lines).  }
		\label{fig:fmt_cubihcles} 
				\vspace{-10pt}
	\end{figure*}
	
      \begin{figure*}
\vspace{-10pt}
		\centering 
		\begin{subfigure}[b]{.35\textwidth}
			\includegraphics[trim={0 0 0 2.5cm},clip, width=1\textwidth]{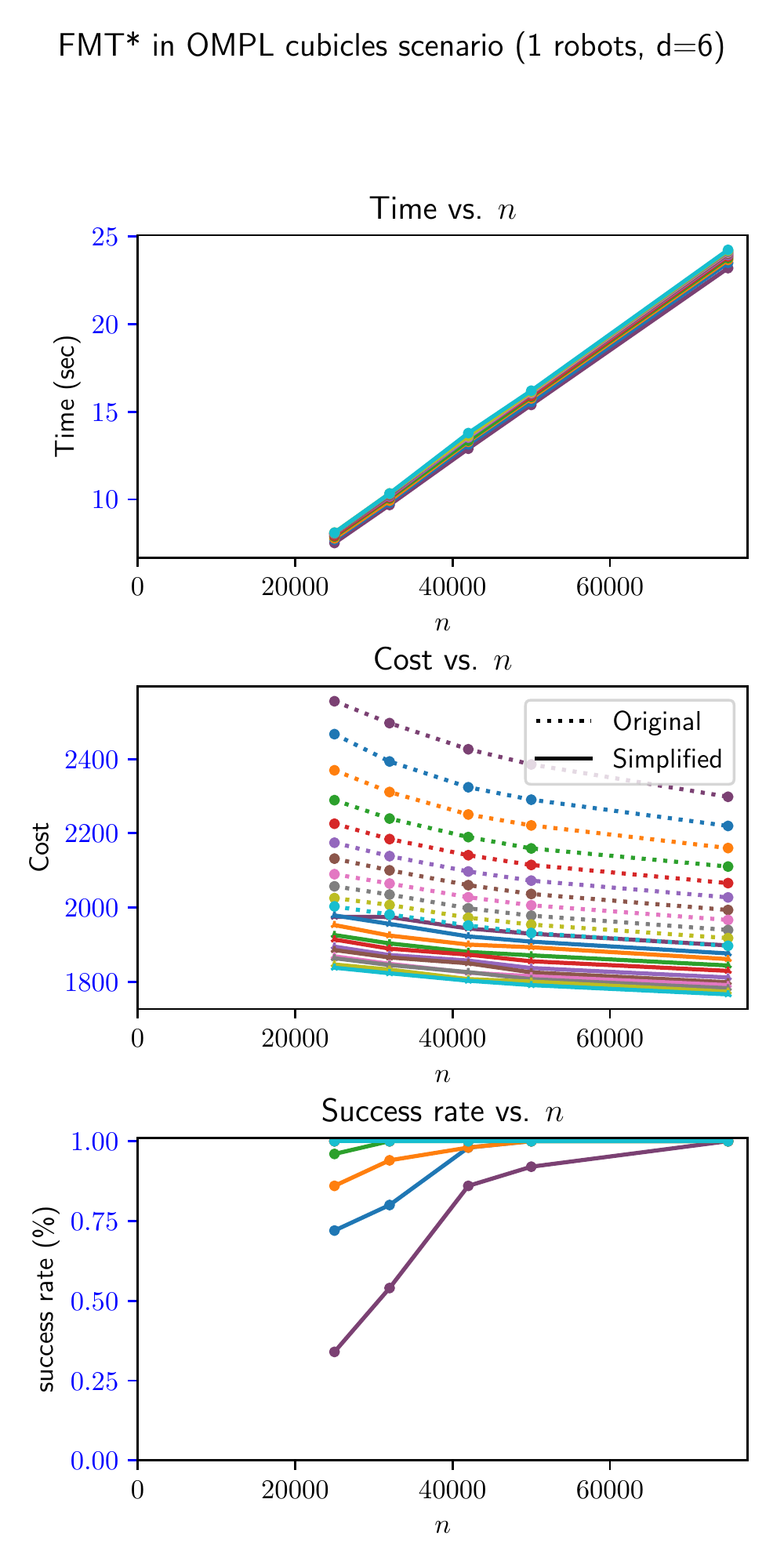}
			\caption{$d=6$}
			\label{fig:fmt_cubicles_1robot} 
		\end{subfigure}
        	\begin{subfigure}[b]{.35\textwidth}
			\includegraphics[trim={0 0 0 2.5cm},clip,width=1\textwidth]{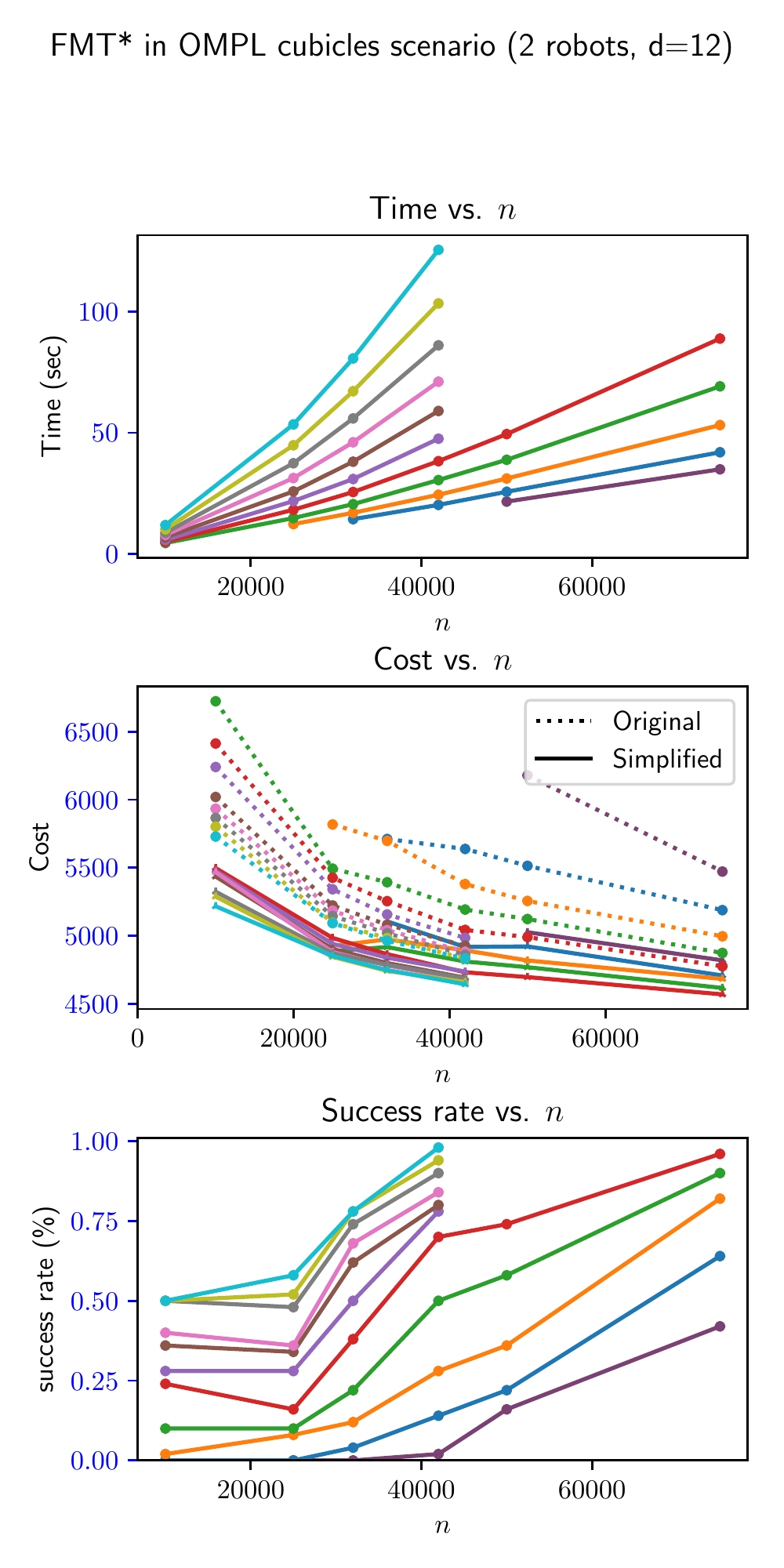}
            \caption{$d=12$}
			\label{fig:fmt_cubicles_12d_r} 
		\end{subfigure}
		\caption{ \fmt for (a) one and (b) two rigid-body robot(s) moving in OMPL's Cubicles scenario (Figure~\ref{fig:cubicles}).
          				Both the average original cost and the average cost after simplification are presented for each radius.}
		\label{fig:fmt_additional_plots} 
        \vspace{-10pt}
	\end{figure*}

	\section{Future work}\label{sec:future}
	In this work we leveraged techniques from percolation theory to
    develop stronger analysis of \prm-based
    sampling-based motion planners. In the hope of providing
    mathematically rigorous presentation, while still being
    accessible, we chose to focus the discussion on simplified,
    possibly unrealistic, robotic systems. In particular, we assumed a
    holonomic system having a Euclidean configuration space.

    Our immediate future goal is to extend our theory to non-Euclidean
    spaces, such as those arising from rigid bodies translating and
    rotating in space. The next challenge will be to extend the model
    to robots with differential constraints. While the latter task
    seems daunting, we should keep in mind that the state space of
    such systems can be modeled as a differential manifold. This may
    allow to locally apply our Euclidean-space techniques to analyze
    manifold spaces. Indeed, a similar approach has already been
    considered in previous work~\citep{SchETAL15, SchETAL15b}.

    The next algorithmic challenge is to consider robotic systems for
    which precise \emph{steering}, i.e., solving the \emph{two-point
      boundary value problem}, cannot be performed, at least not
    efficiently (see discussion in~\cite[Section~1.3]{LiETAL16}). In
    such cases, \prm-based planners (as we considered here), or
    \rrtstar-based techniques, cannot be applied. We pose the
    following question: Is it possible to extend existing planners to
    work in the absence of a steering function, while maintaining
    their theoretical properties? We plan to tackle this question in the near future. 

%     We note that by using a connection radius close to $r^*_n$ each sample is connected to only $O(1)$ neighbors, rather than a logarithmic number. Existing nearest-neighbor search (NN) techniques 
% Using this observation, we may be able to develop 
% tailored neighbor-search techniques that will improve the overall running time. 	
	
\section{Acknowledgments}
	We thank the participants of the \emph{Workshop on Random Geometric
		Graphs and Their Applications to Complex Networks}, 2015, and
	in particular Carl P.\ Dettmann, Mathew Penrose, Paul Balister, and
	Yuval Peres, for fruitful discussions. We also thank Dan Halperin for
	commenting on early drafts of the paper.

    This work has been supported in part by the Israel Science
    Foundation (grant no.~825/15), by the Blavatnik Computer Science
    Research Fund, and by the Blavatnik Interdisciplinary Cyber
    Research Center at Tel Aviv University. M.K.\ is supported in part
    by the Yandex Foundation.  This work was prepared as part of
    K.S.'s Ph.D.\ thesis in Tel Aviv University, with the support of
    the Clore Israel Foundation.

%\theendnotes

\appendix

\section*{Appendix}

	\section{Theory for (semi-)deterministic sampling}\label{sec:deterministic}
    % Sampling-based planners usually employ randomized uniform
	% sampling. A recent work~\cite{LucETAL17} explores the concept
	% of de-randomized deterministic sampling. The authors show that in
	% this setting the connection radius can be lowered to
	% $\omega\left(n^{-1/d}\right)$ while still
	% guaranteeing AO of the underlying planner.\footnote{For two functions $f_n,g_n$
	% 	it holds that $f_n\in \omega(g_n)$ if
	% 	$\lim_{n\rightarrow \infty}f_n/g_n=\infty$.}
	So far we have studied the theoretical properties of various
    sampling-based planners constructed with the PPP sample set
    $\X_n$.  In this section we study similar aspects but for a
    different sampling regime. Particularly, we replace $\X_n$ with
    $\L_n$ which represents a rectilinear lattice, also known as a
    Sukharev sequence. A recent paper by \cite{LucETAL17} studies the
    theoretical properties of \prm and \fmt when constructed with
    $\L_n$ (and other deterministic low-dispersion sequences).  Here
    we introduce a new analysis which shows that \prm constructed with
    $\L_n$ can be further sparsified while maintaining AO. This result
    can also be extended to \fmt, \btt, \rrg and other planners, as
    was done in Section~\ref{sup:other_planners} for the setting of
    PPP.
	
	\subsection{The basics}
	We start with some basic definitions. For a given $n$ define
	$\tilde{r}^*_n=n^{-1/d}$ and $\L_n=\tilde{r}^*_n\cdot
	\mathbb{Z}_d$.
	That is, $\L_n$ contains all the points of a rectangular lattice, with
	edge length $\tilde{r}^*_n$. Observe that $\G(\L_n;\tilde{r}^*_n)$ is
	a deterministic graph which connects by an edge every two points
	$x,x'\in \L_n$ that differ in exactly one coordinate and the distance
	between them is exactly $\tilde{r}^*_n$. We proceed to define a
	randomized subset of $\G(\L_n;\tilde{r}^*_n)$:
	
	\begin{definition}
      Given $0\leq p\leq 1$, the graph
      $\G_n^p=\G(\L_n;\tilde{r}^*_n;p)$ is defined in the following
      manner. Every point of $\L_n$ is added independently with
      probability $p$ to $\G_n^p$ as a vertex. Two vertices
      $x,x'\in \G_n^p$ are connected by an edge if
      $\|x-x'\|\leq \tilde{r}^*_n$.
	\end{definition}
	
	This corresponds to a well-studied model of \emph{site
      percolation}.  Similarly to the continuous model studied in the
    previous section, we say that $\G_n^p$ \emph{percolates} if it
    contains an infinite connected component $C_\infty^p$ that
    includes the origin~$o$. Also denote by $C_n^p$ the largest
    connected component of $[0,1]^d\cap C_\infty^p$.
	
	The following theorem corresponds to a combination of
    Theorem~\ref{thm:radius}, Lemma~\ref{lem:psi}, and
    Theorem~\ref{thm:unique} for the continuous case.
	
	\begin{theorem}{\cite{FraMee08}}
		There exists a critical value $0<p^*<1$ such that for $p<p^*$
		the graph $\G_n^p$ percolates with probability
		zero. If $p>p^*$ then $\G_n^p$ percolates with
		non-zero probability. Furthermore, if $p>p^*$ then $C^p_\infty$ is unique with probability~$1$.
	\end{theorem}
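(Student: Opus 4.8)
The plan is to observe that, after dividing out the lattice spacing $\tilde{r}^*_n$, the random graph $\G^p_n$ is precisely Bernoulli site percolation on the integer lattice $\dZ^d$ with retention parameter $p$: a site is \emph{open} with probability $p$, independently, and two open sites differing by a single lattice step (equivalently, at distance $\tilde{r}^*_n$) are joined by an edge. Since the scaling factor $\tilde{r}^*_n$ plays no combinatorial role, the threshold $p^*$ will depend only on $d$ and not on $n$, as the statement requires. This parallels the continuum trio Theorem~\ref{thm:radius}, Lemma~\ref{lem:psi}, Theorem~\ref{thm:unique}, but now uses the classical discrete tools. I would define the percolation probability $\theta(p)=\Pr_p(o\in C^p_\infty)$ and first establish monotonicity via the standard coupling: attach to each site an independent uniform $[0,1]$ variable and declare it open iff that variable is at most $p$; raising $p$ only opens more sites, so $\theta$ is non-decreasing. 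Setting $p^*=\inf\{p:\theta(p)>0\}$ then immediately gives that $\G^p_n$ percolates with probability $0$ for $p<p^*$ and with positive probability for $p>p^*$. That $\theta(p)=0$ forces the event ``an infinite open cluster exists'' to have measure $0$, and $\theta(p)>0$ forces it to have measure $1$, is the discrete analogue of Lemma~\ref{lem:psi} and follows from Kolmogorov's zero--one law.

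The two nontrivial bounds on $p^*$ come from opposite first-moment estimates. For $p^*>0$ I would use a Peierls/first-moment path count: the number of self-avoiding paths of $m$ sites starting at the origin is at most $2d(2d-1)^{m-2}$, and each such path is entirely open with probability $p^m$; hence the probability that the origin's open cluster $C^p_o$ has $|C^p_o|\geq m$ is bounded, up to a constant, by $(2d-1)^m p^m$, which tends to $0$ as $m\to\infty$ whenever $p<1/(2d-1)$. Thus $\theta(p)=0$ for such $p$ and $p^*\geq 1/(2d-1)>0$. For $p^*<1$ I would first treat $d=2$ by a dual contour argument: a finite open cluster of the origin must be encircled by a closed circuit in the matching lattice; the number of such circuits of length $\ell$ surrounding the origin is at most $c\,\ell\,3^\ell$, each is closed with probability at most $(1-p)^{c'\ell}$, and for $p$ close enough to $1$ the sum over $\ell$ is strictly below $1$, so with positive probability no encircling circuit exists and $\theta(p)>0$. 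For $d\geq 3$ I would embed a two-dimensional sublattice of $\dZ^d$ and invoke monotonicity of the critical value in the dimension to conclude $p^*(d)\leq p^*(2)<1$.

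Finally, uniqueness of $C^p_\infty$ for $p>p^*$ is the discrete counterpart of Theorem~\ref{thm:unique}, and I would obtain it from the Burton--Keane argument. Translation invariance and ergodicity of the i.i.d.\ site measure force the number of infinite open clusters to equal an almost-sure constant $k\in\{0,1,\infty\}$. The case $k=\infty$ is excluded by the trifurcation count: using insertion tolerance one shows that infinitely many infinite clusters would produce a positive density of \emph{trifurcation points}, whence the number of such points inside a large box would have to exceed the size of the box's boundary---impossible because $\dZ^d$ is amenable, so the surface-to-volume ratio of boxes tends to $0$. Hence $k\leq 1$ almost surely; since $p>p^*$ gives $\theta(p)>0$ and therefore $k\geq 1$ by ergodicity, we get $k=1$, so the infinite cluster---and in particular $C^p_\infty$ when the origin belongs to it---is unique with probability $1$.

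I expect the Burton--Keane uniqueness step to be the main obstacle: it is the only part that is not a direct counting estimate, requiring careful bookkeeping of trifurcation points together with the amenability of the lattice, whereas the two threshold bounds are routine Peierls-type arguments and the monotonicity and zero--one statements are immediate.
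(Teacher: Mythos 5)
Your proposal is correct, but note that the paper does not prove this statement at all: the theorem is quoted from the literature (the citation \cite{FraMee08} attached to the theorem environment is the ``proof''), exactly as Theorem~\ref{thm:radius} and Theorem~\ref{thm:unique} are quoted for the continuum model. What you have written is a self-contained reconstruction of the classical arguments that the cited reference packages: the rescaling observation that $\G^p_n$ is Bernoulli site percolation on $\dZ^d$ (so $p^*$ depends only on $d$, not on $n$), coupling monotonicity of $\theta(p)$, the Peierls bound $p^*\geq 1/(2d-1)$, the dual-contour bound $p^*(2)<1$ combined with slice monotonicity $p^*(d)\leq p^*(2)$, Kolmogorov's zero--one law, and Burton--Keane for uniqueness. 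This is the standard and correct route, and it buys a reader something the paper deliberately outsources. Two small points deserve care in a full write-up. First, for \emph{site} percolation on $\dZ^2$ the blocking circuits live in the matching lattice ($\dZ^2$ with diagonal adjacencies added), whose per-step branching number is $7$ rather than $3$; this changes only constants, and your hedge with the exponent $c'\ell$ absorbs it, but the count $3^\ell$ as written is specific to bond percolation on the square lattice. Second, in the uniqueness step, ergodicity alone only gives that the number of infinite clusters is an a.s.\ constant $k\in\{0,1,2,\ldots\}\cup\{\infty\}$; excluding $2\leq k<\infty$ already requires the finite-energy (insertion-tolerance) merging argument, and only afterwards does the trifurcation/amenability count exclude $k=\infty$. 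You attributed the reduction to $\{0,1,\infty\}$ to ergodicity and reserved insertion tolerance for the trifurcation step, whereas tolerance is needed in both places. Neither point is a substantive gap---both repairs are standard---but they should be stated correctly.
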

	
	The next two lemmata serve as  discrete versions of Lemma~\ref{lem:H_n} and Lemma~\ref{lem:H'_n}, respectively. Their proofs are very similar to the original proofs and are therefore omitted. We do mention that they rely on Theorem~5 and Theorem~4 of~\cite{PenPiz96}, respectively.
	
	\begin{lemma}\label{lem:H_n_p}
		Suppose that $p>p^*$.  
		Denote by $\EE'^1_n$ the event that $C_{\infty}^p\cap H_n\subset C_n^p$,
		%Denote by $\EE'^1_n$ the event that (i)
		%$|C^p_n|=\Theta(n)$ and (ii) $C^p_{\infty}\cap H_n\subset C^p_n$,
		where $H_n$ is as defined in Lemma~\ref{lem:H_n}.  Then there exists
		$n_0\in \dN$ and $\alpha>0$ such that for any $n>n_0$ it holds that
		%\[Pr[\EE'^1_n]\geq 1-\exp\left(-\alpha n^{1/(d+1)}\right).\]
		\[\Pr[\EE'^1_n]\geq 1-\exp\left(-\alpha n^{1/d}\log^{-1}n\right).\] 
	\end{lemma}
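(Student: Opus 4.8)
The plan is to transcribe the proof of Lemma~\ref{lem:H_n} into the site-percolation setting, replacing the continuum diameter estimate (Theorem~2 of~\cite{PenPiz96}) with its lattice counterpart, Theorem~5 of~\cite{PenPiz96}. First I would fix the rescaling that matches the regime in which~\cite{PenPiz96} operates: multiplying every point of $\L_n$ by $1/\tilde{r}^*_n=n^{1/d}$ turns $\L_n$ into the standard integer lattice $\dZ^d$ and turns $[0,1]^d$ into the box $[0,n^{1/d}]^d$. Under this map the random vertex set of $\G_n^p$ becomes an ordinary Bernoulli$(p)$ site-percolation configuration, and since $p>p^*$ we are supercritical, so the infinite cluster $C_\infty^p$ is unique with probability~$1$.

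Next I would decompose $C_\infty^p\cap[0,1]^d$ into its connected components $C'_1,\dots,C'_k$ and, exactly as in Lemma~\ref{lem:H_n}, take $C_n^p=C'_1$ to be the one with the most vertices. The topological heart of the argument is unchanged: each $C'_i$ is a piece of the single connected cluster $C_\infty^p$, which is infinite and hence leaves the bounded box $[0,1]^d$; therefore any $C'_i$ must reconnect to the remainder of $C_\infty^p$ through an edge crossing $\partial([0,1]^d)$, so $C'_i$ contains a vertex within distance $\tilde{r}^*_n$ of that boundary. If such a component also meets $H_n$, whose points lie at distance at least $1/\log n$ from the boundary, then $\textup{diam}(C'_i)\geq 1/\log n-\tilde{r}^*_n$. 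Consequently the event $C_\infty^p\cap H_n\subset C_n^p$ can fail only if some non-largest component $C'_i$ ($i\neq 1$) attains this large diameter.

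I would then invoke Theorem~5 of~\cite{PenPiz96}, applied after the rescaling above, which in the supercritical regime bounds the diameter of every cluster other than the giant one: for any sufficiently large $\phi_n$ there are constants $\alpha',n_0$ such that for $n>n_0$, $\Pr[\textup{diam}(C'_i)\geq\phi_n\text{ for some }i\neq 1]\leq\exp(-\alpha' n^{1/d}\phi_n)$. Taking $\phi_n=1/(2\log n)$ and $\alpha=\alpha'/2$, and noting that $\tilde{r}^*_n=n^{-1/d}<1/(2\log n)$ for $n$ large, the lower bound $1/\log n-\tilde{r}^*_n$ exceeds $\phi_n$. Hence the bad event is contained in the event of Theorem~5 and has probability at most $\exp(-\alpha n^{1/d}\log^{-1}n)$, which yields the claimed bound $\Pr[\EE'^1_n]\geq 1-\exp(-\alpha n^{1/d}\log^{-1}n)$.

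I expect the only genuinely delicate step to be checking that Theorem~5 of~\cite{PenPiz96} transfers verbatim after rescaling: in particular that its diameter tail is stated for all non-giant clusters inside a box of side $n^{1/d}$, that the implicit constants are uniform in $n$, and that the ``must cross the boundary'' reduction is compatible with the way~\cite{PenPiz96} indexes clusters of the restricted configuration versus the restriction of the (globally unique) infinite cluster. These are precisely the points at which the discrete model could differ in detail from the continuum argument of Lemma~\ref{lem:H_n}, even though the overall structure is identical; everything else is a routine repetition of that proof.
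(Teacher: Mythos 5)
Your proposal is correct and follows exactly the route the paper intends: the paper omits this proof, stating only that it is ``very similar'' to that of Lemma~\ref{lem:H_n} and relies on Theorem~5 of~\cite{PenPiz96} in place of Theorem~2, which is precisely your substitution, including the same choice $\phi_n=1/(2\log n)$, $\alpha=\alpha'/2$ and the same boundary-crossing diameter argument. Your added care about the lattice rescaling and the uniformity of constants in Theorem~5 is a reasonable elaboration of details the paper leaves implicit.
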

	
	\begin{lemma}\label{lem:H'_n_p}
		Suppose that $p>p^*$. Define $H''_n\subset H_n$ to be a hypercube of
		side length
		\[h''_n=\frac{\beta'\log^{1/(d-1)}n}{n^{1/d}}.\]
		Denote by $\EE'^2_n$ the event that $H''_n\cap C^p_n\neq \emptyset$.
		Then there exist $n_0\in \dN$ and $\beta'_0>0$ such that for
		any $n>n_0,\beta'>\beta'_0$ it holds that
		$\Pr[\EE'^2_n | \EE'^1_n]\geq 1-n^{-1}$.
	\end{lemma}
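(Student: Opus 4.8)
The plan is to mirror the proof of Lemma~\ref{lem:H'_n} step for step, replacing the Poisson rescaling argument by a lattice rescaling and substituting the continuum estimate by its discrete counterpart, Theorem~4 of~\cite{PenPiz96}. The logical skeleton is unchanged: I want to show that the unbounded supercritical cluster $C_\infty^p$ meets the small box $H''_n$ with overwhelming probability, and then use the conditioning on $\EE'^1_n$ to transfer this from $C_\infty^p$ to the bounded-domain component $C_n^p$.

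First I would carry out the rescaling. Since $\L_n=\tilde{r}^*_n\cdot\dZ^d$ with $\tilde{r}^*_n=n^{-1/d}$, scalar multiplication of $\dR^d$ by $1/\tilde{r}^*_n=n^{1/d}$ sends $\L_n$ onto the integer lattice $\dZ^d$ and sends the graph $\G(\L_n;\tilde{r}^*_n;p)$ onto the standard nearest-neighbour site-percolation graph $\G(\dZ^d;1;p)$ with retention probability $p>p^*$. Under this (topology-preserving) map the box $H''_n$ of side $h''_n$ becomes a box of integer side length $L_n=h''_n\cdot n^{1/d}=\beta'\log^{1/(d-1)}n$, containing $L_n^d=(\beta')^d\log^{d/(d-1)}n$ lattice sites, and the event $\{C_\infty^p\cap H''_n\neq\emptyset\}$ is invariant under the rescaling.

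Next I would invoke Theorem~4 of~\cite{PenPiz96}, the discrete surface-order large-deviation estimate for the supercritical infinite cluster. In a box of side $L$ in $\dZ^d$ it guarantees that the number of open sites belonging to $C_\infty^p$ is of order $L^d$---in particular nonzero---except with probability $\exp\!\left(-\Theta(L^{d-1})\right)$. Applying this with $L=L_n$ yields a failure probability of order $\exp\!\left(-c(\beta')^{d-1}\log n\right)=n^{-c(\beta')^{d-1}}$, where $c>0$ depends only on $p$ and $d$. Choosing $\beta'_0$ large enough that $c(\beta'_0)^{d-1}\geq 1$ then gives $\Pr[C_\infty^p\cap H''_n\neq\emptyset]\geq 1-n^{-1}$ for all $\beta'>\beta'_0$, which (as in the continuum case) is already stronger than the bare nonemptiness we need.

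Finally I would close exactly as in Lemma~\ref{lem:H'_n}: conditioning on $\EE'^1_n$ (Lemma~\ref{lem:H_n_p}), every point of $C_\infty^p$ lying in $H_n\supset H''_n$ already belongs to $C_n^p$, so $C_\infty^p\cap H''_n\neq\emptyset$ forces $H''_n\cap C_n^p\neq\emptyset$, giving $\Pr[\EE'^2_n\mid\EE'^1_n]\geq 1-n^{-1}$. The main obstacle I anticipate is verifying that Theorem~4 of~\cite{PenPiz96} genuinely delivers the surface-order exponent $L^{d-1}$ (rather than a weaker volume- or lower-order rate) with a constant $c$ uniform over the relevant range of $\beta'$; pinning down this exponent is precisely what converts the box side $\beta'\log^{1/(d-1)}n$ into a clean $\log n$ in the exponent and hence into the target bound $n^{-1}$.
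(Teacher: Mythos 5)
Your proposal is correct and takes essentially the same approach as the paper: the authors omit the proof, stating only that it is ``very similar'' to that of Lemma~\ref{lem:H'_n} and relies on Theorem~4 of~\cite{PenPiz96}, which is precisely your argument---rescale $\L_n$ to the unit lattice $\dZ^d$, apply the discrete surface-order estimate in the box of side $L_n=\beta'\log^{1/(d-1)}n$ so that $L_n^{d-1}=(\beta')^{d-1}\log n$ yields failure probability $n^{-c(\beta')^{d-1}}\leq n^{-1}$ for $\beta'$ large, and transfer from $C_\infty^p$ to $C_n^p$ by conditioning on $\EE'^1_n$. This matches the continuum proof step for step, exactly as the paper intends.
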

	
	Now we proceed to the discrete counterpart of
	Theorem~\ref{thm:stretch}. 
	For any $v\in \dR^d$ define
	$x(v)=\textup{argmin}_{x\in \L_n}\|x-v\|$. The following is a
	simplified version of~\cite[Theorem~1.1]{AntPis96}.
	
	\begin{theorem}\label{thm:strech_discrete}
		Suppose that $p>p^*$. There exists a positive constant $\xi'\geq
		1$ such that for any two fixed points $v,v'\in \dR^d$ it
		holds that 
        \begin{align*}
		\Pr[x(v),x(v')&\in \C_\infty, \dist\left(\G_n^p, x(v),x(v')\right) \\& >\xi'\|x(v)-x(v')\|]=o(1).
        \end{align*}

	\end{theorem}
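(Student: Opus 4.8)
The plan is to reduce the statement to the chemical-distance estimate of \cite{AntPis96} for supercritical Bernoulli percolation on the integer lattice, via a rescaling argument of exactly the kind used in the proof of Lemma~\ref{lem:H'_n}. First I would dispose of the degenerate case $v=v'$: then $x(v)=x(v')$, so $\dist(\G_n^p,x(v),x(v'))=0$, and since $\xi'\geq 1$ the event inside the probability is empty. Hence assume $v\neq v'$. Then I rescale by the factor $1/\tilde{r}^*_n=n^{1/d}$: the map $\phi(y)=n^{1/d}y$ carries $\L_n$ bijectively onto $\dZ^d$ and transports the site-percolation configuration on $\G_n^p$ onto a standard Bernoulli$(p)$ site percolation on $\dZ^d$, since two points of $\L_n$ are adjacent in $\G_n^p$ exactly when their images differ by a unit coordinate vector. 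Write $a=\phi(x(v))$ and $b=\phi(x(v'))$ for the images of the two nearest-lattice points, and let $D(\cdot,\cdot)$ denote the chemical (graph) distance on $\dZ^d$.

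The next step is to check that the target event is scale invariant. Under $\phi$ the number of edges along any path is preserved, while every Euclidean length is multiplied by $n^{1/d}$; since every edge of $\G_n^p$ has length $\tilde{r}^*_n$, this gives $\dist(\G_n^p,x(v),x(v'))=n^{-1/d}D(a,b)$ and $\|x(v)-x(v')\|=n^{-1/d}\|a-b\|$. The common factor $n^{-1/d}$ cancels, so the event whose probability we must bound is precisely $\{a,b\in C_\infty^p,\ D(a,b)>\xi'\|a-b\|\}$, now phrased entirely on the fixed lattice $\dZ^d$. (Both points lying in $C_\infty^p$ in particular forces them to be connected, so $D(a,b)$ is finite and equals the within-cluster distance, consistent with the convention of \cite{AntPis96}.)

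The key quantitative observation is that this event lands in the large-deviation regime of \cite{AntPis96}. Because $v\neq v'$ are fixed and the lattice spacing $\tilde{r}^*_n\to 0$, we have $x(v)\to v$ and $x(v')\to v'$, so for all sufficiently large $n$ it holds that $\|x(v)-x(v')\|\geq \tfrac12\|v-v'\|>0$, whence $\|a-b\|=n^{1/d}\|x(v)-x(v')\|\geq \tfrac12 n^{1/d}\|v-v'\|\to\infty$. Theorem~1.1 of \cite{AntPis96} supplies a constant $\rho=\rho(p,d)\geq 1$ and constants $C_1,C_2>0$ such that for every pair of distinct sites,
\[
\Pr\!\left[a,b\in C_\infty^p,\ D(a,b)>\rho\,\|a-b\|\right]\leq C_1\,e^{-C_2\|a-b\|}.
\]
Taking $\xi'=\rho$ and inserting the lower bound on $\|a-b\|$ yields a probability at most $C_1\exp\!\left(-\tfrac12 C_2\|v-v'\|\,n^{1/d}\right)$, which tends to $0$. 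This is exactly the claimed $o(1)$ bound, and in fact an exponentially small one.

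The step that requires the most care is matching the precise form of the estimate in \cite{AntPis96} to the present setting, since that reference is customarily stated for bond percolation and in terms of the $\ell^1$ norm. I would note that the same renormalization argument applies verbatim to site percolation above $p^*$, and that the equivalence of norms on $\dR^d$ only rescales the constants, which can be absorbed into $\rho$ and $C_2$. The sole reason the present statement is weaker than \cite{AntPis96}---an $o(1)$ limit rather than an explicit exponential rate---is that the motion-planning application needs nothing more, which is precisely why the theorem is labelled a \emph{simplified} version.
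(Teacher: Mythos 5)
Your proof is correct and takes essentially the same route as the paper: the paper offers no argument of its own, stating the theorem only as a ``simplified version of \cite[Theorem~1.1]{AntPis96}'', and your rescaling from $\L_n$ to $\dZ^d$, the scale-invariance check, and the observation that $\|a-b\|\to\infty$ for fixed $v\neq v'$ supply exactly the translation steps the paper leaves implicit. The bond-versus-site and $\ell^1$-norm caveats you flag are glossed over by the paper as well, so your treatment is, if anything, more careful than the original.
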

	
	For the subcritical regime ($p<p^*$), we have the following
	theorem. Define 
	\[S_k=\left\{x\in \L_n \middle| \frac{\|x\|_1}{\tilde{r}^*_n}\leq
	k\right\}.\] 
	That is, $S_k$ represents all the points in $\L_n$ that can be
	reached from $o$ by at most $k$ edges of length $\tilde{r}^*_n$ over the
	corresponding rectilinear lattice. Denote by $\partial S_k$ the
	boundary of $S_k$, i.e., all the points that are exactly $k$
	edges from $o$. Let $A_k$ be the event that $\G^p_n$ contains a
	path from $o$ to some vertex in $\S_k$.
	
	\begin{theorem}{(\cite[Theorem~5.4]{Gri99})}
		If $p<p^*$, there exists a constant $c>0$ such that
		$\Pr[A_k]<e^{-ck}$ for all $k$. 
	\end{theorem}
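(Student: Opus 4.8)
The plan is to recognize this statement as the exponential-decay-of-the-cluster-radius theorem for subcritical percolation and to reduce it to the cited result of \cite{Gri99} rather than reproving the sharpness of the phase transition from scratch. First I would remove the scaling: since $A_k$ is defined purely in terms of the number of lattice edges traversed from $o$, it is invariant under the dilation by $\tilde{r}^*_n$, so $\G^p_n$ may be replaced by ordinary site percolation on $\dZ^d$ at density $p$, and $S_k$ becomes the $\ell^1$-ball $\{x\in\dZ^d:\|x\|_1\leq k\}$. Under this reduction, $A_k$ is exactly the event that the open cluster of the origin reaches graph-distance $k$, i.e. the event $\{o\leftrightarrow \partial S_k\}$ that $o$ is joined by an open path to the sphere $\partial S_k$.

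Next I would invoke exponential decay in the subcritical regime. The theorem of \cite{Gri99} asserts that for $p$ below the critical probability $p^*$ there is a constant $\psi(p)>0$ with $\Pr[o\leftrightarrow \partial B_k]\leq e^{-\psi(p)k}$, where $B_k$ denotes a box (sup-norm ball) of radius $k$. To pass from the box to the $\ell^1$-sphere $\partial S_k$ I would use the elementary norm comparison $\|x\|_\infty\leq\|x\|_1\leq d\|x\|_\infty$: any point with $\|x\|_1=k$ satisfies $\|x\|_\infty\geq k/d$, so reaching $\partial S_k$ forces the cluster to reach sup-norm distance at least $k/d$. Hence $A_k\subseteq\{o\leftrightarrow\partial B_{\lceil k/d\rceil}\}$, and setting $c=\psi(p)/d>0$ yields $\Pr[A_k]\leq e^{-\psi(p)k/d}=e^{-ck}$, which is the desired bound (the strict inequality is obtained by shrinking $c$ slightly, or by noting the bound is already strict for $k\geq 1$).

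The genuine content, and the step I expect to be the main obstacle, is that the cited theorem is classically stated for \emph{bond} percolation and for sup-norm balls, whereas here the model is \emph{site} percolation on $\dZ^d$ with the adjacency (hence the $\ell^1$-metric) inherited from $\L_n$. I would therefore have to justify that the exponential-decay theorem transfers to the site model. This is standard but must be stated: the Menshikov and Aizenman--Barsky arguments establishing sharpness of the threshold (as well as the more recent Duminil-Copin--Tassion proof) rely only on the FKG inequality, Russo's formula, and BK-type combinatorics, all of which hold unchanged for i.i.d.\ site percolation, so the conclusion $\Pr[o\leftrightarrow\partial B_k]\leq e^{-\psi(p)k}$ holds verbatim with a site-percolation constant $\psi(p)>0$ for every $p<p^*$. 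Once this transfer is granted, the norm comparison above closes the argument with $c=c(p,d)>0$, and no delicate estimate remains.
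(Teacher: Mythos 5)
Your proposal is correct and takes essentially the same route as the paper: the paper offers no proof of this statement at all, importing it directly as a citation to \cite[Theorem~5.4]{Gri99}, which is exactly the exponential-decay result you reduce to. Your additional bridging steps---the rescaling of $\L_n$ to $\dZ^d$, the $\ell^1$-versus-$\ell^\infty$ comparison costing only a factor $d$ in the constant, and the observation that the sharpness arguments (FKG, Russo, BK) transfer verbatim from bond to site percolation---are precisely the routine details the paper's bare citation leaves implicit, and they are handled correctly.
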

	
	This theorem implies that in the subcritical regime it is
	improbable that $\G_n^p$ has a path connecting two lattice points $x(v),x(v')$ as
	defined above, as such a path must have $\omega(1)$ edges.
	
	\subsection{Back to motion planning}
	To conclude this section, we define a sparsified version of
    \prm defined on the deterministic samples $\L_n$, and state its
    theoretical properties. The proof is omitted as it is almost
    identical to the one obtained in Section~\ref{sec:prm}, with the
    difference that it now uses theorems and lemmata presented earlier
    this section, rather than the ones from
    Section~\ref{sec:elements}.
	
	Recall that \prm uses the two connection radii $r_n,r_n^{st}$. For simplicity we fix $r_n=\tilde{r}^*_n$. 
	
	\begin{definition}\label{def:prm_deter}
		The sparsified \prm graph $\P_n^p$ is the union between
		$\G^p_n\cap \F$ and the supplementary edges
		\[\bigcup_{y\in \{s,t\}} \left\{(x,y) \middle| x\in \G^p_n\cap
		\B_{r_n^{st}}(y)\textup{ and } xy\subset \F\right\}. \]
	\end{definition}
	
	\begin{theorem}\label{thm:prm_discrete}
		Suppose that $(\F,s,t)$ is robustly feasible. Then there exists
		$p^*>0$ such that the following holds:
		\begin{enumerate}[i.]
			\item If $p<p^*$ and $r^{st}_n=\infty$ then \prm fails (to find a
			solution) \aas
			\item Suppose that
			$p>p^*$. There exists $\beta'_0>0$ such that for 
			$r^{st}_n = \frac{\beta'\log^{1/(d-1)}n}{n^{1/d}}$, where $\beta'>\beta'_0$, and any 
			$\eps>0$ the following holds \aas:
			\begin{enumerate}[1.]
				\item $\P^p_n$ contains a path $\pi_n \in \Pi_{s,t}^\F$
				with $\len(\pi_n)\leq(1+\eps)\xi'\cdot\lenopt$;
				\item If $\M$ is well behaved then, $\P^p_n$ contains a path
				$\pi'_n \in \Pi_{s,t}^\F$ with
				$\bot(\pi'_n,\M)\leq (1+\eps) \botopt$.
			\end{enumerate}
		\end{enumerate}
	\end{theorem}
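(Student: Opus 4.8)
The plan is to follow the proof of Theorem~\ref{thm:prm} essentially verbatim, substituting each continuum ingredient of Section~\ref{sec:elements} by its site-percolation counterpart: Lemma~\ref{lem:H_n_p} in place of Lemma~\ref{lem:H_n}, Lemma~\ref{lem:H'_n_p} in place of Lemma~\ref{lem:H'_n}, Theorem~\ref{thm:strech_discrete} in place of Theorem~\ref{thm:stretch}, and the subcritical bound $\Pr[A_k]<e^{-ck}$ in place of Proposition~\ref{prop:log}. The critical density $\gamma^{*}$ is replaced by the site threshold $p^{*}$, and the stretch constant $\xi$ by $\xi'$.

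For Case~i I would reuse the obstacle scenario of Figure~\ref{fig:incomplete}, in which every $s$--$t$ path must cross a free corridor of width exceeding $1/2$, while $s,t$ themselves admit no single supplementary edge into that corridor (so $r^{st}_n=\infty$ is irrelevant). Any crossing path in $\G^p_n$ uses at least $(1/2)/\tilde{r}^*_n=\Omega(n^{1/d})$ lattice edges, hence reaches a vertex at lattice-graph distance $k=\Omega(n^{1/d})$ from its first vertex. By the subcritical decay theorem the probability of an open path of that length from a fixed lattice point is at most $e^{-ck}=e^{-\Omega(n^{1/d})}$; a union bound over the $O(n)$ points of $\L_n\cap[0,1]^d$ leaves the total at $O(n)\cdot e^{-\Omega(n^{1/d})}=o(1)$, so \aas no crossing path exists and \prm fails.

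For Case~ii I would fix $\eps>0$ and take the robust (respectively $\M$-robust) path $\pi_\eps$ with clearance $\delta$ and $\len(\pi_\eps)\leq(1+\eps)\lenopt$, place waypoints $p_1=s,\ldots,p_k=t$ along $\pi_\eps$ at arc-length spacing $\delta/2\xi'$ (so $k=O(1)$ and consecutive waypoints are a fixed Euclidean distance of order $\delta/\xi'$ apart, i.e.\ $\Theta(n^{1/d})$ lattice steps), and then invoke Lemma~\ref{lem:H'_n_p} on the hypercube $H''_n(p_i)$ centered at each $p_i$ together with Lemma~\ref{lem:H_n_p} (event $\EE'^1_n$) to obtain, with probability $1-O(n^{-1})$ after a union bound over the $k$ waypoints, points $q_i\in C^p_n$ with $\|q_i-p_i\|=O(h''_n)=o(1)$ --- the analogue of Claim~\ref{clm:close_points}. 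The candidate is $\pi_n:=s\rightarrow q_1\leadsto\cdots\leadsto q_k\rightarrow t$, with each $q_i\leadsto q_{i+1}$ a shortest path in $\G^p_n$. Collision-freeness and the length bound $\len(\pi_n)\leq(1+\eps)\xi'\lenopt$ then follow exactly as in Claim~\ref{clm:free} and Claim~\ref{clm:short} once the graph distances $\dist(\G^p_n,q_i,q_{i+1})$ are controlled; the bottleneck conclusion is obtained, as in Case~ii.2, by working with $\delta^{*}=\min\{\delta,\delta'\}$ and carrying the multiplicative clearance estimate through.

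The single genuine divergence --- and the step I expect to be the main obstacle --- is the stretch estimate. Theorem~\ref{thm:stretch} was a \emph{uniform} statement (one high-probability event controlling all pairs simultaneously), whereas Theorem~\ref{thm:strech_discrete} is only \emph{pointwise}, bounding $\dist(\G^p_n,x(v),x(v'))$ for each \emph{fixed} pair $v,v'$. Since the $q_i$ are random (they depend on the percolation configuration) I cannot feed them directly into a pointwise theorem. My plan is to upgrade the pointwise bound to a uniform one over the relevant candidate set: for each consecutive interval I would apply the underlying Antal--Pisztora estimate~\cite{AntPis96} (of which Theorem~\ref{thm:strech_discrete} is a simplification, and which in fact decays exponentially in $\|x-x'\|$) to \emph{every} pair of lattice points $(x,x')\in(H''_n(p_{i-1})\cap\L_n)\times(H''_n(p_i)\cap\L_n)$. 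There are only polylogarithmically many such pairs, each at Euclidean distance $\Theta(1)$, hence $\Theta(n^{1/d})$ lattice steps apart, so the per-pair failure probability is $e^{-\Omega(n^{1/d})}$ and the union bound over all pairs and all $k$ intervals is still $e^{-\Omega(n^{1/d})}=o(1)$. This defines the event $\EE'^3_n$ and in particular yields the required bound for the random pair $(q_{i-1},q_i)$. Intersecting $\EE'^1_n$, $\EE'^3_n$ and the waypoint event $\EE'^4_n$ and combining their probabilities exactly as in the displayed computation closing Case~ii.1 then shows that all conclusions hold \aas, completing the proof.
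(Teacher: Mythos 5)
Your proposal matches the paper's own approach: the paper in fact omits this proof entirely, stating only that it is ``almost identical'' to the proof of Theorem~\ref{thm:prm} with the discrete ingredients (Lemma~\ref{lem:H_n_p}, Lemma~\ref{lem:H'_n_p}, Theorem~\ref{thm:strech_discrete}, and the subcritical decay bound) substituted for the continuum ones---which is precisely your plan, including the Figure~\ref{fig:incomplete} scenario for Case~i. Your additional patch for the pointwise-versus-uniform mismatch in Theorem~\ref{thm:strech_discrete} (union-bounding the exponentially decaying Antal--Pisztora estimate over the polylogarithmically many lattice pairs in the waypoint hypercubes) is sound and addresses a genuine subtlety that the paper's ``proof omitted'' remark silently glosses over.
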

\vspace{5pt}

% \section{Additional experimental results}\label{sup:experimental}

% We summarize the results of \fmt for one L-shaped robot moving in OMPL's cubicles scenario (See Figure~\ref{fig:cubicles}).
% As in the experiment with two robots, using the original range of radii  no solution
% 	was found in all runs.  
% 	We multiplied
% 	all radii by a factor of $3$ in order to increase
% 	the success rates. 
% 	Results are depicted in Figure~.

\section{Probabilistic completeness of RRT with samples from a PPP}\label{sec:rrt_pc}
This section is devoted to the proof of Claim~\ref{claim:rrt}. 
A probabilistic completeness proof of RRT, for the setting of uniform random sampling, is given in~\cite[Section~3]{KSKBH18}.
We describe here how to modify this proof for samples from a PPP, which are of interest in the present paper. 

Note that \rrt is an incremental planning algorithm, which generates samples one after the other, and connects the
current sample to previous ones. 
Therefore, it should employ the incremental PPP sampling defined in Claim~\ref{clm:inc_ppp}.

We wish to apply the line of arguments as for the uniform-sampling case described in~\cite[Theorem~1]{KSKBH18}. 
This theorem assumes that a path~$\pi$ from start to target exists, whose clearance is $\delta>0$ and length is $L$. The path~$\pi$
is covered with a set of $m$ balls of radius $r$, where $m= L/r$ and $r$ is a constant depending on both the steering parameter $\eta$ of \rrt and the clearance $\delta$.
The proof describes the process as a Markov chain with transition probability $p$, 
where $p$ is the probability  to produce an \rrt vertex in the $(i+1)$st ball $\B_r(x_{i+1})$ given that there exists an \rrt vertex in the $i$th ball $\B_r(x_{i})$.
It then shows that the probability that \rrt will not produce samples within all $m$ balls decays to zero exponentially as the number of samples tends to infinity. 
To adapt the proof for samples from a PPP, it suffices to outline how  to compute the transition probability $p$, 

Recall that at each iteration $i$
we draw a sample $N_i\in N$, where $N$ is a Poisson random variable with mean $1$,
and define $\X_i=\{X_1,\ldots,X_{N_i}\}$ to be $N_i$ points chosen
independently and uniformly at random from~$[0,1]^d$.
Let $A$ denote the event that $N_i\geq 1$,
and let $B$ denote the event that at least one of $X_1,\ldots,X_{N_i}$ falls inside $\B_{r}(x_{i+1})$.
Since $p=\Pr[A\cap B] = \Pr[A]\cdot\Pr[B|A]$, we will bound $\Pr[A],\Pr[B|A]$.
Observe that \[\Pr[A] =  \Pr[N_i\geq1] = 1-\Pr[N_i = 0] = 1- e^{-1},\] and note that $\Pr[B|A]\geq |\B_{r}|/|\F|$.	
Therefore, \[p = \Pr[A\cap B] \geq (1- e^{-1})\cdot|\B_{r}|/|\F|.\] 

The rest of the proof of Claim~\ref{claim:rrt} follows the same lines as~\cite[Theorem~1]{KSKBH18}.

    \section{Critical values}\label{sec:critical_values}
    In this section we provide for reference a list of (estimates) of the values  $\gamma^*$ and $p^*$.

	\subsection{Continuum percolation}
	Table~\ref{tbl:continuum} presents an estimate of $\gamma^*$ such that
	$\G_n$ percolates only if $r_n>\gamma^*n^{-1/d}$, for
	$2\leq d\leq 11$. It is directly derived
	from~\cite[Table~I]{TorJia12}, where an estimate of the critical
	node degree $\Delta(n,r_n)$ is derived. We mention that for $d$ large
	enough $\gamma^*\sim \frac{1}{2b_d^{1/d}}$~\citep{TorJia12}, where
	$b_d$ is volume of the Lebesgue measure of the unit ball
	in~$\dR^d$.
	
	\begin{table}[h]
		\centering
		\begin{tabular}{l l}
			\hline \hline
			$d$ & $\gamma^*$ \\
			\hline
			$2$ & $0.5992373341(3)$ \\
			$3$ & $0.4341989179(2)$ \\
			$4$ & $0.4031827664(5)$ \\ 
			$5$ & $0.4007817822(7)$ \\ 
			$6$ & $0.4067135508(5)$ \\ 
			$7$ & $0.4178609367(3)$ \\
			$8$ & $0.4317877097(6)$ \\ 
			$9$ & $0.447061366(4)$ \\
			$10$ & $0.462335684(3)$ \\ 
			$11$ & $0.4773913785(3)$ \\
			\hline \hline
		\end{tabular}
		\caption{Values of  $\gamma^*$ in continuum
			percolation.}
          \label{tbl:continuum}
	\end{table}

	\subsection{Lattice percolation}
	Table~\ref{tbl:lattice} presents the value of $p^*$ for $d=2$, the
	best known estimates for $4\leq d\leq 13$. Numbers in round
	brackets are single standard deviations.
	
	\begin{table}
		\centering
		\begin{tabular}{l l}
			\hline \hline
			$d$ & $p^*$ \\
			\hline
			$2$ & $0.5$ \\
			$3$ & $0.24881182(10)$\\
			$4$ & $0.1601314(13)$\\
			$5$ & $0.118172(1)$\\
			$6$ & $0.0942019(6)$\\
			$7$ & $0.0786752(3)$\\
			$8$ & $0.06770839(7)$\\
			$9$ & $0.05949601(5)$\\
			$10$ & $0.05309258(4)$\\
			$11$ & $0.04794969(1)$\\
			$12$ & $0.04372386(1)$\\ 
			$13$ & $0.04018762(1)$\\
			\hline \hline
		\end{tabular}
		\caption{Values of $p^*$ in lattice
			percolation. The case for $d=2$ is due to~\cite{FraMee08}; $d=3$ is due to~\cite{WanETAL13}; $d\in \{4,\ldots, 13\}$ is due to~\cite{Gra03}.}
          \label{tbl:lattice}
	\end{table}

\bibliographystyle{plainnat}
\bibliography{bibliography.bib}

\begin{thebibliography}{52}
\providecommand{\natexlab}[1]{#1}
\providecommand{\url}[1]{\texttt{#1}}
\expandafter\ifx\csname urlstyle\endcsname\relax
  \providecommand{\doi}[1]{doi: #1}\else
  \providecommand{\doi}{doi: \begingroup \urlstyle{rm}\Url}\fi

\bibitem[Adler et~al.(2015)Adler, de~Berg, Halperin, and Solovey]{AdlETAL15}
Aviv Adler, Mark de~Berg, Dan Halperin, and Kiril Solovey.
\newblock Efficient multi-robot motion planning for unlabeled discs in simple
  polygons.
\newblock \emph{{IEEE} Trans. Automation Science and Engineering}, 12\penalty0
  (4):\penalty0 1309--1317, 2015.

\bibitem[Alterovitz et~al.(2011)Alterovitz, Patil, and Derbakova]{APD11}
Ron Alterovitz, Sachin Patil, and Anna Derbakova.
\newblock Rapidly-exploring roadmaps: Weighing exploration vs. refinement in
  optimal motion planning.
\newblock In \emph{IEEE International Conference on Robotics and Automation
  (ICRA)}, pages 3706--3712, 2011.

\bibitem[Antal and Pisztora(1996)]{AntPis96}
Peter Antal and Agoston Pisztora.
\newblock On the chemical distance for supercritical bernoulli percolation.
\newblock \emph{The Annals of Probability}, 24\penalty0 (2):\penalty0
  1036--1048, 1996.

\bibitem[Arslan and Tsiotras(2013)]{AT13}
Oktay Arslan and Panagiotis Tsiotras.
\newblock Use of relaxation methods in sampling-based algorithms for optimal
  motion planning.
\newblock In \emph{IEEE International Conference on Robotics and Automation
  (ICRA)}, pages 2413--2420, 2013.

\bibitem[Baykal and Alterovitz(2017)]{BayAlt17}
Cenk Baykal and Ron Alterovitz.
\newblock Asymptotically optimal design of piecewise cylindrical robots using
  motion planning.
\newblock In \emph{Robotics: Science and Systems}, 2017.

\bibitem[Bollob\'{a}s and Riordan(2006)]{BolRio06}
Bela Bollob\'{a}s and Oliver Riordan.
\newblock \emph{Percolation}.
\newblock Cambridge University Press, 2006.

\bibitem[Canny(1988)]{Can88}
John Canny.
\newblock \emph{The complexity of robot motion planning}.
\newblock MIT press, 1988.

\bibitem[Dobson and Bekris(2014)]{DB14}
Andrew Dobson and Kostas~E. Bekris.
\newblock Sparse roadmap spanners for asymptotically near-optimal motion
  planning.
\newblock \emph{International Journal of Robotics Research}, 33\penalty0
  (1):\penalty0 18--47, 2014.

\bibitem[Dobson et~al.(2015)Dobson, Moustakides, and Bekris]{DobETAL15}
Andrew Dobson, George~V. Moustakides, and Kostas~E. Bekris.
\newblock Geometric probability results for bounding path quality in
  sampling-based roadmaps after finite computation.
\newblock In \emph{{IEEE} International Conference on Robotics and Automation},
  pages 4180--4186, 2015.

\bibitem[Dobson et~al.(2017)Dobson, Solovey, Shome, Halperin, and
  Bekris]{DobETAL17}
Andrew Dobson, Kiril Solovey, Rahul Shome, Dan Halperin, and Kostas~E.\ Bekris.
\newblock Scalable asymptotically-optimal multi-robot motion planning.
\newblock In \emph{International Symposium on Multi-Robot and Multi-Agent
  Systems}, 2017.

\bibitem[Franceschetti and Meester(2008)]{FraMee08}
Massimo Franceschetti and Ronald Meester.
\newblock \emph{Random Networks for Communication: From Statistical Physics to
  Information Systems}.
\newblock Cambridge Series in Statistical and Probabilistic Mathematics.
  Cambridge University Press, 2008.

\bibitem[Frazzoli and Pavone(2013)]{FraPav15}
Emilio Frazzoli and Marco Pavone.
\newblock Multi-vehicle routing.
\newblock In John Baillieul and Tariq Samad, editors, \emph{Encyclopedia of
  Systems and Control}, pages 1--11. Springer London, 2013.

\bibitem[Friedrich et~al.(2013)Friedrich, Sauerwald, and Stauffer]{FriSauSta13}
Tobias Friedrich, Thomas Sauerwald, and Alexandre Stauffer.
\newblock Diameter and broadcast time of random geometric graphs in arbitrary
  dimensions.
\newblock \emph{Algorithmica}, 67\penalty0 (1):\penalty0 65--88, 2013.

\bibitem[Gammell et~al.(2015)Gammell, Srinivasa, and Barfoot]{GSB15}
Jonathan~D. Gammell, Siddhartha~S. Srinivasa, and Timothy~D. Barfoot.
\newblock Batch informed trees ({BIT}*): Sampling-based optimal planning via
  the heuristically guided search of implicit random geometric graphs.
\newblock In \emph{{IEEE} International Conference on Robotics and Automation
  ({ICRA})}, pages 3067--3074, 2015.

\bibitem[Grassberger(2003)]{Gra03}
Peter Grassberger.
\newblock Critical percolation in high dimensions.
\newblock \emph{Phys. Rev. E}, 67:\penalty0 036101, Mar 2003.

\bibitem[Grimmett(1999)]{Gri99}
Geoffrey~R. Grimmett.
\newblock \emph{Percolation}, volume 321 of \emph{Comprehesive Studies in
  Mathematics}.
\newblock Springer, 1999.

\bibitem[Halperin et~al.(2016{\natexlab{a}})Halperin, Kavraki, and
  Solovey]{HKS16}
Dan Halperin, Lydia Kavraki, and Kiril Solovey.
\newblock Robotics.
\newblock In Jacob~E. Goodman, Joseph O'Rourke, and Csaba~D. Toth, editors,
  \emph{Handbook of Discrete and Computational Geometry}, chapter~51. CRC Press
  LLC, 3rd edition, 2016{\natexlab{a}}.
\newblock URL \url{http://www.csun.edu/~ctoth/Handbook/HDCG3.html}.

\bibitem[Halperin et~al.(2016{\natexlab{b}})Halperin, Salzman, and
  Sharir]{HSS16}
Dan Halperin, Oren Salzman, and Micha Sharir.
\newblock Algorithmic motion planning.
\newblock In Jacob~E. Goodman, Joseph O'Rourke, and Csaba~D. Toth, editors,
  \emph{Handbook of Discrete and Computational Geometry}, chapter~50. CRC Press
  LLC, 3rd edition, 2016{\natexlab{b}}.
\newblock URL \url{http://www.csun.edu/~ctoth/Handbook/HDCG3.html}.

\bibitem[Holladay et~al.(2017)Holladay, Salzman, and Srinivasa]{HolETAL17}
Rachel~M. Holladay, Oren Salzman, and Siddhartha Srinivasa.
\newblock Minimizing task space frechet error via efficient incremental graph
  search.
\newblock \emph{CoRR}, abs/1710.06738, 2017.
\newblock URL \url{http://arxiv.org/abs/1710.06738}.

\bibitem[Hsu et~al.(1999)Hsu, Latombe, and Motwani]{HLM99}
David Hsu, Jean-Claude Latombe, and Rajeev Motwani.
\newblock Path planning in expansive configuration spaces.
\newblock \emph{International Journal of Computational Geometry and
  Applications}, 9\penalty0 (4/5), 1999.

\bibitem[Janson et~al.(2015)Janson, Schmerling, Clark, and Pavone]{JSCP15}
Lucas Janson, Edward Schmerling, Ashley~A. Clark, and Marco Pavone.
\newblock Fast marching tree: {A} fast marching sampling-based method for
  optimal motion planning in many dimensions.
\newblock \emph{International Journal of Robotics Research}, 34\penalty0
  (7):\penalty0 883--921, 2015.

\bibitem[Janson et~al.(2017)Janson, Ichter, and Pavone]{LucETAL17}
Lucas Janson, Brian Ichter, and Marco Pavone.
\newblock Deterministic sampling-based motion planning: Optimality, complexity,
  and performance.
\newblock \emph{International Journal of Robotics Research}, 2017.

\bibitem[Karaman and Frazzoli(2011)]{KF11}
Sertac Karaman and Emillio Frazzoli.
\newblock Sampling-based algorithms for optimal motion planning.
\newblock \emph{International Journal of Robotics Research}, 30\penalty0
  (7):\penalty0 846--894, 2011.

\bibitem[Kavraki et~al.(1996)Kavraki, Svestka, Latombe, and
  Overmars]{KavETAL96}
Lydia~E. Kavraki, Petr Svestka, Jean-Claude Latombe, and Mark Overmars.
\newblock Probabilistic roadmaps for path planning in high dimensional
  configuration spaces.
\newblock \emph{IEEE Transactions on Robotics and Automation}, 12\penalty0
  (4):\penalty0 566--580, 1996.

\bibitem[Kavraki et~al.(1998)Kavraki, Kolountzakis, and Latombe]{KavrakiETAL98}
Lydia~E. Kavraki, Mihail~N. Kolountzakis, and Jean{-}Claude Latombe.
\newblock Analysis of probabilistic roadmaps for path planning.
\newblock \emph{{IEEE} Trans. Robotics and Automation}, 14\penalty0
  (1):\penalty0 166--171, 1998.

\bibitem[Kleinbort et~al.(2016)Kleinbort, Salzman, and Halperin]{KSH16}
Michal Kleinbort, Oren Salzman, and Dan Halperin.
\newblock Collision detection or nearest-neighbor search? {O}n the
  computational bottleneck in sampling-based motion planning.
\newblock In \emph{International Workshop on the Algorithmic Foundations of
  Robotics (WAFR)}, 2016.

\bibitem[Kleinbort et~al.(2018)Kleinbort, Solovey, Littlefield, Bekris, and
  Halperin]{KSKBH18}
Michal Kleinbort, Kiril Solovey, Zakary Littlefield, Kostas~E. Bekris, and Dan
  Halperin.
\newblock Probabilistic completeness of {RRT} for geometric and kinodynamic
  planning with forward propagation.
\newblock \emph{{IEEE} Robotics and Automation Letters}, 2018.
\newblock To appear.

\bibitem[Ladd and Kavraki(2004)]{LadKav04}
Andrew~M. Ladd and Lydia~E. Kavraki.
\newblock Measure theoretic analysis of probabilistic path planning.
\newblock \emph{{IEEE} Trans. Robotics and Automation}, 20\penalty0
  (2):\penalty0 229--242, 2004.

\bibitem[LaValle and {Kuffner Jr.}(2001)]{LaVKuf01}
Steven~M. LaValle and James~J. {Kuffner Jr.}
\newblock Randomized kinodynamic planning.
\newblock \emph{I. J. Robotics Res.}, 20\penalty0 (5):\penalty0 378--400, 2001.

\bibitem[Li et~al.(2016)Li, Littlefield, and Bekris]{LiETAL16}
Yanbo Li, Zakary Littlefield, and Kostas~E. Bekris.
\newblock Asymptotically optimal sampling-based kinodynamic planning.
\newblock \emph{I. J. Robotics Res.}, 35\penalty0 (5):\penalty0 528--564, 2016.

\bibitem[Meester and Roy(1994)]{MeeRoy94}
Ronald Meester and Rahul Roy.
\newblock Uniqueness of unbounded occupied and vacant components in boolean
  models.
\newblock \emph{The Annals of Applied Probability}, 4\penalty0 (3):\penalty0
  933--951, 1994.

\bibitem[Nechushtan et~al.(2010)Nechushtan, Raveh, and Halperin]{NecETAL10}
Oren Nechushtan, Barak Raveh, and Dan Halperin.
\newblock Sampling-diagram automata: A tool for analyzing path quality in tree
  planners.
\newblock In \emph{International Workshop on the Algorithmic Foundations of
  Robotics (WAFR)}, pages 285--301, 2010.

\bibitem[Penrose(2003)]{Pen03}
Mathew~D. Penrose.
\newblock \emph{Random geometric graphs}.
\newblock Oxford University Press, 2003.

\bibitem[Penrose and Pisztora(1996)]{PenPiz96}
Mathew~D. Penrose and Agoston Pisztora.
\newblock Large deviations for discrete and continuous percolation.
\newblock \emph{Advances in Applied Probability}, 28\penalty0 (1):\penalty0
  29–52, 1996.

\bibitem[Reif(1979)]{Rei79}
John~H Reif.
\newblock Complexity of the mover’s problem and generalization.
\newblock In \emph{{IEEE} Symposium on Foundations of Computer Science
  ({FOCS})}, pages 421--427, 1979.

\bibitem[Salzman and Halperin(2015)]{SH15}
Oren Salzman and Dan Halperin.
\newblock Asymptotically-optimal motion planning using lower bounds on cost.
\newblock In \emph{{IEEE} International Conference on Robotics and Automation
  ({ICRA})}, pages 4167--4172, 2015.

\bibitem[Salzman and Halperin(2016)]{SH16}
Oren Salzman and Dan Halperin.
\newblock Asymptotically near-optimal {RRT} for fast, high-quality motion
  planning.
\newblock \emph{{IEEE} Trans. Robotics}, 32\penalty0 (3):\penalty0 473--483,
  2016.

\bibitem[Salzman et~al.(2014)Salzman, Shaharabani, Agarwal, and
  Halperin]{SalETAL14}
Oren Salzman, Doron Shaharabani, Pankaj~K. Agarwal, and Dan Halperin.
\newblock Sparsification of motion-planning roadmaps by edge contraction.
\newblock \emph{I. J. Robotics Res.}, 33\penalty0 (14):\penalty0 1711--1725,
  2014.

\bibitem[Schmerling et~al.(2015{\natexlab{a}})Schmerling, Janson, and
  Pavone]{SchETAL15}
Edward Schmerling, Lucas Janson, and Marco Pavone.
\newblock Optimal sampling-based motion planning under differential
  constraints: The drift case with linear affine dynamics.
\newblock In \emph{{IEEE} Conference on Decision and Control}, pages
  2574--2581, 2015{\natexlab{a}}.

\bibitem[Schmerling et~al.(2015{\natexlab{b}})Schmerling, Janson, and
  Pavone]{SchETAL15b}
Edward Schmerling, Lucas Janson, and Marco Pavone.
\newblock Optimal sampling-based motion planning under differential
  constraints: The driftless case.
\newblock In \emph{{IEEE} International Conference on Robotics and Automation},
  pages 2368--2375, 2015{\natexlab{b}}.

\bibitem[Solovey and Halperin(2016{\natexlab{a}})]{SolHal16}
Kiril Solovey and Dan Halperin.
\newblock Sampling-based bottleneck pathfinding with applications to
  {F}r{\'{e}}chet matching.
\newblock In \emph{European Symposium on Algorithms}, pages 76:1--76:16,
  2016{\natexlab{a}}.

\bibitem[Solovey and Halperin(2016{\natexlab{b}})]{SolHal16j}
Kiril Solovey and Dan Halperin.
\newblock On the hardness of unlabeled multi-robot motion planning.
\newblock \emph{I. J. Robotics Res.}, 35\penalty0 (14):\penalty0 1750--1759,
  2016{\natexlab{b}}.

\bibitem[Solovey and Halperin(2017)]{SolHal17}
Kiril Solovey and Dan Halperin.
\newblock Efficient sampling-based bottleneck pathfinding over cost maps.
\newblock In \emph{{IEEE/RSJ} International Conference on Intelligent Robots
  and Systems}, 2017.
\newblock to appear.

\bibitem[Solovey et~al.(2015)Solovey, Yu, Zamir, and Halperin]{SolETAL15}
Kiril Solovey, Jingjin Yu, Or~Zamir, and Dan Halperin.
\newblock Motion planning for unlabeled discs with optimality guarantees.
\newblock In \emph{Robotics: Science and Systems}, 2015.

\bibitem[Solovey et~al.(2016{\natexlab{a}})Solovey, Salzman, and
  Halperin]{SolETAL16}
Kiril Solovey, Oren Salzman, and Dan Halperin.
\newblock New perspective on sampling-based motion planning via random
  geometric graphs.
\newblock In \emph{Robotics: Science and Systems}, 2016{\natexlab{a}}.

\bibitem[Solovey et~al.(2016{\natexlab{b}})Solovey, Salzman, and
  Halperin]{SolETAL16drrt}
Kiril Solovey, Oren Salzman, and Dan Halperin.
\newblock Finding a needle in an exponential haystack: Discrete {RRT} for
  exploration of implicit roadmaps in multi-robot motion planning.
\newblock \emph{I. J. Robotics Res.}, 35\penalty0 (5):\penalty0 501--513,
  2016{\natexlab{b}}.

\bibitem[Starek et~al.(2016)Starek, Schmerling, Maher, Barbee, and
  Pavone]{StaETAL16}
J.~A. Starek, E.~Schmerling, G.~D. Maher, B.~W. Barbee, and M.~Pavone.
\newblock Real-time, propellant-optimized spacecraft motion planning under
  {Clohessy-Wiltshire-Hill} dynamics.
\newblock In \emph{{IEEE} Aerospace Conference}, Big Sky, Montana, 2016.

\bibitem[Starek et~al.(2015)Starek, G{\'{o}}mez, Schmerling, Janson, Moreno,
  and Pavone]{StaETAL15}
Joseph~A. Starek, Javier~V. G{\'{o}}mez, Edward Schmerling, Lucas Janson, Luis
  Moreno, and Marco Pavone.
\newblock An asymptotically-optimal sampling-based algorithm for
  {B}i-directional motion planning.
\newblock In \emph{{IEEE/RSJ} International Conference on Intelligent Robots
  and Systems}, pages 2072--2078, 2015.

\bibitem[{\c{S}}ucan et~al.(2012){\c{S}}ucan, Moll, and Kavraki]{OMPL}
Ioan~A. {\c{S}}ucan, Mark Moll, and Lydia~E. Kavraki.
\newblock The {O}pen {M}otion {P}lanning {L}ibrary.
\newblock \emph{{IEEE} Robotics \& Automation Magazine}, 19\penalty0
  (4):\penalty0 72--82, 2012.
\newblock \url{http://ompl.kavrakilab.org}.

\bibitem[Torquato and Jiao(2012)]{TorJia12}
S.~Torquato and Y.~Jiao.
\newblock Effect of dimensionality on the continuum percolation of overlapping
  hyperspheres and hypercubes. ii. simulation results and analyses.
\newblock \emph{The Journal of Chemical Physics}, 137\penalty0 (7):\penalty0
  074106, 2012.

\bibitem[Wagner and Choset(2015)]{WagCho15}
Glenn Wagner and Howie Choset.
\newblock Subdimensional expansion for multirobot path planning.
\newblock \emph{Artif. Intell.}, 219:\penalty0 1--24, 2015.

\bibitem[Wang et~al.(2013)Wang, Zhou, Zhang, Garoni, and Deng]{WanETAL13}
Junfeng Wang, Zongzheng Zhou, Wei Zhang, Timothy~M. Garoni, and Youjin Deng.
\newblock Bond and site percolation in three dimensions.
\newblock \emph{Phys. Rev. E}, 87:\penalty0 052107, May 2013.

\end{thebibliography}
	
\end{document}